\let\oldnl\nl%
\newcommand{\nonl}{\renewcommand{\nl}{\let\nl\oldnl}}
\declaretheorem{lemma}
\theoremstyle{definition}
\newtheorem{remark}{Remark}
\theoremstyle{definition}
\newtheorem{assumption}{Assumption}
\renewcommand{\vec}[1]{\bm{#1}}
\DeclareMathOperator*{\E}{\mathbb{E}}
\DeclareMathOperator*{\Var}{\mathrm{Var}}
\def\NN{{\mathbb N}}
\def\RR{{\mathbb R}}
\newcommand{\tr}{{\rm tr}}
\newcommand{\1}{\mathbbm{1}}
\newcommand{\inprod}[2]{\left\langle #1, #2 \right\rangle}
\newcommand\numberthis{\addtocounter{equation}{1}\tag{\theequation}}
\newcommand{\UniCLUB}{\texttt{UniCLUB}\xspace}
\newcommand{\PhaseUniCLUB}{\texttt{PhaseUniCLUB}\xspace}
\newcommand{\UniSCLUB}{\texttt{UniSCLUB}\xspace}
\newcommand{\SACLUB}{\texttt{SACLUB}\xspace}
\newcommand{\SASCLUB}{\texttt{SASCLUB}\xspace}
\title{Demystifying Online Clustering of Bandits: Enhanced Exploration Under Stochastic and Smoothed Adversarial Contexts}
\newcommand*\samethanks[1][\value{footnote}]{\footnotemark[#1]}
\author{Zhuohua Li\thanks{Zhuohua Li and Maoli Liu contributed equally to this work. Maoli Liu is the corresponding author.}, Maoli Liu\samethanks, Xiangxiang Dai, John C.S. Lui\\
Department of Computer Science and Engineering\\
The Chinese University of Hong Kong\\
Shatin, N.T., Hong Kong SAR \\
\texttt{\{zhli,mlliu,xxdai23,cslui\}@cse.cuhk.edu.hk}
}
\begin{document}

\maketitle

\begin{abstract}
  The contextual multi-armed bandit (MAB) problem is crucial in sequential decision-making.
  A line of research, known as online clustering of bandits, extends contextual MAB by grouping similar users into clusters, utilizing shared features to improve learning efficiency.
  However, existing algorithms, which rely on the upper confidence bound (UCB) strategy, struggle to gather adequate statistical information to accurately identify unknown user clusters.
  As a result, their theoretical analyses require several strong assumptions about the ``diversity'' of contexts generated by the environment, leading to impractical settings, complicated analyses, and poor practical performance.
  Removing these assumptions has been a long-standing \emph{open problem} in the clustering of bandits literature.
  In this paper, we provide two solutions to this open problem.
  First, following the \emph{i.i.d.} context generation setting in existing studies, we propose two novel algorithms, \UniCLUB and \PhaseUniCLUB, which incorporate enhanced exploration mechanisms to accelerate cluster identification.
  Remarkably, our algorithms require substantially weaker assumptions while achieving regret bounds comparable to prior work.
  Second, inspired by the smoothed analysis framework, we propose a more practical setting that eliminates the requirement for \emph{i.i.d.} context generation used in previous studies, thus enhancing the performance of existing algorithms for online clustering of bandits.
  Our technique can be applied to both graph-based and set-based clustering of bandits frameworks.
  Extensive evaluations on both synthetic and real-world datasets demonstrate that our proposed algorithms consistently outperform existing approaches.
\end{abstract}

\section{Introduction}
\label{sec:introduction}
The stochastic multi-armed bandit (MAB) problem is an online sequential decision-making problem, where at each time step, the learner selects an action (a.k.a. arm) and observes a reward generated from an unknown probability distribution associated with that arm.
The goal of the learner is to maximize cumulative rewards (or equivalently, minimize cumulative regrets) in the long run.
The contextual linear bandit problem~\citep{li-2010-a-contextual,chu-2011-contextual} extends the MAB framework by associating each action with a feature vector and a corresponding unknown linear reward function.

\emph{Online clustering of bandits}, first introduced by \citet{gentile-2014-online-clustering}, generalizes contextual linear bandits by utilizing preference relationships among users.
It adaptively partitions users into clusters and leverages the collaborative effect of similar users to enhance learning efficiency.
This approach has many applications in computational advertising, web page content optimization, and recommendation systems~\citep{li-2018-online-clustering}.
Different from conventional MAB problems that focus solely on regret minimization, online clustering of bandits has two \emph{simultaneous} goals.
Firstly, it infers the underlying cluster structures among users by sequentially recommending arms and receiving user feedback.
Secondly, based on the inferred clusters, it minimizes the cumulative regret along the learning trajectory.
These dual goals significantly influence the algorithm design, as the learner must balance the accurate cluster inference and effective regret minimization.

Most existing studies such as \citet{gentile-2014-online-clustering,li-2018-online-clustering} employ an Upper Confidence Bound (UCB)-based strategy~\citep{abbasi-2011-improved} to balance exploration (for cluster inference) and exploitation (for regret minimization).
While this strategy is intuitive and standard for stochastic linear bandits, the least squares estimator used in the UCB strategy does not directly yield a precise estimation of underlying parameters, leading to insufficient statistical information for cluster inference.
As a result, to ensure correct cluster inference, existing algorithms for clustering of bandits require several strong assumptions on ``data diversity'' for their regret analysis, such as contexts being independently generated from a fixed random process (we refer to it as stochastic context setting) with lower bounded minimum eigenvalue of a covariance matrix and upper bounded variance (See \Cref{sec:key-techniques} for details).
Unfortunately, these assumptions result in impractical settings, overly complicated theoretical analysis, suboptimal regret incurred by cluster inference, and more importantly, poor performance in practice.
Based on these challenges, a natural question arises:
\begin{quote}
  \emph{Can we design new algorithms or propose new settings for the online clustering of bandits that eliminate the restrictive assumptions while achieving similar regrets?}
\end{quote}
In fact, this question reflects an \emph{open problem} discussed in Section A.7 of \citet{gentile-2014-online-clustering}, where the authors questioned ``\emph{whether the i.i.d. and other statistical assumptions they made could be removed}''.
While several efforts have been made to relax these assumptions, previous attempts have not addressed the fundamental issues, and some have resulted in significantly deteriorated regret that grows exponentially with the number of arms, in exchange for milder assumptions (See \Cref{sec:related-work} for details).
This underscores the inherent difficulty of the problem.

In this paper, we clarify the limitations of existing methods for cluster inference and address the aforementioned open problem.
We show that the inherent lack of sufficient exploration in UCB-like strategies impedes the efficient inference of underlying cluster structures.
Therefore, to simultaneously achieve effective cluster inference and regret minimization, additional exploration must be incorporated.
To this end, we propose two approaches, one focusing on algorithmic design and the other on modifications to the problem setup:

\textbf{Algorithmic Design Perspective}: In \Cref{sec:stochastic-context-setting}, we maintain the stochastic context setting as in \citet{gentile-2014-online-clustering} and propose new algorithms with an enhanced exploration mechanism.
This mechanism forces additional exploration beyond the standard UCB approach, leading to more reasonable settings, significantly relaxed assumptions, and comparable cumulative regret.
Intuitively, the additional exploration gathers more information about the underlying clusters, preventing the UCB strategy from exploiting inaccurate cluster estimates.
This technique is quite general and applicable to both graph-based~\citep{gentile-2014-online-clustering} and set-based~\citep{li-2019-improved-algorithm} algorithms.
Furthermore, our technique may hold independent value for broader research on multi-objective MAB problems, as discussed in \Cref{sec:more-related-works}.

\textbf{Problem Setup Perspective}: In \Cref{sec:smoothed-adversarial-context-setting}, we eliminate the need for stochastic context generation by proposing a new setup based on the smoothed analysis framework~\citep{spielman-2004-smoothed}, where the contexts are chosen by a ``smoothed'' adversary.
This setting interpolates between two extremes: the \emph{i.i.d.} context generation used in clustering of bandits~\citep{gentile-2014-online-clustering} and the adversarial context generation used in standard linear bandits~\citep{abbasi-2011-improved}.
We show that with some minor changes, existing algorithms (such as CLUB proposed by \citet{gentile-2014-online-clustering}) can achieve better performance in this setting.

To the best of our knowledge, our work is the first to propose new algorithms and settings to eliminate the restrictive assumptions prevalent in the clustering of bandits literature~\citep{gentile-2014-online-clustering,li-2018-online-clustering,li-2019-improved-algorithm,liu-2022-federated,wang-2023-online-clustering,wang-2023-online-corrupted}, while maintaining comparable cumulative regrets.
In doing so, we address the open problem posed by \citet{gentile-2014-online-clustering}.

\textbf{Main contributions}. Our contributions are highlighted as follows:
\begin{itemize}[leftmargin=*]
  \item Following the stochastic context setting established in \citet{gentile-2014-online-clustering}, we propose two graph-based algorithms, \UniCLUB and \PhaseUniCLUB, based on CLUB~\citep{gentile-2014-online-clustering}. Both algorithms incorporate additional exploration mechanisms besides the conventional UCB strategy. Benefiting from our novel design and analytical techniques, we substantially relax the assumptions required for theoretical analysis. To demonstrate the versatility of our approach, we also present a set-based algorithm, \UniSCLUB, which extends SCLUB~\citep{li-2019-improved-algorithm}.
  \item We demonstrate that both \UniCLUB and \UniSCLUB enjoy an improved regret bound of \(\widetilde{O}\ab(u\frac{d}{\gamma^2\lambda_x} + d\sqrt{mT})\), assuming the minimum gap between clusters \(\gamma\) is known. The first term in this bound improves the state-of-the-art regret in existing literature~\citep{gentile-2014-online-clustering}, and the second term matches the minimax near-optimal regret bound of contextual linear bandits~\citep{abbasi-2011-improved}. Furthermore, we show that when the cluster gap is unknown, \PhaseUniCLUB achieves a regret of \(\widetilde{O}\ab(u\frac{d}{\gamma^5\lambda_x^2} + d\sqrt{mT})\), which fully eliminates the stringent assumptions in \citet{gentile-2014-online-clustering}, albeit with a larger logarithmic term.
  \item Besides the stochastic context setting, we propose a new setting called the smoothed adversarial context setting, where in each round, the context is chosen by an adversary but is then randomly perturbed. This setup is more practical and aligns more closely with the original setting of contextual linear bandits~\citep{abbasi-2011-improved}. We give two associated algorithms \SACLUB, \SASCLUB and prove that they enjoy a regret of \(\widetilde{O}\ab(u\frac{d}{\gamma^2\widetilde{\lambda}_x} + d\sqrt{mT})\) where \(\widetilde{\lambda}_x = O\ab(\frac{1}{\log K})\).
  \item We perform extensive evaluations on both synthetic and real-world datasets. The results demonstrate that our algorithms outperform all the baseline algorithms, validating their effectiveness and practical applicability in various settings.
\end{itemize}

\Cref{tab:comparison} summarizes the comparison between our algorithms and existing studies with different assumptions and cumulative regrets.
Detailed explanations of the diversity conditions and regret analysis are given in \Cref{sec:key-techniques} and \Cref{sec:theoretical-analysis}.

\vspace{-3pt}
\renewcommand{\arraystretch}{1.2} %
\begin{table}[htb]
\centering
\setlength{\extrarowheight}{0pt}
\addtolength{\extrarowheight}{\aboverulesep}
\addtolength{\extrarowheight}{\belowrulesep}
\setlength{\aboverulesep}{0pt}
\setlength{\belowrulesep}{0pt}
\caption{Comparison of our algorithms with existing studies with different design choices. Since all algorithms achieve similar regret bounds, we focus on comparing the regret incurred by clustering.}
\label{tab:comparison}
\resizebox{\linewidth}{!}{%
\begin{threeparttable}
\begin{tabular}{lllll}
\toprule
\textbf{Algorithms}                                                                                                                                                                                                & \begin{tabular}[c]{@{}l@{}}\textbf{Context}\\\textbf{Generation}\end{tabular} & \textbf{Diversity Assumption}                                                                                                                                                                                          & \begin{tabular}[c]{@{}l@{}}\textbf{Regret incurred}\\\textbf{by clustering}\end{tabular}              & \textbf{Constants}                                \\
\midrule
\begin{tabular}[c]{@{}l@{}}CLUB \citep{gentile-2014-online-clustering},\\CLUB-cascade \citep{li-2018-online-clustering},\\SCLUB \citep{li-2019-improved-algorithm},\\FCLUB \citep{liu-2022-federated}\end{tabular} & i.i.d.                                                                        & \begin{tabular}[c]{@{}l@{}}$\lambda_{\text{min}}(\E[\vec{X}\vec{X}^\mathsf{T}])=\lambda_x$,\\$(\vec{z}^\mathsf{T}\vec{X})^2$ is $\sigma^2$-sub-Gaussian,\\$\sigma^2 \leq \frac{\lambda_x^2}{8 \log(4K)}.$\end{tabular} & $\widetilde{O}\ab(u\ab(\frac{d}{\gamma^2\lambda_x} + \frac{1}{\lambda_x^2}))$                         & $\lambda_x=O\ab(\frac{1}{d})$                     \\
\hline
\begin{tabular}[c]{@{}l@{}}RCLUMB, RSCLUMB \citep{wang-2023-online-clustering},\\RCLUB-WCU \citep{wang-2023-online-corrupted},\\FedC$^{3}$UCB-H \citep{yang-2024-federated}\end{tabular}                           & i.i.d.                                                                        & \begin{tabular}[c]{@{}l@{}}$\lambda_{\text{min}}(\E[\vec{X}\vec{X}^\mathsf{T}])=\lambda_x$,\\$(\vec{z}^\mathsf{T}\vec{X})^2$ is $\sigma^2$-sub-Gaussian.\end{tabular}                                                  & $\widetilde{O}\ab(u\ab(\frac{d}{\gamma^2\widetilde{\lambda}_x} + \frac{1}{\widetilde{\lambda}_x^2}))$ & $\widetilde{\lambda}_x=O\ab(\frac{1}{d^{2K+1}})$  \\
\hline
\rowcolor[rgb]{0.871,0.867,0.855} \begin{tabular}[c]{@{}>{\cellcolor[rgb]{0.871,0.867,0.855}}l@{}}\UniCLUB, \UniSCLUB\\(Ours, \Cref{thm:regret-uniclub,thm:regret-unisclub})\end{tabular}                          & i.i.d.                                                                        & \begin{tabular}[c]{@{}>{\cellcolor[rgb]{0.871,0.867,0.855}}l@{}}$\lambda_\text{min}(\E[\vec{X}\vec{X}^\mathsf{T}])=\lambda_x$,\\The parameter $\gamma$ (the minimum\\gap between clusters) is known.\end{tabular}      & $\widetilde{O}\ab(\frac{ud}{\gamma^2\lambda_x})$                                                      & $\lambda_x=O\ab(\frac{1}{d})$                     \\
\hline
\rowcolor[rgb]{0.871,0.867,0.855} \PhaseUniCLUB (Ours, \Cref{thm:regret-gamma-unknown})                                                                                                                            & i.i.d.                                                                        & $\lambda_\text{min}(\E[\vec{X}\vec{X}^\mathsf{T}])=\lambda_x$                                                                                                                                                          & $\widetilde{O}\ab(\frac{ud}{\gamma^5\lambda_x^2})$                                                    & $\lambda_x=O\ab(\frac{1}{d})$                     \\
\hline
\rowcolor[rgb]{0.871,0.867,0.855} \SACLUB, \SASCLUB (Ours, \Cref{thm:regret-smoothed-adversary})                                                                                                                   & Adversarial                                                                   & Gaussian noise perturbation                                                                                                                                                                                            & $\widetilde{O}\ab(\frac{ud}{\gamma^2\widetilde{\lambda}_x})$                                          & $\widetilde{\lambda}_x = O\ab(\frac{1}{\log K})$  \\
\bottomrule
\end{tabular}
\begin{tablenotes}\footnotesize
\item \(K\), \(u\), and \(d\) denote the number of arms, the number of users, and the dimension, respectively. \(\vec{z} \in \RR^d\) represents an arbitrary unit vector.
\item \(\lambda_\text{min}(\cdot)\) denotes the minimum eigenvalue. \(\gamma\) is defined in Assumption~\ref{assumption:well-separatedness}. \(\vec{X}\) and \(\lambda_x\) are defined in Assumption~\ref{assumption:item-regularity}.
\end{tablenotes}
\end{threeparttable}
}
\end{table}

\vspace{-3pt}

The rest of the paper is organized as follows.
In \Cref{sec:related-work}, we review related work, emphasizing the restrictive assumptions in the original setting and discussing prior attempts to eliminate these assumptions.
In \Cref{sec:stochastic-context-setting}, we present our solution for removing these assumptions in the original setting.
In \Cref{sec:smoothed-adversarial-context-setting}, we introduce an alternative approach based on the smoothed analysis framework, which aligns more closely with the original linear bandits setting~\citep{abbasi-2011-improved}.
Finally, in \Cref{sec:theoretical-analysis} and \Cref{sec:performance-evaluation}, we present the theoretical analysis and experiment results.

\section{Related Work}
\label{sec:related-work}
Our work is closely related to the literature of online clustering of bandits.
Since the seminal work by \citet{gentile-2014-online-clustering}, which first formulated the clustering of bandits problem and proposed a graph-based algorithm, there has been a line of follow-up studies.
For example, \citet{li-2016-collaborative} considers the collaborative effects that arise from user-item interactions.
\citet{gentile-2017-on-context-dependent} implements the underlying feedback-sharing mechanism by estimating user neighborhoods in a context-dependent manner.
\citet{li-2018-online-clustering} consider the clustering of bandits problem in the cascading bandits setting with random prefix feedback.
\citet{li-2019-improved-algorithm} propose a set-based algorithm and consider users with non-uniform arrival frequencies.
\citet{ban-2021-local-clustering} introduce local clustering which does not assume that users within the same cluster share exactly the same parameter.
\citet{liu-2022-federated} extend the clustering of bandits problem to the federated setting and consider privacy preservation.
\citet{wang-2023-online-clustering,wang-2023-online-corrupted} investigate clustering of bandits under misspecified and corrupted user models.

However, all of these studies adhere to \citet{gentile-2014-online-clustering}'s original setting and theoretical analysis framework, which imposes several restrictive assumptions on the generation process of contexts, such as (a) the feature vector of each arm is independently sampled from a fixed distribution; (b) the covariance matrix constructed on specific context-action features is full rank with the minimum eigenvalue greater than 0; and (c) the square of contexts projected in a fixed direction is sub-Gaussian with bounded conditional variance.
Constructing a natural context generation distribution that satisfies all these assumptions simultaneously is \emph{highly challenging} (if not impossible), and none of the aforementioned papers provide any concrete examples.
There have been some attempts to relax these assumptions.
For example, \citet{wang-2023-online-clustering,wang-2023-online-corrupted,yang-2024-federated} propose more relaxed assumptions regarding the variance of contexts.
However, these approaches result in a regret that grows exponentially with the number of arms \(K\) (as shown in \Cref{tab:comparison}).
We refer interested readers to more related works about leveraging similar assumptions in \Cref{sec:more-related-works}.

Our smoothed adversarial setting and algorithms \SACLUB, \SASCLUB are inspired by the smoothed analysis framework, introduced by \citet{spielman-2004-smoothed}.
This framework studies algorithms where some instances are chosen by an adversary, but are then perturbed randomly, representing an interpolation between worst-case and average-case analyses.
It was originally proposed to analyze the running time of algorithms.
\citet{kannan-2018-a-smoothed} first introduce the ``smoothed adversary'' setting in multi-armed bandits and study how the regret bound of greedy algorithms behave on smoothed bandit instances.
This setting has been extended to structured linear bandits (i.e., the unknown preference vector has structures such as sparsity, group sparsity, or low rank)~\citep{sivakumar-2020-structured-linear}, linear bandits with knapsacks~\citep{sivakumar-2022-smoothed}, and Bayesian regret~\citep{raghavan-2023-greedy}.
All these studies show that the greedy algorithm almost matches the best possible regret bound.
The core idea is that the inherent diversity in perturbed data (contexts) makes explicit exploration unnecessary.
In contrast to the analysis of greedy algorithms, our work introduces the smoothed analysis framework into the clustering of bandits setting, where the UCB strategy lacks sufficient exploration for identifying unknown user clusters.
We demonstrate that the inherent diversity of contexts in the smoothed adversary setting eliminates the impractical requirement for \emph{i.i.d.} context generation and enhances the cumulative regrets of existing algorithms (e.g., CLUB~\citep{gentile-2014-online-clustering}).

\section{Stochastic Context Setting}
\label{sec:stochastic-context-setting}
In this section, we study the online clustering of bandits problem under the stochastic context setting but with substantially relaxed assumptions.
We begin by introducing the problem setting in \Cref{sec:stochastic-context-problem-setting}, which largely follows the seminal work of \citet{gentile-2014-online-clustering} but without the stringent assumptions.
Next, in \Cref{sec:key-techniques}, we provide the intuition of the key techniques underlying our approach.
Finally, we present our proposed algorithms: \UniCLUB (\Cref{sec:gamma-known}), which assumes a \emph{known} minimum gap between clusters, and \PhaseUniCLUB (\Cref{sec:gamma-unknown}), which does not require this assumption.
We also provide a set-based algorithm \UniSCLUB in \Cref{sec:unisclucb}.

\subsection{Problem Setting}
\label{sec:stochastic-context-problem-setting}

In the following, we use boldface letters for vectors and matrices.
We denote \([M] := \set{1, \dots, M}\) for \(M \in \NN^{+}\).
For any real vector \(\vec{x}, \vec{y}\) and positive semi-definite (PSD) matrix \(\vec{V}\), \(\|\vec{x}\|\) denotes the \(\ell_2\) norm of \(\vec{x}\), \(\inprod{\vec{x}}{\vec{y}}=\vec{x}^\mathsf{T} \vec{y}\) denotes the dot product of vectors, \(\inprod{\vec{x}}{\vec{y}}_{\vec{V}}=\vec{x}^\mathsf{T} \vec{V} \vec{y}\) denotes the weighted inner product, and \(\|\vec{x}\|_{\vec{V}}\) denotes the Mahalanobis norm \(\sqrt{\vec{x}^\mathsf{T} \vec{V} \vec{x}}\).
We use \(\lambda_{\min}(\cdot)\) and \(\lambda_{\max}(\cdot)\) to denote the minimum and maximum eigenvalue.

In the online clustering of bandit problem, there are \(u\) users, denoted by set \([u]=\set{1,2, \dots, u}\).
Each user \(i \in [u]\) is associated with an \emph{unknown} preference feature vector \(\vec{\theta}_i \in \RR^d\) with \(\|\vec{\theta}_i\|_2\leq 1\).
There is an underlying cluster structure among all the users.
Specifically, the users are separated into \(m\) clusters \(\mathcal{I}_1, \mathcal{I}_2, \dots, \mathcal{I}_m\) (\(m \ll u\)), where \(\bigcup_{i \in [m]} \mathcal{I}_i = [u]\) and \(\mathcal{I}_i \cap \mathcal{I}_j = \emptyset\) for \(i \neq j\), such that users lying in the same cluster share the same preference feature vector (i.e., they have similar behavior) and users lying in different clusters have different preference feature vector (i.e., they have different behavior).
Formally, let \(\vec{\theta}^k\)denote the common preference vector for cluster \(\mathcal{I}_k\) and \(j(i) \in [m]\) denote the index of the cluster that user \(i\) belongs to.
In other words, for any user \(i \in \mathcal{I}_k\), we have \(\vec{\theta}_i = \vec{\theta}^k = \vec{\theta}^{j(i)}\).
The underlying partition of users and the number of clusters \(m\) are \emph{unknown} to the learner, and need to be learned during the algorithm.

The learning procedure operates as follows: 
At each round \(t=1,2, \dots, T\), the learner receives a user index
\(i_t \in [u]\) and a finite set of arms \(\mathcal{A}_t \subseteq \mathcal{A}\) where \(|\mathcal{A}_t| = K\).
Each arm \(a \in \mathcal{A}\) is associated with a feature vector \(\vec{x}_a \in \RR^d\), and we denote \(\mathcal{D}_{t} = \set{\vec{x}_a}_{a \in \mathcal{A}_t} \subseteq \RR^d\).
\(\mathcal{D}_t\) is also called \emph{context}.
When we need to emphasize the index of arms, we also denote \(\mathcal{A}_t = \set{a_{t,i}}_{i=1}^K\) and \(\mathcal{D}_t=\set{\vec{x}_{t,i}}_{i=1}^K\).
Then the learner assigns an appropriate cluster \(V_t\) for user \(i_t\) and recommends an arm \(a_t \in \mathcal{A}_t\) based on the aggregated data gathered from cluster \(V_t\).
After receiving the recommended arm, user \(i_t\) sends a random reward \(r_t \in [-1,1]\) back to the learner.
The reward is assumed to have a linear structure: \(r_t = \vec{x}_{a_t}^\mathsf{T} \vec{\theta}_{i_t} + \eta_t\), where \(\eta_t\) is a zero-mean, 1-sub-Gaussian noise term.

Let \(a_t^{*} = \argmax_{a \in \mathcal{A}_t} \vec{x}_a^\mathsf{T}\vec{\theta}_{i_t}\) be the optimal arm with the highest expected reward at time step \(t\).
The goal of the learner is to minimize the expected cumulative regret defined as follows:
\[\E[R(T)] = \E\ab[\sum_{t=1}^{T} \ab(\vec{x}_{a_t^{*}}^\mathsf{T} \vec{\theta}_{i_t} - \vec{x}_{a_t}^\mathsf{T} \vec{\theta}_{i_t})],\]
where the expectation is taken over both the arms (\(a_t\)) and users (\(i_t\)) chosen during the process.

We assume the users and clusters satisfy the assumptions as follows:

\begin{assumption}[User uniformness]\label{assumption:user-uniformness}
  At each time step \(t\), the user \(i_t\) is drawn uniformly from the set of all users \([u]\), independently over the past.
\end{assumption}

\begin{assumption}[Well-separatedness among clusters]\label{assumption:well-separatedness}
  All users in the same cluster \(\mathcal{I}_j\) share the same preference vector \(\vec{\theta}^{j}\).
  For users in different clusters, there is a fixed but \emph{unknown} gap \(\gamma\) between their preference vectors.
  Specifically, for any cluster indices \(i \neq j\),
  \[\ab\|\vec{\theta}^{i} - \vec{\theta}^{j}\|_2 \geq \gamma >0.\]
\end{assumption}
\begin{remark}\label{remark:gamma-unknown}
  The parameter \(\gamma\) being unknown is a critical condition.
  In \Cref{sec:gamma-known}, we begin by considering the scenario where \(\gamma\) is known.
  In \Cref{sec:gamma-unknown}, we remove this additional restriction.
\end{remark}

Following \citet{gentile-2014-online-clustering}, we assume the feature vectors (i.e., contexts) are independently sampled from a fixed distribution, but we \emph{completely} remove the restricted assumptions on the sub-Gaussian distribution and variance of the arm generation process as mentioned in prior works.

\begin{assumption}[Context diversity for stochastic contexts]\label{assumption:item-regularity}
At each time step \(t\), the feature vectors in \(\mathcal{D}_t\) are drawn independently from a fixed distribution \(\vec{X}\) with \(\|\vec{X}\|\leq L\), and \(\E[\vec{X}\vec{X}^\mathsf{T}]\) is of full rank with minimum eigenvalue \(\lambda_x>0\).
\end{assumption}
\begin{remark}
  Intuitively, the minimum eigenvalue indicates how ``diverse'' the distribution \(\vec{X}\) is, depicting how certain the feature vectors span the full \(\RR^{d}\) space.
  Having a lower bound on the minimum eigenvalue means that \(\vec{X}\) has non-zero variance in all directions, which is necessary for the least squares estimator to converge to the true parameter.
  Note that Assumption~\ref{assumption:item-regularity} only maintains the minimum eigenvalue assumption\footnote{This assumption is inevitable, since if the minimum eigenvalue is zero, the covariance matrix is not of full rank, and thus \(\vec{\theta}\) cannot be uniquely determined.} and \emph{completely} removes the additional stringent assumptions as in previous studies~\citep{gentile-2014-online-clustering,li-2018-online-clustering,li-2019-improved-algorithm,wang-2023-online-clustering,wang-2023-online-corrupted,liu-2022-federated,yang-2024-federated}.
\end{remark}

\subsection{Diversity Conditions in Previous Studies and Key Techniques}
\label{sec:key-techniques}
 Before delving into detailed algorithms, we first examine the stringent statistical assumptions in previous studies~\citep{gentile-2014-online-clustering,li-2018-online-clustering,li-2019-improved-algorithm,wang-2023-online-clustering,wang-2023-online-corrupted,liu-2022-federated,yang-2024-federated} and explain their necessity for the theoretical analysis of the existing UCB-based algorithms.
Then we provide insights on how these assumptions can be eliminated.

The key requirement of online clustering of bandits is the precise estimation of the preference vectors \(\vec{\theta}_i\) for each user \(i\), which is essential for correctly identifying the unknown user clusters.
However, the convergence of the least squares estimator relies on sufficiently diverse data.
Intuitively, when data points span a broad range of values and cover the spectrum of possible predictors, the model can better capture the true underlying relationships, leading to more reliable parameter estimates.
Mathematically, diverse data help ensure that user \(i\)'s design matrix \(\vec{S}_{i,t}=\sum_{s\in [t]: i_s = i} \vec{x}_{a_s}\vec{x}_{a_s}^\mathsf{T}\) is well-conditioned, resulting in a more stable matrix inverse, which in turn reduces the variance of the estimated preference vector \(\widehat{\vec{\theta}}_i = \vec{S}_{i,t}^{-1}\sum_{s\in [t]: i_s = i} r_s\vec{x}_{a_s}\).
Therefore, all the previous studies rely on the diverse stochastic context assumption (Assumption~\ref{assumption:item-regularity}), which states that the feature vector of each arm is drawn independently from a fixed distribution \(\vec{X}\) with \(\lambda_{\min}(\E[\vec{X}\vec{X}^\mathsf{T}]) = \lambda_x >0\).
However, diverse contexts do not necessarily lead to a well-conditioned design matrix because \emph{Assumption~\ref{assumption:item-regularity} only guarantees the diversity of arm set \(\mathcal{D}_t\), but not that the arms selected by the UCB strategy are diverse}.
As a result, previous studies impose an additional assumption, requiring that for any fixed unit vector \(\vec{z} \in \RR^{d}\), random variable \((\vec{z}^\mathsf{T} \vec{X})^2\) has sub-Gaussian tails with variance parameter \(\sigma^2 \leq \frac{\lambda_x^2}{8\log(4K)}\).
This assumption, however, contradicts the diverse stochastic context assumption (Assumption~\ref{assumption:item-regularity}), as the bounded variance condition restricts the diversity of \(\vec{X}\).
In fact, it is extremely difficult to construct a natural example of \(\vec{X}\) such that all these assumptions are satisfied simultaneously, and the aforementioned papers also do not provide any.

In summary, the key insight for eliminating the additional variance assumption is to ensure a well-conditioned design matrix \(\vec{S}_{i,t}\), i.e., the selected arms are sufficiently diverse.
To this end,  we introduce an additional pure exploration phase which uniformly selects arms in the arm set \(\mathcal{D}_t\).
In \Cref{lemma:bound-smallest-eigenvalue}, we will show that this explicit exploration guarantees that the minimum eigenvalue of the design matrix \(\vec{S}_{i,t}\) grows linearly with the number of times user \(i\) appears.
At the same time, it is important to limit the amount of explicit exploration so that the cumulative regret remains the order of \(\widetilde{O}(\sqrt{T})\).
To address this balance, \UniCLUB leverages the knowledge of the gap parameter \(\gamma\) to determine a sufficient duration of pure exploration required for accurate cluster estimation while minimizing the cumulative regret.
In contrast, \PhaseUniCLUB employs a phase-based approach to handle the scenario where \(\gamma\) is unknown.

\subsection{\UniCLUB: Algorithm for the Case When \texorpdfstring{\(\gamma\)}{γ} Is Known}
\label{sec:gamma-known}
In this subsection, we assume the parameter \(\gamma\) defined in Assumption~\ref{assumption:well-separatedness} is known and introduce two algorithms: a graph-based algorithm called Uniform Exploration Clustering of Bandits (\UniCLUB, \Cref{algo:clucb}) and a set-based algorithm called Uniform Exploration Set-based Clustering of Bandits (\UniSCLUB, \Cref{algo:sclucb}).
Both algorithms explicitly take \(\gamma\) as an input.
Due to space constraints, we focus on \UniCLUB in the main text, leaving the details of \UniSCLUB in \Cref{sec:unisclucb}.

As shown in \Cref{algo:clucb}, inspired by the CLUB algorithm proposed in \citet{gentile-2014-online-clustering}, \UniCLUB maintains a dynamic undirected graph \(G_t = ([u], E_t)\) representing the current estimated cluster structures of all users.
The main difference is that \UniCLUB includes an additional uniform exploration phase to promote cluster identification.
At the beginning, \(G_t\) is initialized as a complete graph, indicating that all users are considered in a single cluster.
Then at each round \(t\), a user \(i_t \in [u]\) comes to be served with a feasible arm set \(\mathcal{A}_t\) from which the learner has to choose.
The algorithm operates in the following two phases depending on whether the current time step \(t \leq T_0\), and the arm selection strategy differs between these phases.

\IncMargin{1em}
\begin{algorithm}[htb]
  \DontPrintSemicolon
  \SetKwComment{Comment}{$\triangleright$\ }{}
  \SetKwInput{KwInit}{Initialization}
  \KwIn{\(\lambda\), \(\beta\), \(\lambda_x\), \(\delta\), \(L\), \(\gamma\)}
  \KwInit{Let \(G_0=([u], E_0)\) be a complete graph.\\
    Let \(\vec{S}_{i,0} = \vec{0}_{d\times d}, \vec{b}_{i,0} = \vec{0}_{d \times 1}, T_{i,0}=0, \forall i \in [u]\).\\
    Define \(f(T_{i,t}) \triangleq (\sqrt{2\log\ab(u/\delta) + d\log(1+\frac{T_{i,t} L^2}{\lambda d})} + \sqrt{\lambda})/\sqrt{\lambda + T_{i,t}\lambda_x/2}\).\\
    Define \(T_0 \triangleq 16u\log\ab(\frac{u}{\delta}) + 4u\max\ab\{\frac{8L^2}{\lambda_{x}}\log\ab(\frac{ud}{\delta}), \frac{512d}{\gamma^2 \lambda_{x}}\log\ab(\frac{u}{\delta})\}\).
  }

  \SetKwProg{Fn}{Function}{:}{}

  \For{\(t=1, 2, \dots\)}{
    Receive user index \(i_t\) and arm set \(\mathcal{A}_t\)\;
    \uIf{\(t \leq T_0\)}{
      Select \(a_t\) uniformly at random from \(\mathcal{A}_t\)\;\label{line:uniform-select-arm-uniclub}
    }
    \Else{
      Find the connected component \(V_t\) for \(i_t\) in \(G_{t-1}\)\;\label{line:find-scc-uniclub}
      \(\vec{M}_{V_t, t-1} = \sum_{i \in V_t} \vec{S}_{i,t-1}, \ \overline{\vec{M}}_{V_t, t-1} = \lambda \vec{I} + \vec{M}_{V_t, t-1}, \  \vec{b}_{V_t,t-1}=\sum_{i \in V_{t}} \vec{b}_{i,t-1}\)\;
      \(\widehat{\vec{\theta}}_{V_t,t-1} = \overline{\vec{M}}_{V_t, t-1}^{-1}\vec{b}_{V_t,t-1}\)\;\label{line:update-statistics-cluster-uniclub}
      Select arm \(a_t = \argmax_{a \in \mathcal{A}_t} \widehat{\vec{\theta}}_{V_t,t-1}^\mathsf{T} \vec{x}_{a} + \beta \sqrt{\vec{x}_a^\mathsf{T} \overline{\vec{M}}_{V_t,t-1}^{-1}\vec{x}_{a}}\)\;\label{line:select-arm-uniclub}
    }
    Receive reward \(r_t\)\;
    Update statistics for user \(i_t\), others remain unchanged:
    \(\vec{S}_{i_t,t} =\vec{S}_{i_t,t-1} + \vec{x}_{a_t}\vec{x}_{a_t}^\mathsf{T}, \quad \vec{b}_{i_t,t}=\vec{b}_{i_t,t-1} + r_t\vec{x}_{a_t}\)
    \(T_{i_t,t}=T_{i_t,t-1}+1, \quad \widehat{\vec{\theta}}_{i_t,t} = \ab(\lambda \vec{I}+\vec{S}_{i_t, t})^{-1}\vec{b}_{i_t,t}\)\;\label{line:update-statistics-user-uniclub}
    Delete edge \((i_t, \ell) \in E_{t-1}\) if \label{line:delete-edge-uniclub}
    \[\ab\|\widehat{\vec{\theta}}_{i_t,t}-\widehat{\vec{\theta}}_{\ell,t}\| > f(T_{i_t,t}) + f(T_{\ell,t})\]
    and obtain an updated graph \(G_t=([u],E_t)\)\;
  }
  \caption{\UniCLUB: Uniform Exploration Clustering of Bandits} \label{algo:clucb}
\end{algorithm}

  \textbf{Pure exploration phase}. In the first \(T_0\) rounds, the algorithm uniformly select arm \(a_t\) from \(\mathcal{A}_t\) (\Cref{line:uniform-select-arm-uniclub}).
  This arm selection strategy ensures selecting sufficiently \emph{diverse} arms so that the minimum eigenvalue of the design matrix grows linearly in time (\Cref{lemma:bound-smallest-eigenvalue}).
  In \Cref{lemma:clusters-correct-after-T0}, we will show that the phase length \(T_0\) is chosen to guarantee that this phase gathers sufficient statistics to estimate each user's preference vector and correctly infer the underlying user clusters with high probability.

  \textbf{Exploration-exploitation phase}. After \(T_0\), the algorithm constructs the connected component \(V_t\) containing user \(i_t\) in the graph \(G_{t-1}\) (\Cref{line:find-scc-uniclub}), and computes the estimated preference vector \(\widehat{\vec{\theta}}_{V_t,t-1}\) based on historical information associated with \(V_t\) using the least squares estimator with regularization parameter \(\lambda > 0\) (\Cref{line:update-statistics-cluster-uniclub}).
  The algorithm then recommends an arm using the upper confidence bound (UCB) strategy~\citep{abbasi-2011-improved} to balance exploration and exploitation (\Cref{line:select-arm-uniclub}):
  \[a_t = \argmax_{a \in \mathcal{A}_t} \widehat{\vec{\theta}}_{V_t,t-1}^\mathsf{T} \vec{x}_{a} + \beta \sqrt{\vec{x}_a^\mathsf{T} \overline{\vec{M}}_{V_t,t-1}^{-1}\vec{x}_{a}},\]
  where the first term is the estimated reward of arm \(a\) at time \(t\) and the second term is the confidence radius of arm \(a\) at time \(t\) with parameter \(\beta = \sqrt{d \log(1+\frac{TL^2}{d\lambda}) + 2\log(\frac{1}{\delta})} + \sqrt{\lambda}\).

After receiving the feedback \(r_t\) from user \(i_t\), the learner updates the statistics for user \(i_t\) while keeping other users' statistics unchanged.
Note that the estimated preference vector \(\widehat{\vec{\theta}}_{i_t}\) is computed using historical information associated with user \(i_t\) (\Cref{line:update-statistics-user-uniclub}).
Finally, the algorithm updates the inferred clusters by deleting edges in graph \(G_{t-1}\) if it determines that two users belong to different clusters.
Specifically, for every user \(\ell \in [u]\) that has an edge to user \(i_t\), the algorithm checks if the distance between the estimated preference vectors of users \(\ell\) and \(i_t\) exceeds a specific threshold (\Cref{line:delete-edge-uniclub}). If so, the algorithm deletes the edge \(i_t,\ell\) to split them apart and update the graph.

\subsection{\PhaseUniCLUB: Algorithm for the Case When \texorpdfstring{\(\gamma\)}{γ} Is Unknown}
\label{sec:gamma-unknown}
In this subsection, we present a phase-based algorithm \PhaseUniCLUB (\Cref{algo:clucb-unknown-gamma}) to handle the scenario where the parameter \(\gamma\) is unknown.
For convenience, we first define the following notations used in \PhaseUniCLUB:
\begin{align*}
    T^{\text{init}} \triangleq 16u\log\ab(\frac{u}{\delta}) + 4u\cdot\frac{8L^2}{\lambda_{x}}\log\ab(\frac{ud}{\delta}), \quad T^{(s)} \triangleq 4u \cdot \frac{512d}{ 2^{-s} \lambda_{x}}\log\ab(\frac{u}{\delta}).
\end{align*}

\begin{algorithm}[htb]
  \DontPrintSemicolon
  \SetKwComment{Comment}{$\triangleright$\ }{}
  \SetKwInput{KwInit}{Initialization}
  \KwIn{\(\lambda\), \(\beta\), \(\lambda_x\), \(\delta\), \(L\), \(\alpha\)}
  \KwInit{Let \(G_0=([u], E_0)\) be a complete graph.\\
    Let \(\vec{S}_{i,0} = \vec{0}_{d\times d}, \vec{b}_{i,0} = \vec{0}_{d \times 1}, T_{i,0}=0, \forall i \in [u]\).\\
    Define \(f(T_{i,t})=(\sqrt{2\log\ab(u/\delta) + d\log(1+\frac{T_{i,t} L^2}{\lambda d})} + \sqrt{\lambda})/\sqrt{\lambda + T_{i,t}\lambda_x/2}\). \\
  }
  \For{\(t=1,2,\dots, T^{\text{init}}\)}{\label{line:clucb-unknown-T0-first}
      Receive user index \(i_t\) and arm set \(\mathcal{A}_t\)\;
      Select \(a_t\) uniformly at random from \(\mathcal{A}_t\)\; \label{line:clucb-unknown-T0}
      Receive reward \(r_t\)\;
      Update statistics for user \(i_t\): \(\vec{S}_{i_t,t}, \vec{b}_{i_t,t}, T_{i_t,t}, \widehat{\vec{\theta}}_{i_t,t}\)\; \label{line:clucb-unknown-T0-last}
  }
  \For{\(s= 0,1,\dots \)}{
    \For{\(\tau = 1,2, \dots, 2^{\alpha s} \cdot T^{(s)}\) (terminate when \(t>T\))}{
    \(t = t+1\)\; 
    Receive user index \(i_t\) and arm set \(\mathcal{A}_t\)\;
    \uIf{\( \tau \leq T^{(s)} \)}{
        Select \(a_t\) uniformly at random from \(\mathcal{A}_t\)\; \label{line:clucb-unknown-uniform}
    }
    \Else{
        Find all neighbors of user \(i_t\) in \(G_{t-1}\) and include \(i_t\) to form the cluster \(V_t\) \; \label{line:clucb-unknown-cluster-ucb-1}
        \(\vec{M}_{V_t, t-1} = \sum_{i \in V_t} \vec{S}_{i,t-1}, \ \overline{\vec{M}}_{V_t, t-1} = \lambda \vec{I} + \vec{M}_{V_t, t-1}\)\;  \label{line:clucb-unknown-cluster-ucb-2}
        \(\vec{b}_{V_t,t-1}=\sum_{i \in V_{t}} \vec{b}_{i,t-1}, \ 
        \widehat{\vec{\theta}}_{V_t,t-1} = \overline{\vec{M}}_{V_t, t-1}^{-1}\vec{b}_{V_t,t-1}\), \
        \(T_{V_t,t-1} = \sum_{i \in V_{t}} T_{i,t}\)\; \label{line:clucb-unknown-cluster-ucb-3}
        Select arm \(a_t = \argmax_{a \in \mathcal{A}_t} \widehat{\vec{\theta}}_{V_t,t-1}^\mathsf{T} \vec{x}_{a} + \beta \sqrt{\vec{x}_a^\mathsf{T} \overline{\vec{M}}_{V_t,t-1}^{-1}\vec{x}_{a}}\)\;  \label{line:clucb-unknown-cluster-ucb-4}
    }
    Receive reward \(r_t\)\; \label{line:clucb-unknown-reward}
    Update statistics for user \(i_t\): \(\vec{S}_{i_t,t}, \vec{b}_{i_t,t}, T_{i_t,t}, \widehat{\vec{\theta}}_{i_t,t}\)\; \label{line:clucb-unknown-update_user}
    Delete edge \((i_t, \ell) \in E_{t-1}\) if
    \[\ab\|\widehat{\vec{\theta}}_{i_t,t}-\widehat{\vec{\theta}}_{\ell,t}\| > f(T_{i_t,t}) + f(T_{\ell,t})\]
    and obtain the updated graph \(G_t=([u],E_t)\)\; \label{line:clucb-unknown-delete-edge}
    }
    }
  \caption{\PhaseUniCLUB: Phase-based Uniform Exploration Clustering of Bandits} \label{algo:clucb-unknown-gamma}
\end{algorithm}

Similar to \UniCLUB (\Cref{algo:clucb}), \PhaseUniCLUB also maintains a dynamic undirected graph \(G_t = ([u], E_t)\) over all users for clustering purposes.
However, to cope with the unknown \(\gamma\), \PhaseUniCLUB leverages the idea of the \emph{doubling trick}.
Specifically, as depicted in~\Cref{algo:clucb-unknown-gamma}, the algorithm begins with an initial phase of \(T^{\text{init}}\) rounds of uniform exploration (\Crefrange{line:clucb-unknown-T0-first}{line:clucb-unknown-T0-last}).
This ensures that each user appears at least once by the end of the \(T^{\text{init}}\) rounds.
Then, the algorithm proceeds in phases.
Each phase \(s=0, 1,\dots\) is divided into two subphases:
(1) the exploration subphase, consisting of \( T^{(s)}\) rounds, during which the algorithm selects arms uniformly from the arm set \(\mathcal{A}_t\) for the incoming user \(i_t\) (\Cref{line:clucb-unknown-uniform});
(2) the UCB subphase, containing \((2^{\alpha s} -1) T^{(s)}\) rounds, during which the algorithm first identifies the cluster for the incoming user \(i_t\) (\Cref{line:clucb-unknown-cluster-ucb-1}) and then selects arms based on the estimated preference vector of the identified cluster (\Cref{line:clucb-unknown-cluster-ucb-4}).
Throughout the process, after receiving the reward feedback (\Cref{line:clucb-unknown-reward}), the algorithm updates statistics for user \(i_t\) (\Cref{line:clucb-unknown-update_user}) and determines whether to delete any edge between \(i_t\) and its neighbors (\Cref{line:clucb-unknown-delete-edge}).

Unlike \UniCLUB, which has a cut-off time \(T_0\) to terminate uniform exploration, \PhaseUniCLUB lacks access to the cluster gap \(\gamma\) and therefore distributes the uniform exploration across all phases.
During the exploration subphase of each phase \(s\), \PhaseUniCLUB focuses on estimating the users’ preference vectors to a precision level of \(\gamma_s\).
In the subsequent UCB subphase, users are clustered based on the current precision level.
If \(\gamma_{s} > \gamma\), there is a risk of users being incorrectly assigned to clusters they do not belong to.
To mitigate this risk, \PhaseUniCLUB identifies only the immediate neighbors of a user to form a cluster (\Cref{line:clucb-unknown-cluster-ucb-1}), rather than finding the entire connected component as in \UniCLUB.
This localized clustering approach helps to minimize the regret incurred by misclustering.
As shown in \Cref{sec:theoretical-analysis}, with carefully chosen phase lengths, \PhaseUniCLUB achieves a regret bound comparable to \UniCLUB.
The approach of identifying only direct neighbors has also been employed in \citet{wang-2023-online-clustering}, but with a different objective of mitigating misclustering caused by model misspecification.
Moreover, the regret bound in \citet{wang-2023-online-clustering} scales linearly with \(T\), whereas \PhaseUniCLUB grows only \(\widetilde{O}(\sqrt{T})\).

\section{Smoothed Adversarial Context Setting}
\label{sec:smoothed-adversarial-context-setting}
Although \UniCLUB and \PhaseUniCLUB offer significant theoretical improvements, the stochastic context setting necessitates an \emph{i.i.d.} context generation process (Assumption~\ref{assumption:item-regularity}), which might be impractical in real-world applications.
To overcome these limitations, based on the intuition in \Cref{sec:key-techniques}, we propose the smoothed adversarial context setting to eliminate the need for explicit pure exploration.
This setting interpolates between the two extremes: the \emph{i.i.d.} context generation in \citet{gentile-2014-online-clustering} and the adversarial context generation in \citet{abbasi-2011-improved}.
The intrinsic diversity of contexts makes explicit exploration unnecessary, thereby ensuring a well-conditioned design matrix (\Cref{lemma:bound-smallest-eigenvalue-smoothed-adversary}).
This approach allows existing algorithms in previous studies, such as CLUB~\citep{gentile-2014-online-clustering} and SCLUB~\citep{li-2019-improved-algorithm}, which consistently employ the UCB strategy, to perform more effectively.

\subsection{Problem Setting}
\label{sec:problem-formulation-smoothed-adversary}
In the smoothed adversarial setting, we retain Assumptions~\ref{assumption:user-uniformness} and \ref{assumption:well-separatedness}, while replacing Assumption~\ref{assumption:item-regularity} with Assumptions~\ref{assumption:item-regularity2}.
As detailed below, Assumption~\ref{assumption:item-regularity2} allows feature vectors (i.e., contexts) to be arbitrarily chosen by an adversary, but with some random perturbation to ensure the resulting contexts remain sufficiently diverse.
This approach maintains enough data diversity to support effective learning while avoiding the need for explicit pure exploration.

\begin{assumption}[Context diversity for adversarial contexts]\label{assumption:item-regularity2}
  At each time step \(t\), the feature vector \(\vec{x}_{a} \in \mathcal{D}_t\) for each arm \(a \in \mathcal{A}_t\) is drawn by a ``smoothed'' adversary, meaning that the adversary first chooses an arbitrary vector \(\vec{\mu}_{a} \in \RR^d\) with \(\|\vec{\mu}_{a}\|\leq1\), then samples a noise vector \(\vec{\varepsilon}_{a} \in \RR^d\) from a truncated multivariate Gaussian distribution where each dimension is truncated within \([-R, R]\), i.e., \(\vec{\varepsilon}_{a} \sim \mathcal{N}(0, \sigma^2 \vec{I})\) conditioned on \(|(\vec{\varepsilon}_{a})_j| \leq R, \forall j \in [d]\).
  And the feature vector \(\vec{x}_{a} = \vec{\mu}_{a}+\vec{\varepsilon}_{a}\).
\end{assumption}

\begin{remark}
  The truncation in Assumption~\ref{assumption:item-regularity2} is used to guarantee that the length of each feature vector is bounded, which is a standard requirement in the linear bandits literature.
  In fact, if we are only concerned with high-probability regret, we can also use a Gaussian distribution (without truncation) and show that the length of each feature vector is bounded with high probability.
  Note that Assumption~\ref{assumption:item-regularity2} is more similar to the original linear bandit setting~\citep{abbasi-2011-improved}, except that we require each arm to be perturbed by Gaussian noise.
  It remains an open problem whether a fully adversarial setting can be achieved.
\end{remark}

\subsection{Algorithms for Smoothed Adversarial Context Setting}
Our proposed algorithms for the smoothed adversarial context setting, \SACLUB and \SASCLUB, are essentially CLUB~\citep{gentile-2014-online-clustering} and SCLUB~\citep{li-2019-improved-algorithm} with \(\lambda_x\) replaced by \(\widetilde{\lambda}_x\) and \(L\) replaced by \(1+\sqrt{d}R\) in the edge deletion threshold.
Due to space constraints, we omit the full details of \SACLUB and \SASCLUB here, and the complete proofs are provided in \Cref{sec:proof-of-advclub}.

\section{Theoretical Analysis}
\label{sec:theoretical-analysis}
In this section, we present the theoretical results of our algorithms, with detailed proofs provided in Appendices \ref{sec:proof-of-uniclub}, \ref{sec:unisclucb}, \ref{sec:proof-of-phaseuniclub}, and \ref{sec:proof-of-advclub}.
For clarity, we ignore the constants but they are fleshed out in the proofs.
Note that \(\lambda_x\) appears in the denominator of the regret expressions.
This is due to the assumption of bounded contexts (\(\|\vec{X}\|_2\) is bounded) in the stochastic context setting, resulting in \(\lambda_x=O(1/d)\), and therefore it is important to track the dependency of our regret bounds on \(\lambda_x\).

\begin{restatable}[Regret of \UniCLUB]{theorem}{restateregret}\label{thm:regret-uniclub}
  Under the stochastic context setting (Assumptions~\ref{assumption:user-uniformness}, \ref{assumption:well-separatedness}, \ref{assumption:item-regularity}) and assuming the cluster gap \(\gamma\) is known, the expected regret of the \UniCLUB (\Cref{algo:clucb}) satisfies:
  \begin{align*}
    \E[R(T)] = O\ab(\frac{ud}{\gamma^2\lambda_x}\log(T) + d\sqrt{mT}\log(T)).
  \end{align*}
\end{restatable}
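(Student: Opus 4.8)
The plan is to split the horizon at the cut-off time \(T_0\) and control the two resulting contributions separately, so that the pure exploration phase produces the first term of the bound and the exploration--exploitation phase produces the second. For the pure exploration phase \(t \le T_0\), each instantaneous regret is at most \(2\) since \(r_t \in [-1,1]\) and \(\|\vec{\theta}_i\|\le 1\), so this phase contributes at most \(2T_0\). Inspecting the definition of \(T_0\), the \(\frac{512d}{\gamma^2\lambda_x}\log(u/\delta)\) term dominates the maximum (using \(\gamma \le 2\) and \(\lambda_x = O(1/d)\)), so \(2T_0 = O\ab(u\frac{d}{\gamma^2\lambda_x}\log\frac{ud}{\delta})\); taking \(\delta = 1/T\) recovers the first term \(O\ab(\frac{ud}{\gamma^2\lambda_x}\log T)\). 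This part of the argument is essentially free.

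Next I would fix a single \emph{good event} on which all self-normalized confidence statements hold simultaneously across every user and every round. The engine here is \Cref{lemma:bound-smallest-eigenvalue}: uniform arm selection forces \(\lambda_{\min}(\overline{\vec{M}}_{i,t}) \gtrsim \lambda + T_{i,t}\lambda_x/2\), which is exactly the denominator in the definition of \(f(T_{i,t})\), and combining this eigenvalue lower bound with the standard self-normalized martingale tail bound yields \(\|\widehat{\vec{\theta}}_{i,t} - \vec{\theta}_i\| \le f(T_{i,t})\). On this event a triangle inequality shows that for same-cluster users \(\|\widehat{\vec{\theta}}_{i,t}-\widehat{\vec{\theta}}_{\ell,t}\| \le f(T_{i,t})+f(T_{\ell,t})\), so a correct within-cluster edge is never deleted, while for different-cluster users the separation \(\gamma\) forces \(\|\widehat{\vec{\theta}}_{i,t}-\widehat{\vec{\theta}}_{\ell,t}\| > f(T_{i,t})+f(T_{\ell,t})\) as soon as \(f < \gamma/4\). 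By \Cref{assumption:user-uniformness} each user appears roughly \(T_0/u\) times by round \(T_0\), and \Cref{lemma:clusters-correct-after-T0} certifies this is enough to drive \(f\) below \(\gamma/4\); hence the connected component \(V_t\) found in \Cref{line:find-scc-uniclub} equals the true cluster \(\mathcal{I}_{j(i_t)}\) for every \(t > T_0\).

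Conditioned on correct clustering, the exploration--exploitation phase reduces to \(m\) linear-bandit instances, one per cluster, each run on the pooled design matrix \(\overline{\vec{M}}_{V_t,t-1}\). I would invoke the Abbasi-Yadkori self-normalized bound to obtain \(\|\widehat{\vec{\theta}}_{V_t,t-1}-\vec{\theta}^{j(i_t)}\|_{\overline{\vec{M}}_{V_t,t-1}} \le \beta\), so optimism of the UCB rule in \Cref{line:select-arm-uniclub} bounds each instantaneous regret by \(2\beta\|\vec{x}_{a_t}\|_{\overline{\vec{M}}_{V_t,t-1}^{-1}}\). Summing within a fixed cluster and applying the elliptical-potential lemma gives \(\sum \|\vec{x}_{a_t}\|_{\overline{\vec{M}}^{-1}}^2 = \widetilde{O}(d)\); Cauchy--Schwarz then yields \(\widetilde{O}(\sqrt{d\tau_k})\) for the cluster served \(\tau_k\) times. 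A second Cauchy--Schwarz across clusters with \(\sum_k \tau_k \le T\) and \(\beta = \widetilde{O}(\sqrt{d})\) produces \(2\beta\,\widetilde{O}(\sqrt{d})\sqrt{m}\sqrt{T} = O(d\sqrt{mT}\log T)\), the second term. Finally, the good event fails with probability at most \(\delta = 1/T\), contributing at most \(2T\delta = O(1)\) to the expectation, and summing the two phase bounds gives the claim.

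The step I expect to be the main obstacle is showing that correct clustering \emph{persists} for all \(t > T_0\), not merely that it holds at \(t = T_0\). The difficulty is that once the UCB phase begins, the arms feeding each individual matrix \(\vec{S}_{i,t}\) are selected adaptively (and possibly along nearly degenerate directions), so the linear eigenvalue growth underlying the validity of \(f(T_{i,t})\) is no longer automatic, and a spuriously small confidence radius could trigger deletion of a within-cluster edge. Resolving this hinges on arguing that the minimum eigenvalue accumulated during uniform exploration cannot be destroyed by later PSD rank-one updates (monotonicity of \(\lambda_{\min}\) under addition of PSD matrices), that the confidence numerator grows only logarithmically in \(T_{i,t}\), and that a single union bound tied to \(\delta = 1/T\) keeps every individual-user confidence statement valid over the whole horizon; this is precisely the content that \Cref{lemma:bound-smallest-eigenvalue} must deliver uniformly in \(t\).
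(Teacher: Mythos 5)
Your proposal follows essentially the same route as the paper's proof: bound the pure exploration phase by $O(T_0)$ to obtain the $\frac{ud}{\gamma^2\lambda_x}\log T$ term, condition on a good event combining correct clustering after $T_0$ (\Cref{lemma:clusters-correct-after-T0}, built on the matrix Chernoff bound of \Cref{lemma:bound-smallest-eigenvalue} and the self-normalized confidence bound) with the cluster-level confidence statement (\Cref{lemma:C-a-t}), then apply optimism, the elliptical potential lemma (\Cref{lemma:sum-x-sqaure}), and Cauchy--Schwarz within and across the $m$ clusters, finishing with $\delta = 1/T$ and the law of total expectation. The persistence issue you flag is handled in the paper at the same level of detail you propose (the edge-deletion correctness argument in \Cref{lemma:clusters-correct-after-T0} is stated for all $t$ under the high-probability event), so your plan matches the paper's argument in both structure and substance.
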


\begin{restatable}[Regret of \UniSCLUB]{theorem}{restateregretunisclub}\label{thm:regret-unisclub}
  Under the stochastic context setting (Assumptions~\ref{assumption:user-uniformness}, \ref{assumption:well-separatedness}, \ref{assumption:item-regularity}) and assuming the cluster gap \(\gamma\) is known, the expected regret of the \UniSCLUB (\Cref{algo:sclucb}) satisfies:
  \begin{align*}
    \E[R(T)] = O\ab(\frac{ud}{\gamma^2\lambda_x}\log(T) + d\sqrt{mT}\log(T)).
  \end{align*}
\end{restatable}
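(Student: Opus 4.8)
The plan is to split the horizon at the cut-off $T_0$ and control the pure-exploration phase ($t \le T_0$) and the exploration--exploitation phase ($t > T_0$) separately, the crux being that conditioning on a single ``good event'' (correct clustering after $T_0$) collapses the second phase into $m$ parallel linear bandits. Concretely, I would write $R(T) = \sum_{t \le T_0}(\vec{x}_{a_t^*}^\mathsf{T}\vec{\theta}_{i_t} - \vec{x}_{a_t}^\mathsf{T}\vec{\theta}_{i_t}) + \sum_{t > T_0}(\cdots)$, bound the first sum by the trivial per-round regret, and analyze the second via standard UCB machinery on the (now correct) clusters. For the first sum, since $r_t \in [-1,1]$ each per-round regret is at most $2$, so this phase contributes at most $2T_0$. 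It then suffices to observe that $T_0$ is dominated by its clustering term: because $\gamma \le 2$ (as $\|\vec{\theta}_i\| \le 1$) and $\|\vec{X}\| \le L$ is bounded, the $\frac{8L^2}{\lambda_x}\log(ud/\delta)$ term and the $16u\log(u/\delta)$ term are absorbed into $\frac{512d}{\gamma^2\lambda_x}\log(u/\delta)$ (this is the regime $\lambda_x = O(1/d)$ noted after the statement); choosing $\delta = 1/T$ turns the $\log(u/\delta)$ factors into $\log T$ and yields $2T_0 = O\ab(\frac{ud}{\gamma^2\lambda_x}\log T)$, the first term of the bound.

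Next I would establish the good event. By \Cref{lemma:bound-smallest-eigenvalue}, the uniform arm draws of \Cref{line:uniform-select-arm-uniclub} force $\lambda_{\min}(\vec{S}_{i,t}) \ge T_{i,t}\lambda_x/2$ once $T_{i,t}$ exceeds the $\frac{8L^2}{\lambda_x}\log(ud/\delta)$ threshold baked into $T_0$; this matrix-concentration step is where the explicit exploration earns its keep, since it is exactly what Assumption~\ref{assumption:item-regularity} alone cannot deliver for UCB-selected arms. Combined with the self-normalized confidence bound, this gives $\|\widehat{\vec{\theta}}_{i,t} - \vec{\theta}_i\| \le f(T_{i,t})$ for all users simultaneously with probability $\ge 1-\delta$. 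Since Assumption~\ref{assumption:user-uniformness} and a Chernoff bound (the role of the $16u\log(u/\delta)$ term) guarantee every user is served $T_{i,T_0} = \Omega(\frac{d}{\gamma^2\lambda_x}\log(u/\delta))$ times, plugging this count into $f$ makes $f(T_{i,T_0}) < \gamma/4$. A triangle-inequality argument on the edge-deletion rule of \Cref{line:delete-edge-uniclub}, invoked at every round, then shows every cross-cluster edge is cut (since $\|\widehat{\vec{\theta}}_i - \widehat{\vec{\theta}}_j\| \ge \gamma - 2(\gamma/4) > 2(\gamma/4) \ge f(T_i) + f(T_j)$) while every intra-cluster edge survives (since $\vec{\theta}^i = \vec{\theta}^j$ forces $\|\widehat{\vec{\theta}}_i - \widehat{\vec{\theta}}_j\| \le f(T_i) + f(T_j)$). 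This is precisely \Cref{lemma:clusters-correct-after-T0}: after $T_0$ the connected components of $G_t$ coincide with the true clusters.

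Conditioned on this event, for $t > T_0$ the component $V_t$ found in \Cref{line:find-scc-uniclub} always equals the true cluster of $i_t$, so the aggregated statistics $\overline{\vec{M}}_{V_t,t-1}, \vec{b}_{V_t,t-1}$ are exactly those of a single linear bandit with parameter $\vec{\theta}^{j(i_t)}$ pooling that cluster's observations. Standard linear-bandit machinery then applies: the choice of $\beta$ places $\vec{\theta}^{j(i_t)}$ in the confidence ellipsoid, so the instantaneous regret is at most $2\beta\|\vec{x}_{a_t}\|_{\overline{\vec{M}}_{V_t,t-1}^{-1}}$, and the elliptical potential lemma bounds the sum within cluster $k$ by $O(\beta\sqrt{n_k\,d\log T}) = O(d\sqrt{n_k}\log T)$, where $n_k$ is the number of UCB rounds served to cluster $k$ and $\beta = O(\sqrt{d\log T})$ under $\delta = 1/T$. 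Summing over the $m$ clusters and applying Cauchy--Schwarz with $\sum_k n_k \le T$ gives $O(d\sqrt{mT}\log T)$, the second term. Finally, on the complementary bad event (probability $\le \delta = 1/T$) I would bound the regret trivially by $2T$, contributing only $O(1)$ in expectation.

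The main obstacle is the clustering-correctness step: the eigenvalue-growth guarantee of \Cref{lemma:bound-smallest-eigenvalue} must hold \emph{uniformly over all $u$ users and all $t$} despite the adaptivity of the process, and the confidence function $f$ must be calibrated so that its radius drops below $\gamma/4$ exactly when the $\frac{512d}{\gamma^2\lambda_x}$ budget in $T_0$ is spent --- a delicate matching of the eigenvalue lower bound against the self-normalized tail. The proof of \Cref{thm:regret-unisclub} for \UniSCLUB follows the identical template: the exploration length $T_0$, the eigenvalue lemma, and the per-cluster UCB decomposition are unchanged, with only the set-based bookkeeping of SCLUB replacing the graph $G_t$, so the same two-term bound results.
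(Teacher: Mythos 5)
Your overall route is exactly the paper's: bound the pure-exploration phase by the trivial per-round regret times the cut-off, establish a high-probability ``correct clustering'' event via the matrix-Chernoff eigenvalue growth (\Cref{lemma:bound-smallest-eigenvalue}) combined with the self-normalized confidence bound and the user-arrival Chernoff bound, and then run the standard per-cluster elliptical-potential argument with Cauchy--Schwarz over the \(m\) clusters; the bad event contributes \(O(1)\) after setting \(\delta = 1/T\). All of that matches \Cref{thm:regret-uniclub} and its supporting lemmas.

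However, the statement you are asked to prove is the \UniSCLUB theorem, and the one place where its proof genuinely differs from \UniCLUB is dismissed in your last sentence as ``set-based bookkeeping.'' That is a gap. The graph algorithm only ever \emph{deletes} edges, so clustering correctness reduces to the two-sided triangle-inequality check on the deletion rule that you wrote out. \UniSCLUB additionally performs \emph{merges} of checked clusters (\Cref{algo:merge}), so the paper's \Cref{lemma:split-and-merge-correct-after-T0} must verify four implications rather than two: splits never separate same-cluster users, splits do separate different-cluster users, and --- the new part --- merges never join clusters with different true parameters (which requires the cluster-level radii to satisfy \(f(T^{j_1}) + f(T^{j_2}) \le \gamma/2\), an argument at the level of the pooled cluster statistics \(\widehat{\vec{\theta}}^{j}\), \(T^{j}\) rather than the per-user ones), while clusters that fail the merge test are provably from different true clusters. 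Relatedly, the cut-off is not \(T_0\) but \(2T_0\): the extra factor is needed so that every user is marked ``checked'' within a phase and the split/merge operations have time to settle into the ground-truth partition. Neither of these affects the asymptotic bound (which is why the theorem statement is unchanged), but a complete proof of \Cref{thm:regret-unisclub} must supply the merge-correctness argument; asserting that the \UniCLUB template transfers verbatim does not.
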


\begin{remark}
The regret bounds in \Cref{thm:regret-uniclub,thm:regret-unisclub} comprise two components: the first term corresponds to the cost of cluster identification, and the second term aligns with the minimax near-optimal regret of linear contextual bandits.
Notably, thanks to the additional knowledge of parameter \(\gamma\), our algorithms \UniCLUB and \UniSCLUB enhance the first term by \(\widetilde{O}(u/\lambda_x^2)\), offering a significant improvement over existing studies, as detailed in \Cref{tab:comparison}.
\end{remark}

\begin{restatable}[Regret of \PhaseUniCLUB]{theorem}{restateregretgammaunknown}\label{thm:regret-gamma-unknown}
  Under the stochastic context setting (Assumptions~\ref{assumption:user-uniformness}, \ref{assumption:well-separatedness}, \ref{assumption:item-regularity}), the expected regret of algorithm \PhaseUniCLUB (\Cref{algo:clucb-unknown-gamma}) satisfies:
  \begin{align*}
    \E[R(T)] = O\ab(\frac{ud}{\gamma^5\lambda_x^2}\log(T) + \ab(\frac{ud}{\lambda_x} \log(T))^{\frac{2}{3}} T^{\frac{1}{3}} + d\sqrt{mT}\log(T)).
  \end{align*}
\end{restatable}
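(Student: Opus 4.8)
The plan is to analyze \PhaseUniCLUB phase by phase, using the doubling trick to compensate for the unknown gap \(\gamma\). The backbone is a ``critical phase'' \(s^\star\), defined as the smallest \(s\) for which the estimation precision attained by the exploration subphases drops below \(\gamma\). Concretely, once \(T^{\text{init}}\) has cleared the burn-in required by the eigenvalue growth lemma (\Cref{lemma:bound-smallest-eigenvalue}), every user that has appeared \(n\) times in exploration subphases has a design matrix with \(\lambda_{\min} \ge n\lambda_x/2\); since each user is seen \(\Theta(T^{(s)}/u) = \Theta(2^{s} d\log(u/\delta)/\lambda_x)\) times by the end of phase \(s\)'s exploration subphase, the confidence radius \(f\) shrinks to a precision level \(\gamma_s = \Theta(2^{-s/2})\). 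Correct clustering (edges between different clusters deleted, same-cluster edges retained) therefore holds for all \(s \ge s^\star\) with \(2^{s^\star} = \Theta(1/\gamma^2)\), by the same argument that establishes cluster correctness for \UniCLUB (\Cref{lemma:clusters-correct-after-T0}). I would first make this statement precise and show it holds uniformly over all rounds on a high-probability ``good event'' obtained by a union bound over users and phases.

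Next I would decompose \(\E[R(T)]\) into four pieces: (i) the initial exploration of length \(T^{\text{init}}\); (ii) the exploration subphases of all phases; (iii) the UCB subphases of the bad phases \(s < s^\star\); and (iv) the UCB subphases of the good phases \(s \ge s^\star\). Part (i) contributes \(O(T^{\text{init}}) = \widetilde{O}(ud/\lambda_x)\), absorbed into the first term. For part (ii), each phase contributes \(T^{(s)}\) rounds of \(O(1)\) regret; summing the geometric series up to the last phase \(S\) reached by time \(T\) (where \(2^{(\alpha+1)S} \asymp T\lambda_x/(ud\log)\)) gives \(\sum_s T^{(s)} = \widetilde{O}\bigl((ud/\lambda_x)^{\alpha/(\alpha+1)} T^{1/(\alpha+1)}\bigr)\), which becomes the middle term \(\widetilde{O}((ud/\lambda_x)^{2/3}T^{1/3})\) upon taking \(\alpha = 2\). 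Part (iv) is handled exactly as in the \UniCLUB analysis (\Cref{thm:regret-uniclub}): clustering is correct, so the UCB regret telescopes through the elliptical-potential lemma to \(O(d\sqrt{mT}\log T)\), the third term.

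The crux is part (iii), the regret of the UCB subphases while clusters may still be wrong. Here I would exploit the localized clustering rule (\Cref{line:clucb-unknown-cluster-ucb-1}): since \(V_t\) consists only of \(i_t\) and its \emph{direct} neighbors, every \(j \in V_t\) satisfies \(\|\widehat{\vec{\theta}}_{i_t}-\widehat{\vec{\theta}}_j\| \le f(T_{i_t})+f(T_j) = O(\gamma_s)\), hence \(\|\vec{\theta}_j-\vec{\theta}_{i_t}\| = O(\gamma_s)\) on the good event. Using \(\overline{\vec{M}}_{V_t}^{-1}\sum_{i\in V_t}\vec{S}_i \preceq \vec{I}\), the aggregated estimator's bias relative to \(\vec{\theta}_{i_t}\) is \(O(\gamma_s)\), so the per-round regret in a bad phase is \(O(\gamma_s) = O(2^{-s/2})\) (the confidence contribution is controlled by the exploration-subphase eigenvalue, which the subsequent UCB rounds only increase). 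Multiplying by the \(\Theta(2^{\alpha s} T^{(s)})\) UCB rounds of phase \(s\) and summing \(\sum_{s<s^\star} 2^{-s/2}\cdot 2^{\alpha s} T^{(s)}\) yields a geometric series dominated by its top term \(2^{(\alpha+1/2)s^\star}\); with \(\alpha = 2\) and \(2^{s^\star} = \Theta(1/\gamma^2)\) this produces the first term, up to the precise power of \(\lambda_x\) that I track in the detailed bookkeeping below.

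I expect the misclustering analysis to be the main obstacle, and two subtleties must be handled with care. First, the edge-deletion threshold \(f(T_{i,t})\) is computed from the \emph{total} visit count \(T_{i,t}\), whereas the \(\Omega(n\lambda_x)\) eigenvalue guarantee only follows from the uniformly-explored rounds; I must therefore argue that the UCB rounds never shrink \(f\) below the value justified by the exploration data, so that no within-cluster edge is erroneously cut and the \(O(\gamma_s)\) neighborhood-diameter bound remains valid throughout each UCB subphase. Second, the choice \(\alpha = 2\) is what simultaneously renders the exploration term sublinear (\(T^{1/3}\)) and keeps the misclustering term a \(\gamma\)-polynomial; I would verify that it is precisely the balance point of parts (ii) and (iii), and propagate the burn-in and Mahalanobis-to-Euclidean conversions from \Cref{lemma:bound-smallest-eigenvalue} through the geometric sum to recover the stated \(\widetilde{O}(ud/(\gamma^5\lambda_x^2))\).
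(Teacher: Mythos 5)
Your proposal follows essentially the same route as the paper's proof: per-phase estimation precision \(\gamma_s=2^{-s/2}\) after each exploration subphase, an \(O(\gamma_s)\) diameter bound on the localized cluster \(V_t\), a per-round misclustering bias added to the UCB confidence term only while \(\gamma_s>\gamma\), a geometric sum over phases balanced at \(\alpha=2\) against the exploration-subphase cost \(\widetilde{O}((ud/\lambda_x)^{\alpha/(\alpha+1)}T^{1/(\alpha+1)})\), and the elliptical-potential lemma for the confidence terms. The one imprecise micro-step is your claim that \(\overline{\vec{M}}_{V_t}^{-1}\sum_{i\in V_t}\vec{S}_i\preceq\vec{I}\) yields an \(O(\gamma_s)\) bias: since the offsets \(\vec{\theta}_i-\vec{\theta}_{i_t}\) differ across users, the PSD domination does not transfer directly, and the paper instead bounds this term by \(2L^2\gamma_s/\lambda_x\) via \(\lambda_{\min}(\overline{\vec{M}}_{V_t,t-1})\gtrsim\lambda_x T_{V_t,t-1}\) --- which is precisely the extra \(1/\lambda_x\) factor you anticipated tracking and which produces the stated \(\lambda_x^{-2}\).
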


\begin{remark}
  The regret bound in \Cref{thm:regret-gamma-unknown} consists of three components: the first term reflects the cost of cluster identification, the second term arises from misclustering, and the third term matches the minimax near-optimal regret of linear contextual bandits.
  Compared to the regret bounds in \Cref{thm:regret-uniclub,thm:regret-unisclub}, the first term becomes larger but grows only logarithmically with \(T\).
  The additional second term scales as \(\widetilde{O}(T^{1/3})\), which is asymptotically smaller than the third term.
  Consequently, the overall regret remains dominated by the minimax near-optimal regret bound of order \(\widetilde{O}(\sqrt{T})\).
\end{remark}

\begin{restatable}[Regret of \SACLUB and \SASCLUB]{theorem}{restateregretsmoothedadversary}\label{thm:regret-smoothed-adversary}
  Under the smoothed adversarial context setting (Assumptions~\ref{assumption:user-uniformness}, \ref{assumption:well-separatedness}, \ref{assumption:item-regularity2}), the expected regrets of \SACLUB and \SASCLUB both satisfy:
  \begin{align*}
    \E[R(T)] = O\ab(\frac{ud}{\gamma^2\widetilde{\lambda}_x} \log(T) + d\sqrt{mT}\log(T)),
  \end{align*}
  where \(\widetilde{\lambda}_x = c_1 \frac{\sigma^2}{\log K}\) for some constant \(c_1\).
\end{restatable}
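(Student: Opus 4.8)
The plan is to reduce the entire argument to the framework already developed for \UniCLUB (\Cref{thm:regret-uniclub}) and \UniSCLUB (\Cref{thm:regret-unisclub}), with the crucial difference that the linear growth of the per-user design matrix's minimum eigenvalue no longer comes from an explicit pure-exploration phase but instead from the inherent diversity of the smoothed perturbations acting through the UCB selection rule. Concretely, since \SACLUB and \SASCLUB are CLUB and SCLUB with the edge-deletion threshold rescaled by the substitutions \(\lambda_x \to \widetilde{\lambda}_x\) and \(L \to 1+\sqrt{d}R\), once I establish the eigenvalue growth in the smoothed setting the rest of the regret decomposition transfers almost verbatim.

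The central technical step is \Cref{lemma:bound-smallest-eigenvalue-smoothed-adversary}, which I would prove by showing that for any arm \(a_t\) selected by a deterministic rule measurable with respect to the history (in particular the UCB rule in \Cref{line:select-arm-uniclub}), the conditional second-moment matrix satisfies
\[
  \lambda_{\min}\ab(\E\ab[\vec{x}_{a_t}\vec{x}_{a_t}^\mathsf{T} \mid \mathcal{F}_{t-1}]) \geq \widetilde{\lambda}_x = c_1\frac{\sigma^2}{\log K}.
\]
Fixing an arbitrary unit direction \(\vec{z}\) and writing \(\vec{x}_a = \vec{\mu}_a + \vec{\varepsilon}_a\), the projection \(\vec{z}^\mathsf{T}\vec{x}_a = \vec{z}^\mathsf{T}\vec{\mu}_a + \vec{z}^\mathsf{T}\vec{\varepsilon}_a\) has a (truncated) Gaussian perturbation component of variance \(\Theta(\sigma^2)\). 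The difficulty is that \(a_t\) is the argmax over \(K\) arms, so selection couples to the realized noise across all arms. The anti-concentration argument, in the spirit of \citet{kannan-2018-a-smoothed,sivakumar-2020-structured-linear}, shows that even conditioned on being selected the projected perturbation cannot collapse: the maximum of \(K\) Gaussian scores concentrates only within a \(\Theta(\sqrt{\log K})\) window, so the chosen arm retains \(\Omega(\sigma^2/\log K)\) variance along \(\vec{z}\). This step is the main obstacle, as it is precisely where the \(1/\log K\) factor in \(\widetilde{\lambda}_x\) originates and where the truncation radius \(R\) (which yields \(\|\vec{x}_a\| \leq 1+\sqrt{d}R\)) must be controlled.

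Given this per-round conditional lower bound, I would convert it into a high-probability statement that \(\lambda_{\min}(\vec{S}_{i,t})\) grows linearly in the number of appearances of user \(i\), by applying a matrix Freedman (or matrix Azuma) inequality to the martingale differences \(\vec{x}_{a_s}\vec{x}_{a_s}^\mathsf{T} - \E[\vec{x}_{a_s}\vec{x}_{a_s}^\mathsf{T}\mid\mathcal{F}_{s-1}]\). This produces the smoothed-setting analogue of \Cref{lemma:bound-smallest-eigenvalue} with \(\lambda_x\) and \(L\) replaced by \(\widetilde{\lambda}_x\) and \(1+\sqrt{d}R\) respectively.

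With eigenvalue growth established, the remainder follows the \UniCLUB analysis. I would first show, as in \Cref{lemma:clusters-correct-after-T0}, that after \(\widetilde{O}(ud/(\gamma^2\widetilde{\lambda}_x))\) rounds every estimate \(\widehat{\vec{\theta}}_{i,t}\) lies within \(\gamma/2\) of its true parameter with high probability, so the edge-deletion rule recovers the correct partition; this contributes the \(\frac{ud}{\gamma^2\widetilde{\lambda}_x}\log(T)\) term. Thereafter the algorithm behaves as \(m\) independent linear bandit instances, and the standard elliptical-potential (OFUL) argument yields the \(d\sqrt{mT}\log(T)\) term. Summing both contributions and taking expectation over the uniform user draws (\Cref{assumption:user-uniformness}) and the failure events gives the claimed bound. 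The \SASCLUB case is identical, substituting the set-based clustering analysis of \UniSCLUB (\Cref{thm:regret-unisclub}) for the graph-based one.
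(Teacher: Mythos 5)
Your proposal follows essentially the same route as the paper's proof. The paper's central step is exactly your central step: \Cref{lemma:bound-expected-smallest-eigenvalue-smoothed-adversary} shows that the arm chosen by the UCB rule still satisfies \(\lambda_{\min}(\E[\vec{x}_{a_t}\vec{x}_{a_t}^\mathsf{T}]) \geq c_1\sigma^2/\log K\), by rotating so that selection depends only on the first noise coordinate and then invoking the anti-concentration results of \citet{sivakumar-2020-structured-linear} for the maximum of \(K\) Gaussian scores --- precisely the \(\Theta(\sqrt{\log K})\)-window argument you describe. The remainder is also as you outline: the eigenvalue growth is converted to a high-probability linear lower bound on \(\lambda_{\min}(\vec{S}_{i,t})\) with \(L\) replaced by \(1+\sqrt{d}R\) (\Cref{lemma:bound-smallest-eigenvalue-smoothed-adversary}), and then the counterparts of \Cref{lemma:clusters-correct-after-T0,lemma:C-a-t,lemma:sum-x-sqaure} are rederived with \(\lambda_x \to \widetilde{\lambda}_x\), giving the clustering term plus the elliptical-potential term. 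The one point where you genuinely diverge is the concentration tool: the paper reuses the independent-summand matrix Chernoff bound (\Cref{lemma:matrix-chernoff}) exactly as in \Cref{lemma:bound-smallest-eigenvalue}, whereas you propose a matrix Freedman/Azuma inequality for the martingale differences \(\vec{x}_{a_s}\vec{x}_{a_s}^\mathsf{T} - \E[\vec{x}_{a_s}\vec{x}_{a_s}^\mathsf{T}\mid\mathcal{F}_{s-1}]\). Since in the smoothed setting the selected arms are not independent across rounds --- the UCB rule couples each selection to the entire history through \(\widehat{\vec{\theta}}_{V_t,t-1}\) and \(\overline{\vec{M}}_{V_t,t-1}\) --- your martingale formulation is the technically more appropriate instrument here, at the cost of slightly worse constants; both choices yield the same linear eigenvalue growth and hence the same final bound.
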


\begin{remark}
For the smoothed adversarial context setting, we prove that the intrinsic diversity of contexts guarantees a lower bound on the minimum eigenvalue of \(\E\ab[\vec{x}_{a_t}\vec{x}_{a_t}^\mathsf{T}]\) (\Cref{lemma:bound-smallest-eigenvalue-smoothed-adversary}).
This allows us to apply techniques similar to those used in the stochastic context setting to get this result.
\end{remark}

\section{Performance Evaluation}
\label{sec:performance-evaluation}
In this section, we present the evaluation results of our algorithms.
We focus on the stochastic context setting in the main paper since the smoothed adversarial setting serves mainly for theoretical analysis, and algorithms \SACLUB/\SASCLUB are minor modifications of CLUB~\citep{gentile-2014-online-clustering}/SCLUB~\citep{li-2019-improved-algorithm}.
Nonetheless, we provide detailed evaluations of the smoothed adversarial setting and an ablation study on different arm set sizes and user numbers in \Cref{sec:ablation-study}.

\subsection{Experiment Setup}
\label{sec:experiment-setup}
We compare our algorithms against the following state-of-the-art algorithms for clustering of bandits:
(1) LinUCB-One: LinUCB~\citep{li-2010-a-contextual} with a single preference vector shared across all users.
(2) LinUCB-Ind: LinUCB~\citep{li-2010-a-contextual} with separate preference vectors estimated for each user.
(3) CLUB~\citep{gentile-2014-online-clustering}: A graph-based algorithm that consistently employs the UCB strategy.
(4) SCLUB~\citep{li-2019-improved-algorithm}: A set-based algorithm with improved practical performance.
In addition to these clustering-based baselines, we also evaluate our approach against two graph-based algorithms from a related but distinct setting in \Cref{sec:ablation-study}: GOB.Lin~\citep{cesa-2013-gang}, which incorporates user similarities using Laplacian regularization, and GraphUCB~\citep{yang-2020-laplacian}, which utilizes the random-walk Laplacian matrix to encode user relationships.
Note that neither GOB.Lin nor GraphUCB explicitly performs clustering.
All the experiments were conducted on a device equipped with a 3.60 GHz Intel Xeon W-2223 CPU and 32GB RAM.
Each experiment was repeated over 5 random seeds, and the results are reported with confidence intervals calculated by dividing the standard deviation by the square root of the number of seeds.

\subsection{Datasets Generation and Preprocessing}
\label{sec:dataset-generation-preprocessing}
  In our experiments, we employ one synthetic dataset and three real-world datasets, MovieLens-25M~\citep{harper-2015-movielens}, Last.fm~\citep{cantador-2011-second-workshop}, and Yelp~\citep{yelp-dataset}.
  We generate the synthetic dataset and preprocess the real-world datasets following the same method in previous studies~\citep{zhang-2020-conversational,li-2024-fedconpe,dai-2024-conversational,dai-2024-online}.

  To generate the synthetic dataset, we set the dimension \(d=50\), the number of users \(u=200\), and the total number of arms \(|\mathcal{A}|=5,000\).
  The feature vector \(\vec{x}_a \in \RR^d\) of each arm \(a \in \mathcal{A}\) and the preference vector \(\vec{\theta}_{i} \in \RR^d\) for each user \(i \in [u]\) are generated by independently sampling each dimension from a uniform distribution \(\mathcal{U}(-1,1)\) and then normalizing to unit length.

  For the real-world datasets, MovieLens-25M, Last.fm, and Yelp, we regard movies\slash artists\slash businesses as arms.
  We extract a subset of \(|\mathcal{A}|=5,000\) arms with the most quantity of user-assigned ratings/tags, and a subset of \(u=200\) users who assign the most quantity of ratings/tags.
  Using the data extracted above, we create a \emph{feedback matrix} \(\vec{R}\) of size \({u \times |\mathcal{A}|}\), where each element \(\vec{R}_{i,j}\) represents the user \(i\)'s feedback to arm \(j\).
  We assume that the user's feedback is binary.
  For the MovieLens and Yelp datasets, a user's feedback for a movie/business is 1 if the user's rating is higher than 3.
  For the Last.fm dataset, a user's feedback for an artist is 1 if the user assigns a tag to the artist.
  We generate feature vectors and preference vectors by decomposing \(\vec{R}\) using Singular Value Decomposition (SVD) as \(\vec{R}=\vec{\Theta} \vec{S} \vec{A}^\mathsf{T}\), where \(\vec{\Theta} = \set{\vec{\theta}_i}_{i \in [u]}\), and \(\vec{A} = \set{\vec{x}_a}_{a \in \mathcal{A}}\).
  Then the top \(d=50\) dimensions of these vectors associated with the highest singular values in \(\vec{S}\) are extracted.

\subsection{Experiment Results}
  In the experiment, to incorporate user clustering, we randomly select \(50\) users and partition them into 10 clusters.
  For each cluster \(j\), we calculate the mean preference vector across all users within that cluster to serve as \(\vec{\theta}^{j}\).
  Note that the number of clusters is unknown to the algorithms.
  At each round \(t\), we uniformly draw a user \(i_t\) from the 50 users, and randomly select 100 arms from \(\mathcal{A}\) to form the arm set \(\mathcal{A}_t\). The results are presented in~\Cref{fig:regret-stochastic-context}.

  As shown in \Cref{fig:regret-stochastic-context}, the algorithms that employ user clustering significantly outperform LinUCB-Ind and LinUCB-One, which do not consider the similarity among users or cluster users.
  More importantly, our graph-based algorithm \UniCLUB consistently outperforms the graph-based baseline CLUB, and our set-based algorithm \UniSCLUB is better than the set-based baseline SCLUB across all four datasets.
  It is important to note that the modest advantage of \UniCLUB/\UniSCLUB over CLUB/SCLUB is both expected and reasonable, given the logarithmic improvement in the regret upper bound.
  The results demonstrate the effect of uniform exploration and the robustness of our proposed algorithms across various datasets. More evaluation under the smoothed adversarial context setting and ablation study can be found in \Cref{sec:ablation-study}.

  \begin{figure}[htb]
    \centering
    \includegraphics[width=\linewidth]{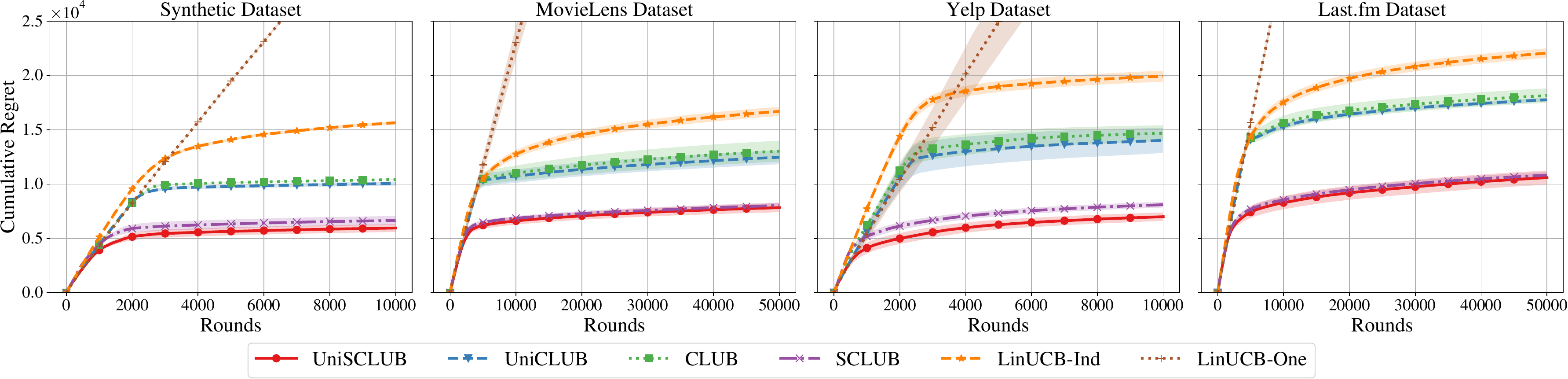}
    \caption{\label{fig:regret-stochastic-context} Comparison of cumulative regrets in the stochastic context setting.}
  \end{figure}
  \vspace{-3pt}

\section{Conclusion}
\label{sec:conclusion}
In this paper, we addressed a long-standing open problem in the online clustering of bandits literature.
We proposed two new algorithms, \UniCLUB and \PhaseUniCLUB, which incorporate explicit exploration to enhance the identification of user clusters.
Notably, our algorithms require significantly weaker assumptions while achieving cumulative regrets comparable to previous studies.
Furthermore, we introduced the smoothed adversarial context setting, which interpolates between \emph{i.i.d.} and fully adversarial context generation.
We showed that with minor modifications, existing algorithms achieve improved performance in this new setting.
Finally, we conducted extensive evaluations to validate the effectiveness of our methods.

\bibliography{reference}
\bibliographystyle{iclr2025_conference}

\appendix
\clearpage
\appendix
\allowdisplaybreaks %

\section{More Discussions on Related Work}
\label{sec:more-related-works}
Leveraging data diversity (i.e., conditions refer to the minimum eigenvalue of a design matrix) in stochastic linear contextual bandits has two lines of research.
The first line involves using additional ``diversity conditions'' to improve cumulative regrets.
For example, \citet{bastani-2021-mostly-exploration} introduce a condition for the disjoint-parameter case, and prove that a non-explorative greedy algorithm achieves \(O(\log T)\) problem-dependent regret on a 2-arm bandit instance.
\citet{hao-2020-adaptive-exploration} give a condition and prove a constant problem-dependent regret for LinUCB in the shared-parameter case.
\citet{wu-2020-stochastic-linear} show that under some diversity conditions, LinUCB achieves constant expected regret in the disjoint-parameter case.
\citet{ghosh-2022-breaking-the-barrier} use a condition similar to \citet{gentile-2014-online-clustering} and achieve a problem-independent logarithmic regret for linear contextual bandits.
The second line of research focuses on achieving concurrent statistical inference and regret minimization (i.e., multi-objective MAB).
This involves performing additional tasks on top of regret minimization, such as clustering~\citep{gentile-2014-online-clustering}, model selection~\citep{chatterji-2020-simultaneously-optimal,ghosh-2021-problem-complexity}, personalization~\citep{ghosh-2021-collaborative}, and exploration under safety constraints~\citep{amani-2019-linear-stochastic}.

\section{Theoretical Analysis of \UniCLUB}
\label{sec:proof-of-uniclub}

\begin{lemma}\label{lemma:bound-theta-precision}
  With probability at least \(1-\delta\) for any \(\delta \in (0,1)\), \(\forall t \in [T]\) and \(\forall i \in [u]\),
  \[\ab\|\widehat{\vec{\theta}}_{i,t} - \vec{\theta}^{j(i)}\|_2 \leq \frac{\sqrt{2 \log\ab(\frac{u}{\delta}) + d \log\ab(1+\frac{T_{i,t}L^2}{\lambda d})} + \sqrt{\lambda}}{\sqrt{\lambda + \lambda_{\min}(\vec{S}_{i,t})}}.\]
\end{lemma}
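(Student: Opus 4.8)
The plan is to recognize this as a standard ridge-regression confidence bound, obtained by combining a self-normalized martingale tail inequality with an eigenvalue comparison. Write $\overline{\vec{S}}_{i,t} = \lambda \vec{I} + \vec{S}_{i,t}$, so that $\widehat{\vec{\theta}}_{i,t} = \overline{\vec{S}}_{i,t}^{-1}\vec{b}_{i,t}$, and recall that the rewards collected for user $i$ satisfy $r_s = \vec{x}_{a_s}^\mathsf{T}\vec{\theta}^{j(i)} + \eta_s$ with $\eta_s$ zero-mean and $1$-sub-Gaussian. First I would fix a single user $i$ and restrict attention to the subsequence of rounds $\{s : i_s = i\}$ on which $\vec{S}_{i,t}$ and $\vec{b}_{i,t}$ accumulate. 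The point to verify is that along this subsequence each $\vec{x}_{a_s}$ is measurable with respect to the history preceding $r_s$ (true because $a_s$ is chosen either uniformly from $\mathcal{A}_s$ in the exploration phase or by the UCB rule from past statistics), while $\eta_s$ stays conditionally $1$-sub-Gaussian, so the noise is a martingale difference sequence adapted to the natural filtration and the self-normalized bound applies to the sub-sampled process.

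Next I would apply the self-normalized tail inequality (Theorem 2 of \citet{abbasi-2011-improved}) with regularizer $\lambda$ and confidence level $\delta/u$, which yields, simultaneously for all $t$,
\[
\left\|\widehat{\vec{\theta}}_{i,t} - \vec{\theta}^{j(i)}\right\|_{\overline{\vec{S}}_{i,t}} \leq \sqrt{2\log\!\left(\frac{u}{\delta}\right) + \log\!\left(\frac{\det \overline{\vec{S}}_{i,t}}{\det(\lambda \vec{I})}\right)} + \sqrt{\lambda}\,\left\|\vec{\theta}^{j(i)}\right\|_2.
\]
To convert the determinant ratio into the stated logarithmic term I would use an AM-GM / trace estimate together with $\|\vec{X}\| \le L$ from Assumption~\ref{assumption:item-regularity}: since $\tr(\overline{\vec{S}}_{i,t}) \le d\lambda + T_{i,t}L^2$ and $\vec{S}_{i,t}$ is PSD, AM-GM over the eigenvalues gives $\det\overline{\vec{S}}_{i,t} \le (\lambda + T_{i,t}L^2/d)^d$, hence $\log(\det\overline{\vec{S}}_{i,t}/\det(\lambda\vec{I})) \le d\log(1 + T_{i,t}L^2/(\lambda d))$. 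Combined with $\|\vec{\theta}^{j(i)}\|_2 \le 1$, the numerator is bounded by $\sqrt{2\log(u/\delta) + d\log(1+T_{i,t}L^2/(\lambda d))} + \sqrt{\lambda}$.

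Then I would pass from the Mahalanobis norm to the Euclidean norm. For any vector $\vec{v}$ one has $\|\vec{v}\|_2 \le \|\vec{v}\|_{\overline{\vec{S}}_{i,t}}/\sqrt{\lambda_{\min}(\overline{\vec{S}}_{i,t})}$, and since $\overline{\vec{S}}_{i,t} = \lambda\vec{I} + \vec{S}_{i,t}$ we have $\lambda_{\min}(\overline{\vec{S}}_{i,t}) = \lambda + \lambda_{\min}(\vec{S}_{i,t})$. Applying this with $\vec{v} = \widehat{\vec{\theta}}_{i,t} - \vec{\theta}^{j(i)}$ produces exactly the claimed inequality for user $i$. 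Finally, a union bound over the $u$ users (the reason the confidence level was set to $\delta/u$) makes the bound hold simultaneously for all $i \in [u]$ and all $t$ with probability at least $1-\delta$.

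I expect the only genuinely delicate point to be the measurability / martingale setup for the sub-sampled per-user sequence: one must check that conditioning on which user arrives (Assumption~\ref{assumption:user-uniformness}) and on the arm-selection rule preserves the conditional sub-Gaussianity of $\eta_s$, so that the self-normalized inequality can legitimately be invoked on the rounds with $i_s = i$. The remaining steps, namely the determinant estimate and the eigenvalue conversion, are routine.
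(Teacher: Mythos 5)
Your proposal is correct and follows essentially the same route as the paper's proof: a self-normalized martingale bound for the per-user subsequence (with a union bound over the $u$ users giving the $\log(u/\delta)$ term), the determinant--trace estimate to control $\log\det\overline{\vec{S}}_{i,t}$, the $\sqrt{\lambda}\|\vec{\theta}^{j(i)}\|_2$ regularization term, and the conversion from the $\overline{\vec{S}}_{i,t}$-norm to the Euclidean norm via $\lambda_{\min}(\lambda\vec{I}+\vec{S}_{i,t})\geq\lambda+\lambda_{\min}(\vec{S}_{i,t})$. The only cosmetic difference is that you invoke the confidence-ellipsoid form (Theorem 2 of Abbasi-Yadkori et al.) directly, whereas the paper re-derives it from the self-normalized bound (their Theorem 1) via Cauchy--Schwarz; the substance is identical.
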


\begin{proof}
  Fix a user \(i \in [u]\), for all \(t \in [T]\), we have
  \begin{align*}
    &\widehat{\vec{\theta}}_{i,t} - \vec{\theta}^{j(i)} = \ab(\lambda \vec{I} + \sum_{\tau \in [t]: i_{\tau}=i} \vec{x}_{a_{\tau}}\vec{x}_{a_{\tau}}^\mathsf{T})^{-1} \ab(\sum_{\tau \in [t]: i_{\tau}=i} \vec{x}_{a_{\tau}} (\vec{x}_{a_{\tau}}^\mathsf{T} \vec{\theta}^{j(i)} + \eta_\tau)) - \vec{\theta}^{j(i)}\\
    =& \ab(\lambda \vec{I} + \sum_{\tau \in [t]: i_{\tau}=i} \vec{x}_{a_{\tau}}\vec{x}_{a_{\tau}}^\mathsf{T})^{-1} \ab[\ab(\lambda \vec{I} + \sum_{\tau \in [t]: i_{\tau}=i}\vec{x}_{a_{\tau}}\vec{x}_{a_{\tau}}^\mathsf{T}) \vec{\theta}^{j(i)} + \sum_{\tau \in [t]: i_{\tau}=i}\vec{x}_{a_{\tau}}\eta_{\tau} - \lambda \vec{\theta}^{j(i)}] - \vec{\theta}^{j(i)}\\
    =& \vec{\theta}^{j(i)} +(\lambda \vec{I} + \vec{S}_{i,t})^{-1}\sum_{\tau \in [t]: i_{\tau}=i} \vec{x}_{a_{\tau}} \eta_{\tau} -\lambda (\lambda \vec{I} + \vec{S}_{i,t})^{-1} \vec{\theta}^{j(i)} -\vec{\theta}^{j(i)}\\
    =& \overline{\vec{S}}_{i,t}^{-1}\sum_{\tau \in [t]: i_{\tau}=i} \vec{x}_{a_{\tau}} \eta_{\tau} -\lambda \overline{\vec{S}}_{i,t}^{-1} \vec{\theta}^{j(i)},
  \end{align*}
  where we denote \(\overline{\vec{S}}_{i,t} \triangleq \lambda \vec{I} + \vec{S}_{i,t} = \lambda \vec{I} + \sum_{\tau \in [t]: i_{\tau}=i} \vec{x}_{a_{\tau}}\vec{x}_{a_{\tau}}^\mathsf{T}\).

  For any vector \(\vec{x} \in \RR^d\),
  \begin{align*}
    \vec{x}^\mathsf{T}\ab(\widehat{\vec{\theta}}_{i,t} - \vec{\theta}^{j(i)}) &= \vec{x}^\mathsf{T}\overline{\vec{S}}_{i,t}^{-1}\sum_{\tau \in [t]: i_{\tau}=i} \vec{x}_{a_{\tau}} \eta_{\tau} -\lambda \vec{x}^\mathsf{T}\overline{\vec{S}}_{i,t}^{-1} \vec{\theta}^{j(i)}\\
    &= \inprod{\vec{x}^\mathsf{T}}{\sum_{\tau \in [t]: i_{\tau}=i} \vec{x}_{a_{\tau}} \eta_{\tau}}_{\overline{\vec{S}}_{i,t}^{-1}} - \lambda \inprod{\vec{x}}{\vec{\theta}^{j(i)}}_{\overline{\vec{S}}_{i,t}^{-1}}.
  \end{align*}
  Therefore, by the Cauchy–Schwarz inequality,
  \begin{align}\label{eq:reward-distance}
    \ab| \vec{x}^\mathsf{T}\ab(\widehat{\vec{\theta}}_{i,t} - \vec{\theta}^{j(i)})| \leq \|\vec{x}\|_{\overline{\vec{S}}_{i,t}^{-1}} \ab(\ab\|\sum_{\tau \in [t]: i_{\tau}=i} \vec{x}_{a_{\tau}} \eta_{\tau}\|_{\overline{\vec{S}}_{i,t}^{-1}} + \lambda\|\vec{\theta}^{j(i)}\|_{\overline{\vec{S}}_{i,t}^{-1}})
  \end{align}
  Next, we bound the two terms in the parenthesis.
  For the first term, consider the \(\sigma\)-algebra \(\mathcal{F}_t=\sigma(i_1,\boldsymbol{x}_{a_1},\eta_1, \dots, i_{t},\boldsymbol{x}_{a_t}, \eta_t, i_{t+1},\boldsymbol{x}_{a_{t+1}})\), where \(i_t\) and \(\vec{x}_{a_t}\) are \(\mathcal{F}_{t-1}\)-measurable, and \(\eta_t\) is \(\mathcal{F}_t\)-measurable.
  Let \(\set{\mathcal{F}_t}_{t=1}^{\infty}\) be a filtration.
  By applying Theorem 1 of \cite{abbasi-2011-improved} and using the union bound over all users \(i \in [u]\), we have that for all \(i \in [u]\) and for all \(t \in [T]\), with probability \(\geq 1-\delta\),
  \begin{align*}
    \ab\|\sum_{\tau \in [t]: i_{\tau}=i} \vec{x}_{a_{\tau}} \eta_{\tau}\|_{\overline{\vec{S}}_{i,t}^{-1}} &\leq \sqrt{2 \log \ab(\frac{u\det(\overline{\vec{S}}_{i,t})^{\frac{1}{2}} \det(\lambda \vec{I})^{-\frac{1}{2}}}{\delta})}\\
    &\leq \sqrt{2\log\ab(\frac{u}{\delta}) + d\log\ab(1+\frac{T_{i,t} L^2}{\lambda d})},
  \end{align*}
  where we use the standard derivation due to \Cref{lemma:det-trace-inequality}: \(\det(\overline{\vec{S}}_{i,t}) \leq \ab(\frac{\tr(\overline{\vec{S}}_{i,t})}{d})^d \leq \ab(\lambda + \frac{T_{i,t} L^2}{d})^d\) and the fact that \(\det(\lambda \vec{I}) = \lambda^d\).

  For the second term, by the property of the Rayleigh quotient, for any invertible PSD matrix \(\vec{V}\) and non-zero vector \(\vec{w}\), we have
  \[\frac{\|\vec{w}\|^2_{\vec{V^{-1}}}}{\|\vec{w}\|^2_2} = \frac{\vec{w}^\mathsf{T}\vec{V^{-1}}\vec{w}}{\vec{w}^\mathsf{T}\vec{w}} \leq \lambda_{\max}(\vec{V}^{-1}) = \frac{1}{\lambda_{\min}(\vec{V})}.\]
  Therefore, by the fact that \(\lambda_{\min}(\overline{\vec{S}}_{i,t}) \geq \lambda\), and the assumption that \(\|\vec{\theta}^{j(i)}\|_2 \leq 1\), we have
  \begin{align*}
    \lambda\|\vec{\theta}^{j(i)}\|_{\overline{\vec{S}}_{i,t}^{-1}} \leq \sqrt{\lambda}\|\vec{\theta}^{j(i)}\|_{2} \leq \sqrt{\lambda}.\numberthis\label{eq:sqrt-lambda}
  \end{align*}

  Plugging in \(\vec{x} = \overline{\vec{S}}_{i,t}\ab(\widehat{\vec{\theta}}_{i,t} - \vec{\theta}^{j(i)})\) to \Cref{eq:reward-distance}, we have
  \begin{align*}
    &\ab| \vec{x}^\mathsf{T}\ab(\widehat{\vec{\theta}}_{i,t} - \vec{\theta}^{j(i)})| = \ab\|\widehat{\vec{\theta}}_{i,t} - \vec{\theta}^{j(i)}\|^2_{\overline{\vec{S}}_{i,t}}\\
    \leq& \ab\|\overline{\vec{S}}_{i,t}\ab(\widehat{\vec{\theta}}_{i,t} - \vec{\theta}^{j(i)})\|_{\overline{\vec{S}}_{i,t}^{-1}} \ab(\sqrt{2\log\ab(\frac{u}{\delta}) + d\log\ab(1+\frac{T_{i,t} L^2}{\lambda d})} + \sqrt{\lambda})\\
    =& \ab\|\widehat{\vec{\theta}}_{i,t} - \vec{\theta}^{j(i)}\|_{\overline{\vec{S}}_{i,t}} \ab(\sqrt{2\log\ab(\frac{u}{\delta}) + d\log\ab(1+\frac{T_{i,t} L^2}{\lambda d})} + \sqrt{\lambda}).
  \end{align*}
  Again, by the property of the Rayleigh quotient, for any PSD matrix \(\vec{V}\) and non-zero vector \(\vec{w}\),
  \[\lambda_{\min}(\vec{V}) \leq \frac{\vec{w}^\mathsf{T}\vec{V}\vec{w}}{\vec{w}^\mathsf{T}\vec{w}} = \frac{\|\vec{w}\|^2_{\vec{V}}}{\|\vec{w}\|^2_2} \implies \|\vec{w}\|_2 \leq \frac{\|\vec{w}\|_{\vec{V}}}{\sqrt{\lambda_{\min}(\vec{V})}}.\]
  Therefore, dividing \(\ab\|\widehat{\vec{\theta}}_{i,t} - \vec{\theta}^{j(i)}\|_{\overline{\vec{S}}_{i,t}}\) on both sides and applying the above inequality, we get
  \begin{align*}
    \ab\|\widehat{\vec{\theta}}_{i,t} - \vec{\theta}^{j(i)}\|_2  &\leq \frac{\sqrt{2\log\ab(\frac{u}{\delta}) + d\log\ab(1+\frac{T_{i,t} L^2}{\lambda d})} + \sqrt{\lambda}}{\sqrt{\lambda_{\min}(\overline{\vec{S}}_{i,t})}}\\
    &\leq \frac{\sqrt{2\log\ab(\frac{u}{\delta}) + d\log\ab(1+\frac{T_{i,t} L^2}{\lambda d})} + \sqrt{\lambda}}{\sqrt{\lambda + \lambda_{\min}(\vec{S}_{i,t})}},
  \end{align*}
  where in the last inequality we use \(\lambda_{\min}(\lambda\vec{I}+\vec{S}_{i,t}) \geq \lambda_{\min}(\lambda\vec{I}) + \lambda_{\min}(\vec{S}_{i,t})\), due to Weyl's inequality.
\end{proof}

\begin{lemma}\label{lemma:bound-smallest-eigenvalue}
  In the uniform exploration phase of \Cref{algo:clucb}, with probability at least \(1-\delta\) for any \(\delta \in (0,1)\), if \(T_{i,t} \geq \frac{8L^2}{\lambda_{x}} \log\ab(\frac{ud}{\delta})\) for all users \(i \in [u]\), we have
  \[\lambda_{\min}\ab(\vec{S}_{i,t}) \geq \frac{\lambda_{x}T_{i,t}}{2}, \forall i \in [u].\]
\end{lemma}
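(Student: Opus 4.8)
The plan is to reduce the statement to a single application of the matrix Chernoff inequality, after first establishing that in the uniform exploration phase the feature vectors collected for any fixed user are i.i.d. copies of $\vec{X}$.

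First I would argue this distributional fact. During the pure exploration phase (\(t \le T_0\)), the arm \(a_s\) is chosen uniformly at random from \(\mathcal{A}_s\), independently of the context \(\mathcal{D}_s\), the incoming user \(i_s\), and the entire past. Since the \(K\) feature vectors in \(\mathcal{D}_s\) are i.i.d.\ draws of \(\vec{X}\) (Assumption~\ref{assumption:item-regularity}), a uniformly chosen one of them is again distributed as \(\vec{X}\); and because the contexts across rounds are independent and the selection randomness is fresh each round, the selected vectors \(\{\vec{x}_{a_s}\}_s\) are i.i.d.\ \(\vec{X}\). Crucially, the user arrivals \((i_s)\) (uniform by Assumption~\ref{assumption:user-uniformness}) are independent of the contexts and of the uniform selection, so conditioning on the whole user sequence \((i_1,\dots,i_t)\) leaves the conditional law of the selected features unchanged: for any fixed user \(i\), the vectors \(\{\vec{x}_{a_s}: s\le t,\ i_s=i\}\) remain i.i.d.\ \(\vec{X}\), and their count equals \(T_{i,t}\).

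Next I would condition on the user sequence so that \(T_{i,t}=n\) becomes a fixed integer with fixed appearance times. Then \(\vec{S}_{i,t}=\sum_{k=1}^{n}\vec{x}_{a_{s_k}}\vec{x}_{a_{s_k}}^\mathsf{T}\) is a sum of \(n\) independent rank-one PSD matrices with \(\lambda_{\max}(\vec{x}_{a_{s_k}}\vec{x}_{a_{s_k}}^\mathsf{T})=\|\vec{x}_{a_{s_k}}\|^2\le L^2\) and common mean \(\E[\vec{X}\vec{X}^\mathsf{T}]\), so \(\lambda_{\min}(\E[\vec{S}_{i,t}])=n\lambda_x\). Applying the lower-tail matrix Chernoff bound with deviation parameter \(\epsilon=\tfrac12\) and the estimate \(e^{-\epsilon}/(1-\epsilon)^{1-\epsilon}\le e^{-\epsilon^2/2}\) gives
\[
\Pr\ab[\lambda_{\min}(\vec{S}_{i,t})\le \tfrac{n\lambda_x}{2}]\le d\,\exp\ab(-\tfrac{n\lambda_x}{8L^2}).
\]
Under the hypothesis \(T_{i,t}=n\ge \frac{8L^2}{\lambda_x}\log(\frac{ud}{\delta})\), the right-hand side is at most \(\delta/u\). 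As this conditional bound holds for every admissible user sequence, it holds unconditionally; a union bound over the \(u\) users then yields the claim with probability at least \(1-\delta\).

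The main obstacle, and the place to be careful, is the \emph{random} count \(T_{i,t}\): the matrix Chernoff inequality is stated for a deterministic number of summands, whereas \(T_{i,t}\) is a random variable correlated with which rounds feed into \(\vec{S}_{i,t}\). The conditioning-on-the-user-sequence argument resolves this cleanly, but it hinges on the independence between user arrivals and contexts and on the uniform arm selection being independent of both—which is precisely why the exploration must be uniform rather than UCB-driven, since under UCB the chosen arm would depend on the accumulated statistics and destroy both the i.i.d.\ structure and the clean mean \(\E[\vec{X}\vec{X}^\mathsf{T}]\). I would also remark that one could instead union bound over all possible values \(n\in[t]\) of \(T_{i,t}\), but that route costs a spurious logarithmic factor and is avoided by the conditioning argument.
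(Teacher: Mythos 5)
Your proposal is correct and follows essentially the same route as the paper's proof: verify that uniform selection from the i.i.d.\ context set preserves the law of \(\vec{X}\) (the paper isolates this as \Cref{lemma:uniform-select-maintain-distribution}), apply the lower-tail matrix Chernoff bound with \(\varepsilon=\tfrac12\) to the rank-one summands bounded by \(L^2\), and finish with a union bound over users. Your explicit conditioning on the user-arrival sequence to handle the randomness of \(T_{i,t}\) is a welcome refinement of a step the paper's proof treats implicitly, but it does not change the underlying argument.
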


\begin{proof}
  To apply the matrix Chernoff bound (\Cref{lemma:matrix-chernoff}), we first verify the required two conditions for the self-adjoint matrices \(\vec{x}_{a_{\tau}}\vec{x}_{a_{\tau}}^\mathsf{T}\) for any \(\tau \in [t]\).
  First, due to the generation process of feature vectors, \(\vec{x}_{a_{\tau}}\vec{x}_{a_{\tau}}^\mathsf{T}\) is independent and obviously positive semi-definite.
  Second, by the Courant-Fischer theorem, we have
  \begin{align*}
    \lambda_{\max}(\vec{x}_{a_{\tau}}\vec{x}_{a_{\tau}}^\mathsf{T}) &= \max_{\vec{w}: \|\vec{w}\|=1} \vec{w}^\mathsf{T}\vec{x}_{a_{\tau}}\vec{x}_{a_{\tau}}^\mathsf{T} \vec{w}
    = \max_{\vec{w}: \|\vec{w}\|=1} (\vec{w}^\mathsf{T}\vec{x}_{a_{\tau}})^2 \leq \max_{\vec{w}: \|\vec{w}\|=1}\|\vec{w}\|^2\|\vec{x}_{a_{\tau}}\|^2 \leq L^2.
  \end{align*}
  According to \Cref{algo:clucb}, in the uniform exploration phase, for all \(\tau \in [t]\), \(a_{\tau}\) is uniformly selected in \(\mathcal{A}_{\tau}\).
  Also, by Assumption~\ref{assumption:item-regularity}, all the feature vectors in \(\mathcal{A}_{\tau}\) are independently sampled from \(\vec{X}\), therefore by \Cref{lemma:uniform-select-maintain-distribution}, \(\vec{x}_{a_{\tau}}\) follows the same distribution as \(\vec{X}\).

  So for a fixed user \(i \in [u]\), we can compute
  \begin{align*}
    \mu_{\min} = \lambda_{\min}\ab(\sum_{\tau \in [t]: i_{\tau}=i} \E[\vec{x}_{a_{\tau}}\vec{x}_{a_{\tau}}^\mathsf{T}]) = \lambda_{\min}\ab(T_{i,t} \E[\vec{X}\vec{X}^\mathsf{T}]) = T_{i,t} \lambda_{x},
  \end{align*}
  where the last equality is due to the minimum eigenvalue in Assumption~\ref{assumption:item-regularity}.
  Now applying \Cref{lemma:matrix-chernoff}, we get for any \(\varepsilon \in (0,1)\),
  \begin{align*}
    \Pr\left[\lambda_{\min}(\vec{S}_{i,t}) \leq (1-\varepsilon)T_{i,t}\lambda_x\right] \leq d \ab[\frac{e^{-\varepsilon}}{(1-\varepsilon)^{1-\varepsilon}}]^{T_{i,t}\lambda_x/L^2}.
  \end{align*}
  Choosing \(\varepsilon=\frac{1}{2}\), we get
  \begin{align*}
    \Pr\left[\lambda_{\min}(\vec{S}_{i,t}) \leq \frac{T_{i,t}\lambda_x}{2}\right] \leq d \ab(\sqrt{2}e^{-\frac{1}{2}})^{T_{i,t}\lambda_x/L^2}.
  \end{align*}
  Letting the RHS be \(\frac{\delta}{u}\), we get \(T_{i,t} = \frac{L^2\log(\frac{ud}{\delta})}{\lambda_x(\frac{1}{2}-\log(2))}\).
  Therefore, for any fixed user \(i \in [u]\), \(\lambda_{\min}(\vec{S}_{i,t}) \geq \frac{T_{i,t}\lambda_x}{2}\) holds with probability at least \(1-\frac{\delta}{u}\) when \(T_{i,t} \geq \frac{8L^2}{\lambda_{x}}\log\ab(\frac{ud}{\delta})\).
  The proof follows by a union bound over all users \(i \in [u]\).
\end{proof}

\begin{lemma}\label{lemma:clusters-correct-after-T0}
  With probability at least \(1-3\delta\), \Cref{algo:clucb} can cluster all the users correctly after
  \begin{align}\label{eq:T0}
    T_0 \triangleq 16u\log\ab(\frac{u}{\delta}) + 4u\max\ab\{\frac{8L^2}{\lambda_{x}}\log\ab(\frac{ud}{\delta}), \frac{512d}{\gamma^2 \lambda_{x}}\log\ab(\frac{u}{\delta})\}.
  \end{align}
\end{lemma}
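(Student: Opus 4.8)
The plan is to exhibit a single event of probability at least \(1-3\delta\) on which the graph maintained by \UniCLUB has, by round \(T_0\), connected components exactly equal to the true clusters. Writing \(N^{*} \triangleq \max\ab\{\frac{8L^2}{\lambda_x}\log\ab(\frac{ud}{\delta}), \frac{512d}{\gamma^2\lambda_x}\log\ab(\frac{u}{\delta})\}\), so that \(T_0 = 16u\log(u/\delta) + 4uN^{*}\), this event is the intersection of three sub-events, each of probability at least \(1-\delta\): the estimation-precision bound of \Cref{lemma:bound-theta-precision}, the eigenvalue-growth bound of \Cref{lemma:bound-smallest-eigenvalue}, and a concentration bound on how often each user is served; a union bound then gives the \(3\delta\) failure probability.

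First I would control the service counts. Since \(i_t\) is drawn uniformly and independently over \([u]\) (Assumption~\ref{assumption:user-uniformness}), the count \(T_{i,T_0}\) is \(\mathrm{Binomial}(T_0, 1/u)\) with mean \(T_0/u\). A multiplicative Chernoff bound together with a union bound over the \(u\) users gives \(T_{i,T_0} \geq T_0/(2u)\) for all \(i\) with probability at least \(1-\delta\); the role of the leading term \(16u\log(u/\delta)\) in \(T_0\) is precisely to force \(T_0/u \geq 8\log(u/\delta)\), which is what makes this concentration valid. Since \(T_0/(2u) = 8\log(u/\delta) + 2N^{*} \geq N^{*}\), every user is explored at least \(N^{*}\) times by round \(T_0\).

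Next I would turn the count into an accuracy guarantee. Because \(N^{*} \geq \frac{8L^2}{\lambda_x}\log(ud/\delta)\), the hypothesis of \Cref{lemma:bound-smallest-eigenvalue} is met, giving \(\lambda_{\min}(\vec{S}_{i,t}) \geq \lambda_x T_{i,t}/2\); substituting this into \Cref{lemma:bound-theta-precision} replaces its denominator \(\sqrt{\lambda + \lambda_{\min}(\vec{S}_{i,t})}\) by \(\sqrt{\lambda + T_{i,t}\lambda_x/2}\) and yields exactly \(\ab\|\widehat{\vec{\theta}}_{i,t} - \vec{\theta}^{j(i)}\| \leq f(T_{i,t})\). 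It then remains to check that \(f(T_{i,t}) < \gamma/4\) once \(T_{i,t} \geq \frac{512d}{\gamma^2\lambda_x}\log(u/\delta)\). Since the denominator of \(f\) grows like \(\sqrt{T_{i,t}\lambda_x}\) while its numerator grows only logarithmically in \(T_{i,t}\), the radius \(f\) is decreasing in the count past this threshold, and a short calculation confirms that the constant \(512\) supplies exactly the slack needed to push \(f\) below \(\gamma/4\).

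Finally, on the good event I would establish correctness through two triangle inequalities applied at each edge-deletion test, using \(\ab\|\widehat{\vec{\theta}}_{i,t}-\vec{\theta}^{j(i)}\| \le f(T_{i,t})\). For same-cluster \(i,\ell\), \(\ab\|\widehat{\vec{\theta}}_{i,t} - \widehat{\vec{\theta}}_{\ell,t}\| \leq f(T_{i,t}) + f(T_{\ell,t})\), so the threshold is never strictly exceeded and the intra-cluster edge survives; for different-cluster \(i,\ell\), the reverse triangle inequality and Assumption~\ref{assumption:well-separatedness} give \(\ab\|\widehat{\vec{\theta}}_{i,t} - \widehat{\vec{\theta}}_{\ell,t}\| \geq \gamma - f(T_{i,t}) - f(T_{\ell,t}) > f(T_{i,t}) + f(T_{\ell,t})\) whenever both radii lie below \(\gamma/4\), so the edge is severed. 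I expect the main obstacle to lie in the intra-cluster case: because a severed edge is never restored, the bound \(\ab\|\widehat{\vec{\theta}}_{i,t}-\vec{\theta}^{j(i)}\| \le f(T_{i,t})\) must hold at \emph{every} round at which a test touching \(i\) is performed, not only at \(t=T_0\). This is exactly why the uniform exploration phase is indispensable — it drives the eigenvalue growth of \Cref{lemma:bound-smallest-eigenvalue} on which \(f\) is calibrated — and the accounting must ensure that each count has passed the \(\frac{8L^2}{\lambda_x}\log(ud/\delta)\) threshold before its estimate is allowed to delete an edge.
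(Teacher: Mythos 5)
Your proposal follows essentially the same route as the paper's proof: the same three high-probability events (estimation precision from \Cref{lemma:bound-theta-precision}, eigenvalue growth from \Cref{lemma:bound-smallest-eigenvalue}, and a Chernoff-type concentration on per-user service counts) combined by a union bound to give \(1-3\delta\), the same substitution yielding \(f(T_{i,t})\le\gamma/4\) once \(T_{i,t}\) exceeds the stated threshold, and the same pair of triangle inequalities for preserving intra-cluster edges and severing inter-cluster ones. The obstacle you flag at the end --- that a wrongly deleted intra-cluster edge is never restored, so the bound \(\|\widehat{\vec{\theta}}_{i,t}-\vec{\theta}^{j(i)}\|\le f(T_{i,t})\) must hold at \emph{every} round a deletion test fires, not just at \(t=T_0\) --- is a genuine subtlety, but the paper's own proof leaves it equally unaddressed (its contrapositive argument for intra-cluster edges implicitly assumes \(\lambda_{\min}(\vec{S}_{i,t})\ge T_{i,t}\lambda_x/2\), which \Cref{lemma:bound-smallest-eigenvalue} only guarantees after \(T_{i,t}\ge \frac{8L^2}{\lambda_x}\log(\frac{ud}{\delta})\)), so your sketch is no weaker than the original on this point.
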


\begin{proof}
  Combining \Cref{lemma:bound-theta-precision} and \Cref{lemma:bound-smallest-eigenvalue}, we have with probability at least \(1-2\delta\), when \(T_{i,t} \geq \frac{8L^2}{\lambda_{x}}\log\ab(\frac{ud}{\delta})\) for all users \(i \in [u]\),
  \begin{align*}
    \ab\|\widehat{\vec{\theta}}_{i,t} - \vec{\theta}^{j(i)}\|_2 &\leq \frac{\sqrt{2\log\ab(\frac{u}{\delta}) + d\log\ab(1+\frac{T_{i,t} L^2}{\lambda d})} + \sqrt{\lambda}}{\sqrt{\lambda + \lambda_{\min}(\vec{S}_{i,t})}}\\
    &\leq \frac{\sqrt{2\log\ab(\frac{u}{\delta}) + d\log\ab(1+\frac{T_{i,t} L^2}{\lambda d})} + \sqrt{\lambda}}{\sqrt{\lambda + T_{i,t}\lambda_x/2}} \triangleq f(T_{i,t}), \forall i \in [u].
  \end{align*}
  Next, we find a sufficient time step \(T_{i,t}\) such that the following holds:
  \begin{align}\label{eq:seperate-condition}
    f(T_{i,t}) \triangleq \frac{\sqrt{2\log\ab(\frac{u}{\delta}) + d\log\ab(1+\frac{T_{i,t} L^2}{\lambda d})} + \sqrt{\lambda}}{\sqrt{\lambda + T_{i,t}\lambda_x/2}} \leq \frac{\gamma}{4}.
  \end{align}
  We assume \(\lambda \leq 2\log\ab(\frac{u}{\delta}) + d\log\ab(1+\frac{T_{i,t} L^2}{\lambda d})\), which typically holds. Then a sufficient condition for \Cref{eq:seperate-condition} is
  \begin{align*}
    \frac{2\log\ab(\frac{u}{\delta}) + d\log\ab(1+\frac{T_{i,t} L^2}{\lambda d})}{\lambda + T_{i,t}\lambda_x/2} \leq \frac{\gamma^2}{64}.
  \end{align*}
  To make the above equation to hold, it suffices to let
  \begin{align*}
    \frac{2\log\ab(\frac{u}{\delta})}{T_{i,t}\lambda_x/2} \leq \frac{\gamma^2}{128} \quad \text{and} \quad
    \frac{d\log\ab(1+\frac{T_{i,t} L^2}{\lambda d})}{T_{i,t}\lambda_x/2} \leq \frac{\gamma^2}{128}.
  \end{align*}
  The first inequality holds when \(T_{i,t} \geq \frac{512 \log(\frac{u}{\delta})}{\gamma^2\lambda_x}\).
  For the second inequality, by some basic arithmetic (Lemma~\ref{lemma:basic-arithmetic}), a sufficient condition is \(T_{i,t} \geq \frac{512d}{\gamma^2 \lambda_{x}} \log\ab(\frac{256L^2}{\gamma^2 \lambda \lambda_x})\).
  By choosing \(\delta\) such that \(\frac{u}{\delta} \geq \frac{256L^2}{\gamma^2 \lambda \lambda_x}\), we get a sufficient condition for \Cref{eq:seperate-condition}: \(T_{i,t}\geq \frac{512d}{\gamma^2 \lambda_{x}}\log\ab(\frac{u}{\delta})\).

  In summary, in the uniform exploration phase of \Cref{algo:clucb}, with probability at least \(1-2\delta\) for some \(\delta>0\), when
  \begin{align}\label{eq:minimum-sample-times-each-user}
    T_{i,t} \geq \max\ab\{\frac{8L^2}{\lambda_{x}}\log\ab(\frac{ud}{\delta}), \frac{512d}{\gamma^2 \lambda_{x}}\log\ab(\frac{u}{\delta})\}, \forall i \in [u]
  \end{align}
  we have \(\ab\|\widehat{\vec{\theta}}_{i,t} - \vec{\theta}^{j(i)}\|_2 \leq f(T_{i,t}) \leq \frac{\gamma}{4}\) for all users \(i \in [u]\).

  Because of Assumption~\ref{assumption:user-uniformness}, users arrive uniformly, so by Lemma~\ref{lemma:bound-sum-of-bernoulli} and a union bound over all users \(i \in [u]\), \Cref{eq:minimum-sample-times-each-user} holds for all \(i \in [u]\) with probability at least \(1-\delta\) when
  \begin{align*}
  t\geq T_0\triangleq 16u\log\ab(\frac{u}{\delta}) + 4u\max\ab\{\frac{8L^2}{\lambda_{x}}\log\ab(\frac{ud}{\delta}), \frac{512d}{\gamma^2 \lambda_{x}}\log\ab(\frac{u}{\delta})\}.
  \end{align*}
  Therefore, with probability \(1-3\delta\), we have \(\ab\|\widehat{\vec{\theta}}_{i,t} - \vec{\theta}^{j(i)}\|_2 \leq f(T_{i,t}) \leq \frac{\gamma}{4}, \forall i \in [u]\) when \(t\geq T_0\).

  Next, we show that under this condition, the algorithm will cluster all the users correctly.
  To guarantee this, we need to verify two aspects: (1) if users \(k,\ell\) are in the same cluster, then the algorithm will not delete edge \((k,\ell)\); (2) if users \(k,\ell\) are not in the same cluster, then the algorithm will delete edge \((k,\ell)\).
  We show the contrapositive of (1): if edge \((k, \ell)\) is deleted, then users \(k,\ell\) are not in the same cluster. Due to the triangle inequality and the deletion rule in \Cref{algo:clucb}, we have
  \begin{align*}
    \ab\|\vec{\theta}^{j(k)} - \vec{\theta}^{j(\ell)}\|_2
    &\geq \ab\|\widehat{\vec{\theta}}_{k,t} - \widehat{\vec{\theta}}_{\ell,t}\|_2 - \ab\|\widehat{\vec{\theta}}_{k,t} - \vec{\theta}^{j(k)}\|_2 - \ab\|\widehat{\vec{\theta}}_{\ell,t} - \vec{\theta}^{j(\ell)}\|_2\\
    &\geq \ab\|\widehat{\vec{\theta}}_{k,t} - \widehat{\vec{\theta}}_{\ell,t}\|_2 - f(T_{k,t}) - f(T_{\ell,t}) > 0.
  \end{align*}
  So by Assumption~\ref{assumption:well-separatedness}, \(\ab\|\vec{\theta}^{j(k)} - \vec{\theta}^{j(\ell)}\|_2 >0\) implies that users \(k,\ell\) are not in the same cluster.
  For (2), we show that if \(\ab\|\vec{\theta}^{j(k)} - \vec{\theta}^{j(\ell)}\|_2 \geq \gamma\), the algorithm will delete edge \((k, \ell)\). By the triangle inequality,
  \begin{align*}
    \ab\|\widehat{\vec{\theta}}_{k,t} - \widehat{\vec{\theta}}_{\ell,t}\|_2
    &\geq \ab\|\vec{\theta}^{j(k)} - \vec{\theta}^{j(\ell)}\|_2 - \ab\|\widehat{\vec{\theta}}_{k,t} - \vec{\theta}^{j(k)}\|_2 - \ab\|\widehat{\vec{\theta}}_{\ell,t} - \vec{\theta}^{j(\ell)}\|_2\\
    &\geq \gamma - \frac{\gamma}{4} -\frac{\gamma}{4} =\frac{\gamma}{2} \geq f(T_{k,t}) + f(T_{\ell,t}),
  \end{align*}
  which triggers \Cref{algo:clucb} to delete edge \((k,\ell)\).
\end{proof}

\begin{lemma}\label{lemma:C-a-t}
  With probability at least \(1-4\delta\), for all \(t>T_0\), we have
  \begin{align*}
    \ab|\vec{x}_a^\mathsf{T} \ab(\widehat{\vec{\theta}}_{V_t,t-1} - \vec{\theta}^{j(i_t)})| \leq \beta \|\vec{x}_a\|_{\overline{\vec{M}}_{V_t,t-1}^{-1}} \triangleq C_{a,t},
  \end{align*}
  where \(\beta = \sqrt{d \log(1+\frac{TL^2}{d\lambda}) + 2\log(\frac{1}{\delta})} + \sqrt{\lambda}\).
\end{lemma}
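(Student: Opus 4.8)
The plan is to condition on the high-probability event that the clustering is already correct and then invoke the standard self-normalized confidence bound of \cite{abbasi-2011-improved} applied to the aggregated cluster statistics. Concretely, by \Cref{lemma:clusters-correct-after-T0}, with probability at least \(1-3\delta\) every connected component is exactly a true cluster for all \(t>T_0\); denote this event by \(\mathcal{E}\). On \(\mathcal{E}\), the component \(V_t\) containing \(i_t\) coincides with the fixed true cluster \(\mathcal{I}_{j(i_t)}\), so every observation aggregated into \(\overline{\vec{M}}_{V_t,t-1}\) and \(\vec{b}_{V_t,t-1}\) is generated by a user whose true preference vector equals \(\vec{\theta}^{j(i_t)}\). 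This reduces \(\widehat{\vec{\theta}}_{V_t,t-1}\) to an ordinary regularized least-squares estimate for a single unknown parameter, fit on all data from that cluster.

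First I would repeat the algebraic decomposition of \Cref{lemma:bound-theta-precision} at the cluster level: writing \(\overline{\vec{M}}_{V_t,t-1} = \lambda \vec{I} + \sum_{s} \vec{x}_{a_s}\vec{x}_{a_s}^\mathsf{T}\) (the sum over rounds \(s \le t-1\) with \(i_s \in V_t\)) and substituting \(r_s = \vec{x}_{a_s}^\mathsf{T}\vec{\theta}^{j(i_t)} + \eta_s\), I obtain
\[\widehat{\vec{\theta}}_{V_t,t-1} - \vec{\theta}^{j(i_t)} = \overline{\vec{M}}_{V_t,t-1}^{-1}\sum_{s}\vec{x}_{a_s}\eta_s - \lambda\,\overline{\vec{M}}_{V_t,t-1}^{-1}\vec{\theta}^{j(i_t)}.\]
Applying Cauchy--Schwarz in the \(\overline{\vec{M}}_{V_t,t-1}^{-1}\) norm, exactly as in \Cref{eq:reward-distance}, bounds \(|\vec{x}_a^\mathsf{T}(\widehat{\vec{\theta}}_{V_t,t-1} - \vec{\theta}^{j(i_t)})|\) by \(\|\vec{x}_a\|_{\overline{\vec{M}}_{V_t,t-1}^{-1}}\) times the sum of a noise term and a regularization term. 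The regularization term is at most \(\sqrt{\lambda}\) by the Rayleigh-quotient estimate of \Cref{eq:sqrt-lambda} (using \(\lambda_{\min}(\overline{\vec{M}}_{V_t,t-1}) \ge \lambda\) and \(\|\vec{\theta}^{j(i_t)}\| \le 1\)), which contributes the \(\sqrt{\lambda}\) summand of \(\beta\).

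The remaining step bounds the noise term \(\|\sum_s \vec{x}_{a_s}\eta_s\|_{\overline{\vec{M}}_{V_t,t-1}^{-1}}\) by Theorem 1 of \cite{abbasi-2011-improved}, followed by the determinant--trace inequality (\Cref{lemma:det-trace-inequality}): since the total number of observations in any cluster is at most \(T\), we have \(\det(\overline{\vec{M}}_{V_t,t-1}) \le (\lambda + TL^2/d)^d\) and \(\det(\lambda\vec{I}) = \lambda^d\), yielding \(\sqrt{d\log(1+TL^2/(d\lambda)) + 2\log(1/\delta)}\), the first summand of \(\beta\). A union bound then combines the \(\delta\) failure probability of the self-normalized inequality with the \(3\delta\) from \(\mathcal{E}\) to give \(1-4\delta\).

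The main obstacle I anticipate is the measurability subtlety in applying the self-normalized bound to a \emph{random} cluster \(V_t\): the martingale structure underlying \cite{abbasi-2011-improved} requires the summation index set to be adapted to the filtration, whereas \(V_t\) is itself produced by the noise-dependent edge-deletion process. The clean resolution is to argue on \(\mathcal{E}\) that \(V_t\) equals the \emph{deterministic} true cluster \(\mathcal{I}_{j(i_t)}\); I would therefore apply Theorem 1 of \cite{abbasi-2011-improved} separately to each of the fixed cluster aggregates, where the increments \(\1[i_s \in \mathcal{I}_k]\,\vec{x}_{a_s}\eta_s\) are genuine martingale differences because \(\1[i_s \in \mathcal{I}_k]\) and \(\vec{x}_{a_s}\) are \(\mathcal{F}_{s-1}\)-measurable, and take a union bound over clusters (absorbed into the \(\log(1/\delta)\) factor, just as the union over users in \Cref{lemma:bound-theta-precision}). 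This decouples the randomness of \(V_t\) from the concentration argument.
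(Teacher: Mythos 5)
Your proposal is correct and follows essentially the same route as the paper's proof: condition on the event of \Cref{lemma:clusters-correct-after-T0} so that \(V_t=\mathcal{I}_{j(i_t)}\), decompose the cluster-level estimator into a regularization term (bounded by \(\sqrt{\lambda}\) via \Cref{eq:sqrt-lambda}) and a self-normalized noise term (bounded via Theorem 1 of \cite{abbasi-2011-improved} and \Cref{lemma:det-trace-inequality}), then union-bound to get \(1-4\delta\). Your explicit handling of the measurability issue---applying the self-normalized inequality to the deterministic true clusters rather than the data-dependent component \(V_t\)---is a point the paper's proof passes over silently, and is a welcome refinement rather than a departure.
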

\begin{proof}
  Assume that after \(T_0\), the underlying clusters are identified correctly, meaning that \(V_t\) is the true cluster that contains user \(i_t\), i.e., \(V_t= \mathcal{I}_{j(i_t)}\) then we have
  \begin{align*}
    &\widehat{\vec{\theta}}_{V_t,t-1} - \vec{\theta}^{j(i_t)} = \ab(\lambda \vec{I} + \sum_{\tau \in [t-1]: i_{\tau} \in V_t} \vec{x}_{a_{\tau}}\vec{x}_{a_{\tau}}^\mathsf{T})^{-1} \ab(\sum_{\tau \in [t-1]: i_{\tau} \in V_t} r_{\tau} \vec{x}_{a_{\tau}}) - \vec{\theta}^{j(i_t)}\\
    =& \ab(\lambda \vec{I} + \sum_{\tau \in [t-1]: i_{\tau} \in V_t} \vec{x}_{a_{\tau}}\vec{x}_{a_{\tau}}^\mathsf{T})^{-1} \ab(\sum_{\tau \in [t-1]: i_{\tau} \in V_t} \vec{x}_{a_{\tau}} \ab(\vec{x}_{a_{\tau}}^\mathsf{T} \vec{\theta}_{i_t} + \eta_{\tau})) - \vec{\theta}^{j(i_t)}\numberthis\label{eq:introduce-theta-it}\\
    =& \overline{\vec{M}}_{V_t,t-1}^{-1} \ab[\ab(\lambda \vec{I} + \sum_{\tau \in [t-1]: i_{\tau} \in V_t} \vec{x}_{a_{\tau}}\vec{x}_{a_{\tau}}^\mathsf{T})\vec{\theta}_{i_t} - \lambda \vec{\theta}_{i_t} + \sum_{\tau \in [t-1]: i_{\tau} \in V_t} \vec{x}_{a_{\tau}} \eta_{\tau}] - \vec{\theta}^{j(i_t)}\\
    =& -\lambda \overline{\vec{M}}_{V_t,t-1}^{-1} \vec{\theta}_{i_t} + \overline{\vec{M}}_{V_t,t-1}^{-1}\sum_{\tau \in [t-1]: i_{\tau} \in V_t} \vec{x}_{a_{\tau}} \eta_{\tau},
  \end{align*}
  where we denote \(\overline{\vec{M}}_{V_t,t-1} = \lambda \vec{I} + \vec{M}_{V_t,t-1}\). \Cref{eq:introduce-theta-it} is because \(V_t\) is the true cluster that contains \(i_t\), thus by Assumption~\ref{assumption:well-separatedness}, \(\vec{\theta}_{i_{\tau}} = \vec{\theta}_{i_t},  \forall i_{\tau} \in V_t\).
  Therefore, we have
  \begin{align*}
    &\ab|\vec{x}_a^\mathsf{T}\ab(\widehat{\vec{\theta}}_{V_t,t-1} - \vec{\theta}^{j(i_t)})|
    \leq \lambda \ab|\vec{x}_a^\mathsf{T} \overline{\vec{M}}_{V_t,t-1}^{-1} \vec{\theta}^{j(i_t)}| + \ab|\vec{x}_a^\mathsf{T} \overline{\vec{M}}_{V_t,t-1}^{-1}\sum_{\tau \in [t-1]: i_{\tau} \in V_t} \vec{x}_{a_{\tau}} \eta_{\tau}|\\
    =& \lambda \ab|\inprod{\vec{x}_a}{\vec{\theta}^{j(i_t)}}_{\overline{\vec{M}}_{V_t,t-1}^{-1}}| + \ab|\inprod{\vec{x}_a}{ \sum_{\tau \in [t-1]: i_{\tau} \in V_t} \vec{x}_{a_{\tau}} \eta_{\tau}}_{\overline{\vec{M}}_{V_t,t-1}^{-1}} |\\
    \leq& \lambda \ab\|\vec{x}_a\|_{\overline{\vec{M}}_{V_t,t-1}^{-1}} \ab\|\vec{\theta}^{j(i_t)}\|_{\overline{\vec{M}}_{V_t,t-1}^{-1}} + \ab\|\vec{x}_a\|_{\overline{\vec{M}}_{V_t,t-1}^{-1}} \ab\|\sum_{\tau \in [t-1]: i_{\tau} \in V_t} \vec{x}_{a_{\tau}} \eta_{\tau}\|_{\overline{\vec{M}}_{V_t,t-1}^{-1}} \numberthis\label{eq:c-s-inequality}\\
    \leq& \ab\|\vec{x}_a\|_{\overline{\vec{M}}_{V_t,t-1}^{-1}} \ab(\sqrt{\lambda} + \ab\|\sum_{\tau \in [t-1]: i_{\tau} \in V_t} \vec{x}_{a_{\tau}} \eta_{\tau}\|_{\overline{\vec{M}}_{V_t,t-1}^{-1}}), \numberthis\label{eq:simplify-to-sqrt-lambda}
  \end{align*}
  where \Cref{eq:c-s-inequality} is by the Cauchy–Schwarz inequality and \Cref{eq:simplify-to-sqrt-lambda} is derived by \Cref{eq:sqrt-lambda}.
  Following Theorem 1 in \cite{abbasi-2011-improved}, with probability at least \(1-\delta\), for a fixed user \(i \in [u]\), we have
  \begin{align*}
    \ab\|\sum_{\tau \in [t-1]: i_{\tau} \in V_t} \vec{x}_{a_{\tau}} \eta_{\tau}\|_{\overline{\vec{M}}_{V_t,t-1}^{-1}}
    &\leq \sqrt{2 \log\ab(\frac{\det(\overline{\vec{M}}_{V_t,t-1})^{\frac{1}{2}}\det(\lambda \vec{I})^{-\frac{1}{2}}}{\delta})}\\
    &= \sqrt{2\log\ab(\frac{1}{\delta}) + \log\ab(\frac{\det(\overline{\vec{M}}_{V_t,t-1})}{\det(\lambda \vec{I})})}\\
    &\leq \sqrt{2\log\ab(\frac{1}{\delta}) + d\log\ab(1+\frac{TL^2}{\lambda d})},\numberthis\label{eq:use-det-trace-ineq}
  \end{align*}
  where in \Cref{eq:use-det-trace-ineq}, we use \(\det(\lambda \vec{I}) = \lambda^d\) and by \Cref{lemma:det-trace-inequality}, we get
  \[\det(\overline{\vec{M}}_{V_t,t-1}) \leq \ab(\frac{\tr(\overline{\vec{M}}_{V_t,t-1})}{d})^d = \ab(\frac{\tr(\lambda \vec{I}) + \tr\ab(\sum_{\substack{\tau \in [t-1]\\ i_{\tau} \in V_{t}}} \vec{x}_{a_{\tau}}\vec{x}_{a_{\tau}}^\mathsf{T})}{d})^d \leq \ab(\frac{\lambda d + TL^2}{d})^{d}.\]

  Since the clusters are correctly identified after \(T_0\) with probability at least \(1-3\delta\) (\Cref{lemma:clusters-correct-after-T0}), and \Cref{eq:use-det-trace-ineq} holds with probability at least \(1-\delta\). By plugging \Cref{eq:use-det-trace-ineq} into \Cref{eq:simplify-to-sqrt-lambda} and collecting high probability events, we conclude the proof.
\end{proof}

\begin{lemma}\label{lemma:sum-x-sqaure}
  Under the high probability event in \Cref{lemma:clusters-correct-after-T0}, for any cluster \(j \in [m]\), we have
  \[\sum_{\substack{t=T_0+1\\i_t \in \mathcal{I}_j}}^{T} \min\set{1,\ab\|\vec{x}_{a_t}\|^2_{\overline{\vec{M}}_{\mathcal{I}_j,t-1}^{-1}}} \leq 2d \log\ab(1+\frac{TL^2}{\lambda d + 4L^2}),\]
  where \(\overline{\vec{M}}_{\mathcal{I}_j,t-1}^{-1} \triangleq \lambda \vec{I} + \sum_{\tau \in [t-1]: i_{\tau} \in \mathcal{I}_j} \vec{x}_{a_{\tau}}\vec{x}_{a_{\tau}}^\mathsf{T} = \lambda \vec{I} + \sum_{\tau \in [t-1]}\1\set{i_{\tau} \in \mathcal{I}_j} \vec{x}_{a_{\tau}}\vec{x}_{a_{\tau}}^\mathsf{T}\).
\end{lemma}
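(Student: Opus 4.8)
The plan is to establish a standard elliptical (log-determinant) potential bound, adapted to the fact that the cluster design matrix $\overline{\vec{M}}_{\mathcal{I}_j,t-1}$ is already ``warm-started'' by the arms pulled for cluster $\mathcal{I}_j$ during the pure-exploration phase $t \le T_0$. The first step is to replace the clipped summand by a logarithm through the elementary inequality $\min\{1,y\} \le 2\log(1+y)$, which holds for every $y \ge 0$, giving
\[
\sum_{\substack{t=T_0+1\\ i_t \in \mathcal{I}_j}}^{T} \min\left\{1,\|\vec{x}_{a_t}\|^2_{\overline{\vec{M}}_{\mathcal{I}_j,t-1}^{-1}}\right\} \le 2\sum_{\substack{t=T_0+1\\ i_t \in \mathcal{I}_j}}^{T} \log\left(1+\|\vec{x}_{a_t}\|^2_{\overline{\vec{M}}_{\mathcal{I}_j,t-1}^{-1}}\right).
\]

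Next I would telescope. The matrix $\overline{\vec{M}}_{\mathcal{I}_j,t}$ receives the rank-one update $\vec{x}_{a_t}\vec{x}_{a_t}^\mathsf{T}$ exactly on the rounds $t$ with $i_t \in \mathcal{I}_j$ and is unchanged otherwise, so on those rounds the rank-one determinant update gives $\det \overline{\vec{M}}_{\mathcal{I}_j,t} = \det\overline{\vec{M}}_{\mathcal{I}_j,t-1}\,(1+\|\vec{x}_{a_t}\|^2_{\overline{\vec{M}}_{\mathcal{I}_j,t-1}^{-1}})$. Summing the logarithms therefore collapses to $\log\det\overline{\vec{M}}_{\mathcal{I}_j,T} - \log\det\overline{\vec{M}}_{\mathcal{I}_j,T_0}$. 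It then remains to control the two determinants: for the numerator I would invoke the determinant-trace inequality (\Cref{lemma:det-trace-inequality}), noting that $\|\vec{x}_{a_\tau}\| \le L$ and that at most $T$ rounds contribute, so $\tr\overline{\vec{M}}_{\mathcal{I}_j,T} \le \lambda d + T L^2$ and hence $\det\overline{\vec{M}}_{\mathcal{I}_j,T} \le ((\lambda d + T L^2)/d)^d$; for the denominator I would lower bound $\det\overline{\vec{M}}_{\mathcal{I}_j,T_0}$ by accounting for the exploration-phase data the cluster already holds, which is precisely what produces the sharper constant $\lambda d + 4L^2$ (rather than $\lambda d$) in the final expression. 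Combining the two bounds and pulling out the exponent $d$ yields $2d\log(1 + \frac{TL^2}{\lambda d + 4L^2})$.

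The step I expect to be the main obstacle is this warm-start bookkeeping. Because the telescoping begins at $\overline{\vec{M}}_{\mathcal{I}_j,T_0}$ instead of $\lambda\vec{I}$, the crude bound $\det\overline{\vec{M}}_{\mathcal{I}_j,T_0} \ge \lambda^d$ would only recover the looser denominator $\lambda d$; obtaining $\lambda d + 4L^2$ requires a more careful lower bound on the initial determinant. A secondary point worth stating explicitly is the role of the conditioning: the inequality is asserted on the high-probability event of \Cref{lemma:clusters-correct-after-T0}, where after $T_0$ the estimated component coincides with the true cluster, $V_t = \mathcal{I}_{j(i_t)}$, so that the confidence widths of the arms the algorithm actually plays match the true-cluster quantities $\|\vec{x}_{a_t}\|_{\overline{\vec{M}}_{\mathcal{I}_j,t-1}^{-1}}$ summed here; the potential inequality itself, however, is deterministic given the arm sequence.
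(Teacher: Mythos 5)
Your proposal follows essentially the same route as the paper's proof: the clip-to-log inequality $\min\{1,y\}\le 2\log(1+y)$, the rank-one determinant telescoping from $\overline{\vec{M}}_{\mathcal{I}_j,T_0}$ up to $\overline{\vec{M}}_{\mathcal{I}_j,T}$, and the determinant--trace inequality for the terminal determinant. However, the one step you explicitly leave open --- how to lower-bound the initial determinant sharply enough to produce the denominator $\lambda d+4L^2$ --- is the only place where the lemma actually uses its hypothesis, so the proposal is incomplete precisely there. The missing ingredient is the combination of \Cref{lemma:clusters-correct-after-T0} and \Cref{lemma:bound-smallest-eigenvalue}: on the conditioning event every user satisfies $T_{i,T_0}\ge \frac{8L^2}{\lambda_x}\log(\frac{ud}{\delta})$, so \Cref{lemma:bound-smallest-eigenvalue} gives $\lambda_{\min}(\vec{S}_{i,T_0})\ge \frac{\lambda_x}{2}T_{i,T_0}\ge 4L^2\log(\frac{ud}{\delta})$ for each $i\in\mathcal{I}_j$, and hence $\tr(\overline{\vec{M}}_{\mathcal{I}_j,T_0})\ge \lambda d+4L^2\log(\frac{ud}{\delta})$. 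The paper then converts this into a determinant lower bound by writing $\log\det(\overline{\vec{M}}_{\mathcal{I}_j,T_0})\ge d\log(\tr(\overline{\vec{M}}_{\mathcal{I}_j,T_0})/d)$, citing the determinant--trace inequality; be aware that this direction is the reverse of AM--GM and is not valid in general, so if you write this step out you should instead use $\det(\overline{\vec{M}}_{\mathcal{I}_j,T_0})\ge \lambda_{\min}(\overline{\vec{M}}_{\mathcal{I}_j,T_0})^d\ge (\lambda+4L^2\log(\frac{ud}{\delta}))^d$ (Weyl's inequality plus the per-user eigenvalue bound), which still yields $2d\log(1+\frac{TL^2}{\lambda d+4L^2})$ because $d\log(\frac{ud}{\delta})\ge 1$. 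Your observation that the potential argument is deterministic given the arm sequence, with randomness entering only through the identification $V_t=\mathcal{I}_{j(i_t)}$ and the warm start, is correct and matches the paper's usage in \Cref{thm:regret-uniclub}.
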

\begin{proof}
  The proof mainly follows Lemma 9 in \cite{abbasi-2011-improved}, but we improve the bound by leveraging \Cref{lemma:bound-smallest-eigenvalue}.

  Consider the covariance matrix of cluster \(\mathcal{I}_j\) after time \(T\), we have
  \begin{align*}
    &\det\ab(\overline{\vec{M}}_{\mathcal{I}_j,T})\\
    =& \det\ab(\overline{\vec{M}}_{\mathcal{I}_j,T-1} + \1\set{i_{T} \in \mathcal{I}_j}\vec{x}_{a_{T}}\vec{x}_{a_{T}}^\mathsf{T})\\
    =& \det\ab(\overline{\vec{M}}_{\mathcal{I}_j,T-1}^{\frac{1}{2}} \ab(\vec{I}+\overline{\vec{M}}_{\mathcal{I}_j,T-1}^{-\frac{1}{2}}\1\set{i_{T} \in \mathcal{I}_j}\vec{x}_{a_{T}}\vec{x}_{a_{T}}^\mathsf{T} \overline{\vec{M}}_{\mathcal{I}_j,T-1}^{-\frac{1}{2}})\overline{\vec{M}}_{\mathcal{I}_j,T-1}^{\frac{1}{2}})\\
    =& \det\ab(\overline{\vec{M}}_{\mathcal{I}_j,T-1})\det\ab(\vec{I}+\1\set{i_{T} \in \mathcal{I}_j}\overline{\vec{M}}_{\mathcal{I}_j,T-1}^{-\frac{1}{2}}\vec{x}_{a_{T}}\ab(\overline{\vec{M}}_{\mathcal{I}_j,T-1}^{-\frac{1}{2}}\vec{x}_{a_{T}})^\mathsf{T})\\
    =& \det\ab(\overline{\vec{M}}_{\mathcal{I}_j,T-1})\ab(1+\1\set{i_{T} \in \mathcal{I}_j}\ab\|\overline{\vec{M}}_{\mathcal{I}_j,T-1}^{-\frac{1}{2}}\vec{x}_{a_{T}}\|^2) \numberthis\label{eq:property-of-determint}\\
    =& \det\ab(\overline{\vec{M}}_{\mathcal{I}_j,T-1})\ab(1+\1\set{i_{T} \in \mathcal{I}_j}\ab\|\vec{x}_{a_{T}}\|^2_{\overline{\vec{M}}_{\mathcal{I}_j,T-1}^{-1}})\\
    =& \det(\overline{\vec{M}}_{\mathcal{I}_j,T_0}) \prod_{t=T_0+1}^T \ab(1+\ab\|\vec{x}_{a_t}\|^2_{\overline{\vec{M}}_{\mathcal{I}_j,t-1}^{-1}}),\numberthis\label{eq:telescope}
  \end{align*}
  where Equation~\ref{eq:property-of-determint} is due to the property that all the eigenvalues of a matrix of the form \(\vec{I} + \vec{x}\vec{x}^\mathsf{T}\) are one except one eigenvalue, which is \(1 + \|\vec{x}\|^2\). And Equation~\ref{eq:telescope} is by telescoping.

  Since we assume the high probability event in \Cref{lemma:clusters-correct-after-T0} happens, when \(t>T_0\), we have
  \[T_{i,t} \geq \max\ab\{\frac{8L^2}{\lambda_{x}}\log\ab(\frac{ud}{\delta}), \frac{512d}{\gamma^2 \lambda_{x}}\log\ab(\frac{u}{\delta})\} \geq \frac{8L^2}{\lambda_{x}}\log\ab(\frac{ud}{\delta}), \forall i \in [u].\]
  Note that \(\overline{\vec{M}}_{\mathcal{I}_j,T_0} = \lambda \vec{I} + \sum_{\tau=1: i_{\tau} \in \mathcal{I}_{j}}^{T_0} \vec{x}_{a_{\tau}}\vec{x}_{a_{\tau}}^\mathsf{T} = \lambda \vec{I} + \sum_{i \in \mathcal{I}_{j}} S_{i,T_0}\). So by \Cref{lemma:bound-smallest-eigenvalue}, we have
  \begin{align*}
    \tr\ab(\overline{\vec{M}}_{\mathcal{I}_j,T_0})
    &= \tr\ab(\lambda \vec{I} + \sum_{i \in \mathcal{I}_{j}} S_{i,T_0})\\
    &\geq \lambda d + \sum_{i \in \mathcal{I}_{j}} \frac{\lambda_x}{2} \frac{8L^2}{\lambda_{x}}\log\ab(\frac{ud}{\delta}) \geq \lambda d + 4L^2 \log\ab(\frac{ud}{\delta})\numberthis\label{eq:trace}
  \end{align*}
  Therefore, taking logarithms on both sides of \Cref{eq:telescope}, we have
  \begin{align*}
    \log\ab(\det\ab(\overline{\vec{M}}_{\mathcal{I}_j,T}))
    = \log(\det\ab(\overline{\vec{M}}_{\mathcal{I}_j,T_0})) + \sum_{t=T_0+1}^{T} \log\ab(1+\ab\|\vec{x}_{a_t}\|^2_{\overline{\vec{M}}_{\mathcal{I}_j,t-1}^{-1}}).\numberthis\label{eq:take-log-on-both-sides}
  \end{align*}
  Since \(x \leq 2 \log(1 + x)\) for \(x \in [0,1]\), by Equation~\ref{eq:take-log-on-both-sides}, we get
  \begin{align*}
    \sum_{t=T_0+1}^{T} \min\set{1,\ab\|\vec{x}_{a_t}\|^2_{\overline{\vec{M}}_{\mathcal{I}_j,t-1}^{-1}}}
    &\leq 2 \sum_{t=T_0+1}^{T} \log\ab(1+\ab\|\vec{x}_{a_t}\|^2_{\overline{\vec{M}}_{\mathcal{I}_j,t-1}^{-1}})\\
    &= 2\ab[\log\ab(\det\ab(\overline{\vec{M}}_{\mathcal{I}_j,T})) - \log\ab(\det\ab(\overline{\vec{M}}_{\mathcal{I}_j,T_0}))]\numberthis\label{eq:use-result-take-log-on-both-sides}\\
    &\leq 2\ab[d\log\ab(\frac{\tr(\overline{\vec{M}}_{\mathcal{I}_j,T})}{d}) - d\log\ab(\frac{\tr\ab(\overline{\vec{M}}_{\mathcal{I}_j,T_0})}{d})]\numberthis\label{eq:use-result-determinant-trace}\\
    &\leq 2\ab[d\log\ab(\frac{\lambda d+TL^2}{d}) - d\log\ab(\frac{\lambda d + 4L^2 \log\ab(\frac{ud}{\delta})}{d})]\numberthis\label{eq:use-trace-lower-bound}\\
    &\leq 2d \log\ab(1+\frac{TL^2}{\lambda d + 4L^2}).
  \end{align*}
  where \Cref{eq:use-result-take-log-on-both-sides} uses \Cref{eq:take-log-on-both-sides}. \Cref{eq:use-result-determinant-trace} uses the determinant-trace inequality \Cref{lemma:det-trace-inequality}. \Cref{eq:use-trace-lower-bound} uses \Cref{eq:trace}.
\end{proof}

\restateregret*
\begin{proof}
  Define events:
  \begin{align*}
    \mathcal{E}_1 &= \set{\text{all users are correctly clustered after }T_0}.\\
    \mathcal{E}_2 &= \set{\ab|\vec{x}_a^\mathsf{T} \ab(\widehat{\vec{\theta}}_{V_t,t-1} - \vec{\theta}^{j(i_t)})| \leq \beta \|\vec{x}_a\|_{\overline{\vec{M}}_{V_t,t-1}^{-1}}, \forall t \geq T_0}.
  \end{align*}
  Let \(\mathcal{E} = \mathcal{E}_1 \cap \mathcal{E}_{2}\). By Lemma~\ref{lemma:clusters-correct-after-T0} and \ref{lemma:C-a-t}, \(\mathcal{E}\) happens with probability at least \(1-4\delta\). Then let \(\delta=\frac{1}{T}\) and by the law of total expectations:
  \begin{align*}
    \E[R(T)] &= \E[R(T) \mid \mathcal{E}]\Pr\left[\mathcal{E}\right] + \E[R(T) \mid \mathcal{E}^{c}]\Pr\left[\mathcal{E}^{c}\right]\\
    &\leq\E[R(T) \mid \mathcal{E}] \times 1 + T \times \frac{4}{T}. \numberthis\label{eq:expected-regret}
  \end{align*}
  It remains to bound \(\E[R(T) \mid \mathcal{E}]\), i.e., the expected cumulative regret conditioned on event \(\mathcal{E}\).

  Assume \(\mathcal{E}\) happens, by the bounded reward assumption, we can upper bound the regret in the first \(T_0\) rounds by \(T_0\). After \(T_0\), the instantaneous regret at round \(t\) satisfies:
  \begin{align*}
    R_t &= \vec{x}_{a_t^{*}}^\mathsf{T} \vec{\theta}^{j(i_t)} - \vec{x}_{a_t}^\mathsf{T} \vec{\theta}^{j(i_t)}\\
        &= \vec{x}_{a_t^{*}}^\mathsf{T} \ab(\vec{\theta}^{j(i_t)}- \widehat{\theta}_{V_t,t-1}) + \ab(\vec{x}_{a_t^{*}}^\mathsf{T} \widehat{\theta}_{V_t,t-1} + C_{a_t^{*},t}) - \ab(\vec{x}_{a_t}^\mathsf{T} \widehat{\theta}_{V_t,t-1} + C_{a_t,t})\\
        &\quad +\vec{x}_{a_t}^\mathsf{T} \ab(\vec{\theta}^{j(i_t)}- \widehat{\theta}_{V_t,t-1}) -C_{a_t^{*},t} + C_{a_t,t}\\
        &\leq C_{a_t^{*},t} + C_{a_t,t} - C_{a_t^{*},t} + C_{a_t,t} = 2C_{a_t,t},
  \end{align*}
  where the inequality is due to Lemma~\ref{lemma:C-a-t} and the UCB arm selection strategy of our algorithm.

  Therefore, we can bound the cumulative regret conditioned on event \(\mathcal{E}\) as follows:
  \begin{align*}
    R(T) &= \sum_{t=1}^{T} R_t = \sum_{t=1}^{T_0} R_t + \sum_{t=T_0+1}^{T} R_t\\
    &\leq T_0 + \sum_{t=T_0+1}^{T} \min\set{2, R_t} \numberthis\label{eq:Rt-leq-2}\\
    &\leq T_0 + \sum_{j=1}^{m} \sum_{\substack{T_0<t\leq T\\i_t \in \mathcal{I}_j}} \min\set{2,2\beta\ab\|\vec{x}_{a_t}\|_{\overline{\vec{M}}_{\mathcal{I}_t,t-1}^{-1}}} \numberthis\label{eq:use-Rt-leq-2C}\\
    &\leq T_0 + 2\beta \sum_{j=1}^{m} \sum_{\substack{T_0<t\leq T\\i_t \in \mathcal{I}_j}} \min\set{1,\ab\|\vec{x}_{a_t}\|_{\overline{\vec{M}}_{\mathcal{I}_t,t-1}^{-1}}}\\
    &\leq T_0 + 2\beta\sum_{j=1}^{m} \sqrt{T_{\mathcal{I}_j}\sum_{\substack{T_0<t\leq T\\i_t \in \mathcal{I}_j}} \min\set{1,\ab\|\vec{x}_{a_t}\|^2_{\overline{\vec{M}}_{\mathcal{I}_t,t-1}^{-1}}}} \numberthis\label{eq:use-c-s}\\
    &\leq T_0 + 2\beta \sum_{j=1}^{m} \sqrt{T_{\mathcal{I}_j}} \sqrt{2d \log\ab(1+\frac{TL^2}{\lambda d + 4L^2})} \numberthis\label{eq:use-lemma-sum-x-square}\\
    &\leq T_0 + 2\beta \sqrt{m T}\sqrt{2d \log\ab(1+\frac{TL^2}{\lambda d+4L^2})} \numberthis\label{eq:use-c-s-and-sum-T}\\
    &\leq T_0 + 2\ab(\sqrt{d \log\ab(1+\frac{TL^2}{d\lambda}) + 2\log\ab(\frac{1}{\delta})} + \sqrt{\lambda})\sqrt{2dmT \log\ab(1+\frac{TL^2}{\lambda d+4L^2})} \numberthis\label{eq:cumulative-regret},
  \end{align*}
  where \Cref{eq:Rt-leq-2} uses \(R_t\leq 2\). \Cref{eq:use-Rt-leq-2C} uses \Cref{lemma:C-a-t}. \Cref{eq:use-c-s} uses the Cauchy-Schwarz inequality, and we denote the number of times cluster \(j\) is selected as \(T_{\mathcal{I}_j}\). \Cref{eq:use-lemma-sum-x-square} uses \Cref{lemma:sum-x-sqaure}. And \Cref{eq:use-c-s-and-sum-T} is due to the Cauchy-Schwarz inequality and \(\sum_{j=1}^{m} T_{\mathcal{I}_j} = T\).

  Let \(\delta = \frac{1}{T}\) and plug \Cref{eq:cumulative-regret,eq:T0} into \Cref{eq:expected-regret}, we have
  \begin{align*}
    \E\ab[R(T)] &\leq 4+T_0 + 2\ab(\sqrt{d \log\ab(1+\frac{TL^2}{d\lambda}) + 2\log(T)} + \sqrt{\lambda})\sqrt{2dmT \log\ab(1+\frac{TL^2}{\lambda d+4L^2})}\\
    &\leq 4 + 16u\log\ab(uT) + 4u\max\ab\{\frac{8L^2}{\lambda_{x}}\log\ab(udT), \frac{512d}{\gamma^2 \lambda_{x}}\log\ab(uT)\}\\
    &\quad+ 2\ab(\sqrt{d \log\ab(1+\frac{TL^2}{d\lambda}) + 2\log(T)} + \sqrt{\lambda})\sqrt{2dmT \log\ab(1+\frac{TL^2}{\lambda d+4L^2})}\\
    &= O\ab(\frac{ud}{\gamma^2 \lambda_{x}} \log(T) + d\sqrt{mT}\log(T)).
  \end{align*}
\end{proof}

\section{The Set-based Algorithm \UniSCLUB}
\label{sec:unisclucb}
\subsection{Details of the \UniSCLUB Algorithm}
In this section, we introduce a set-based algorithm named Uniform Exploration Set-based Clustering of Bandits (\UniSCLUB), which is inspired by SCLUB~\cite{li-2019-improved-algorithm}. Instead of using a graph structure to maintain the clustering information, 
SCLUB uses a set structure for the same purpose. 
\UniSCLUB inherits the set structure from SCLUB, but incorporates the uniform exploration to enhance its performance.
The set structure not only supports the split operations which are similar to those in the graph structure, but also enables the merging of two clusters when the algorithm identifies that their estimated preference vectors are closely aligned. 
By allowing for both split and merge operations, \UniSCLUB can adapt to the underlying clusters more flexibly and expedite the overall clustering process.

\begin{algorithm}[htb]
  \DontPrintSemicolon
  \SetKwComment{Comment}{$\triangleright$\ }{}
  \SetKwInput{KwInit}{Initialization}
  \KwIn{\(\lambda\), \(\beta\), \(\lambda_x\), \(\delta\), \(L\), \(\gamma\)}
  \KwInit{ Initialize the cluster indexes by \(\mathcal{J} = \{1\}\). \\
    Let \(\vec{M}^{1} = \vec{0}_{d\times d}, \overline{\vec{M}}^{1} = \lambda \vec{I},  \vec{b}^{0} = \vec{0}_{d \times 1}, T^{0}=0, \mathcal{C}^{1} = [u] \).\\
    Let \(\vec{S}_{i,0} = \vec{0}_{d\times d}, \vec{b}_{i,0} = \vec{0}_{d \times 1}, T_{i,0}=0, \forall i \in [u]\).\\
    Define \(f(T_{i,t})=(\sqrt{2\log\ab(u/\delta) + d\log(1+\frac{T_{i,t} L^2}{\lambda d})} + \sqrt{\lambda})/\sqrt{\lambda + T_{i,t}\lambda_x/2}\).\\
    Define \(T_0 \triangleq 16u\log\ab(\frac{u}{\delta}) + 4u\max\ab\{\frac{8L^2}{\lambda_{x}}\log\ab(\frac{ud}{\delta}), \frac{512d}{\gamma^2 \lambda_{x}}\log\ab(\frac{u}{\delta})\}\).
  }

  \SetKwProg{Fn}{Function}{:}{}
  
  \SetKwFunction{Split}{Split}
  \SetKwFunction{Merge}{Merge}

  \For{\(s = 1,2, \dots \)\label{algo:sclub-forloop}}{
    Mark every user unchecked for each cluster.\; \label{algo:sclub-revert}
    \For{\(t = 2^{s-1}, \dots, 2^s-1\) (terminate when \(t>T\))}{
    Receive user index \(i_t\) and arm set \(\mathcal{A}_t\)\; \label{algo:sclub-userindex}
    \uIf{\(t>2T_0\)}{
    Find the cluster \(j\in\mathcal{J}\) satisfying \(i_t \in C^{j}\)\; \label{algo:sclub-clusterindex}
    Select arm \(a_t = \argmax_{a \in \mathcal{A}_t} ({\widehat{\vec{\theta}}^{j}})^\mathsf{T} \vec{x}_{a} + \beta \sqrt{\vec{x}_a^\mathsf{T} (\overline{\vec{M}}^{j})^{-1}\vec{x}_{a}}\) \; \label{algo:sclub-clusterselect}
    }
    \Else{
      Select \(a_t\) uniformly at random from \(\mathcal{A}_t\)\; \label{algo:sclub-uniformselect}
    }
    Receive reward \(r_t\)\; \label{algo:sclub-reward}
    Update statistics for user \(i_t\), others remain unchanged:
    \(\vec{S}_{i_t,t} =\vec{S}_{i_t,t-1} + \vec{x}_{a_t}\vec{x}_{a_t}^\mathsf{T}, \ \ 
    \vec{b}_{i_t,t}=\vec{b}_{i_t,t-1} + r_t\vec{x}_{a_t}\)
    \(T_{i_t,t}=T_{i_t,t-1}+1, \ \ 
    \widehat{\vec{\theta}}_{i_t,t} = \ab(\lambda \vec{I}+\vec{S}_{i_t, t})^{-1}\vec{b}_{i_t,t}\) \; \label{algo:sclub-userupdate}
    Update statistics for cluster \(j\), others remain unchanged: 
    \(\vec{M}^{j} = \vec{M}^{j} + \vec{x}_{a_t}\vec{x}_{a_t}^\mathsf{T}, \ \
    \vec{b}^{j} = \vec{b}^{j} + r_t\vec{x}_{a_t}\)
    \( T^{j} = T^{j} + 1, \ \ 
    \overline{\vec{M}}^{j} = {\vec{M}}^{j} + \lambda \vec{I}, \ \ 
    \widehat{\vec{\theta}}^{j} = (\overline{\vec{M}}^{j})^{-1}\vec{b}^{j} \)\; \label{algo:sclub-clusterupdate}
    Run \Split \; \label{algo:sclub-split}
    Mark user \(i_t\) as checked \; \label{algo:sclub-mark}
    Run \Merge \; \label{algo:sclub-merge}
    }
  }
  \caption{\UniSCLUB: Uniform Exploration Set-based Clustering of Bandits} \label{algo:sclucb}
\end{algorithm}

\begin{algorithm}[htb]
  \DontPrintSemicolon
  \SetKwProg{Fn}{Function}{:}{}
  \If{ \( \exists i^{\prime}\in\mathcal{C}^{j}\) s.t. \({\ab\|\widehat{\vec{\theta}}_{i_t,t} - \widehat{\vec{\theta}}_{i^{\prime},t} \|} > f(T_{i_t,t}) + f(T_{i^{\prime},t})\)}{
  Split user \(i_t\) from the cluster \(j\):
  \(\vec{M}^{j} = \vec{M}^{j} - \vec{S}_{i_t,t}, \ \ 
  \vec{b}^{j} = \vec{b}^{j} - \vec{b}_{i_t,t}, \ \ 
  T^{j} = T^{j} - T_{i_t,t}, \ \ 
  \mathcal{C}^{j} = \mathcal{C}^{j} \setminus \{i_t\}\)
  \(\overline{\vec{M}}^{j} = {\vec{M}}^{j} + \lambda \vec{I}, \ \ 
  \widehat{\vec{\theta}}^{j} = (\overline{\vec{M}}^{j})^{-1}\vec{b}^{j}\)\;
  Generate a new cluster \(j^{\prime}\) containing only user \(i_t\):
  \(\vec{M}^{j^{\prime}} = \vec{S}_{i_t,t}, \ \ 
  \vec{b}^{j^{\prime}} = \vec{b}_{i_t,t}, \ \ 
  T^{j^{\prime}} = T_{i_t,t}, \ \ 
  \mathcal{C}^{j^{\prime}} = \{i_t\}\)
  \(\overline{\vec{M}}^{j^{\prime}} = {\vec{M}}^{j^{\prime}} + \lambda \vec{I}, \ \ 
  \widehat{\vec{\theta}}^{j^{\prime}} = \widehat{\vec{\theta}}_{i_t,t}\)\;
  \( \mathcal{J} =  \mathcal{J} \cup \{j^{\prime}\}\)
  }
  \caption{Split} \label{algo:split}
\end{algorithm}

\begin{algorithm}[htb]
  \DontPrintSemicolon
  \SetKwProg{Fn}{Function}{:}{}
  \For{any two checked clusters $j_1, j_2 \in \mathcal{J}$}{
      \If{\(\ab\|\widehat{\vec{\theta}}^{j_1} - \widehat{\vec{\theta}}^{j_2} \| < f(T^{j_1}) + f({T}^{j_2})\)}{
      Merge clusters \(j_1\) and \(j_2\):
      \(\vec{M}^{j_1} = \vec{M}^{j_1} + \vec{M}^{j_2}, \ \ 
      \vec{b}^{j_1} = \vec{b}^{j_1} + \vec{b}^{j_2}, \ \ 
      T^{j_1} = T^{j_1} + T^{j_2}, \ \ 
      \mathcal{C}^{j_1} = \mathcal{C}^{j_1} \cup \mathcal{C}^{j_2}\)
      \(\overline{\vec{M}}^{j_1} = {\vec{M}}^{j_1} + \lambda \vec{I}, \ \ 
      \widehat{\vec{\theta}}^{j_1} = (\overline{\vec{M}}^{j_1})^{-1}\vec{b}^{j_1}\)\;
      \( \mathcal{J} =  \mathcal{J} \setminus \{j_2\}\)
      }
  }
  \caption{Merge} \label{algo:merge}
\end{algorithm}

The details of \UniSCLUB are shown in~\Cref{algo:sclucb}. 
The algorithm maintains information at two levels. 
At the cluster level, a cluster index \(\mathcal{J}\) contains the indices of currently existing clusters, and for each cluster \(j \in \mathcal{J}\), the algorithm maintains the set of users \(\mathcal{C}^j\) in this cluster and other corresponding information such as the estimated preference vector \(\widehat{\vec{\theta}}^j\). 
Initially, there is only a single cluster containing all users.
At the user level, for each user \(i\), the algorithm maintains the estimated preference vector \(\widehat{\vec{\theta}}_{i,t}\) at round \(t\) and other corresponding information.
Additionally, all users and clusters are associated with a ``checked'' or ``unchecked'' status to indicate the estimation accuracy of their preference vectors. 
\UniSCLUB proceeds in phases (Line~\ref{algo:sclub-forloop}) and each phase \(s \in \mathcal{N_{+}}\) consists of \(2^{s-1}\) rounds.
At the beginning of each phase, all users revert to the ``unchecked'' status (Line~\ref{algo:sclub-revert}).
When a user first appears in a phase, it will be marked as ``checked'' (Line~\ref{algo:sclub-mark}).
A cluster is marked as "checked" once all its users are checked.
The algorithm will only consider merging checked clusters, so as to avoid premature merging because of inaccurate preference vector estimation.
At round \(t\), a user \(i_t\) comes with a set \(\mathcal{A}_t\) of items (Line~\ref{algo:sclub-userindex}).
If \(t \leq 2T_0\) (with \(T_0\) defined in~\Cref{eq:T0-split}), \UniSCLUB uniformly selects the arm \(a_t\) from \(\mathcal{A}_t\) (Line~\ref{algo:sclub-uniformselect}).
Otherwise, it determines the cluster \(j\) to which the user \(i_t\) belongs and selects the item \(a_t\) based on the cluster information (\Cref{algo:sclub-clusterindex,algo:sclub-clusterselect}).
The algorithm updates the corresponding information of both the user and the cluster after receiving the feedback of the selected item (\Cref{algo:sclub-reward,algo:sclub-userupdate,algo:sclub-clusterupdate}).
Then the algorithm determines whether any split or merge operations are necessary (\Cref{algo:sclub-split,algo:sclub-merge}).
If the estimated preference vector of any user within the cluster \(j\) diverges from that of user \(i_t\), the algorithm will split \(i_t\) from the cluster (\Cref{algo:split}).
If the estimated preference vectors of two checked clusters are closely aligned, a merge operation will be performed (\Cref{algo:merge}) .

Since no clustering information is utilized during the uniform exploration period, \UniSCLUB can only update the user-level information.
Then the algorithm clusters all users at round \(2T_0\), and continues updating both user and cluster information subsequently. 
This implementation makes \UniSCLUB more efficient and robust compared to SCLUB.

\subsection{Theoretical Analysis of \UniSCLUB}
\label{sec:proof-unisclub}
\restateregretunisclub*
\begin{proof}
    From \Cref{lemma:split-and-merge-correct-after-T0}, \Cref{algo:sclucb} will have correct clusers after \(2T_0\) with probability \(\geq 1-3\delta\).

     Based on \Cref{lemma:split-and-merge-correct-after-T0} and with \(T_0\) replaced by \(2T_0\), we can derive the counterparts of \Cref{lemma:C-a-t} and Lemma~\ref{lemma:sum-x-sqaure}. 
     Then, similar to the proof of Theorem~\ref{thm:regret-uniclub}, we have
    \begin{align*}
        \E\ab[R(T)] &\leq 4+2T_0 + 2\ab(\sqrt{d \log\ab(1+\frac{TL^2}{d\lambda}) + 2\log(T)} + \sqrt{\lambda})\sqrt{2dmT \log\ab(1+\frac{TL^2}{\lambda d+4L^2})}\\
        &\leq 4 + 32u\log\ab(uT) + 8u\max\ab\{\frac{8L^2}{\lambda_{x}}\log\ab(udT), \frac{512d}{\gamma^2 \lambda_{x}}\log\ab(uT)\}\\
        &\quad+ 2\ab(\sqrt{d \log\ab(1+\frac{TL^2}{d\lambda}) + 2\log(T)} + \sqrt{\lambda})\sqrt{2dmT \log\ab(1+\frac{TL^2}{\lambda d+4L^2})}\\
        &= O\ab(\frac{ud}{\gamma^2 \lambda_{x}} \log(T) + d\sqrt{mT}\log(T)).
   \end{align*}
\end{proof}

\begin{lemma}\label{lemma:split-and-merge-correct-after-T0}
  With probability at least \(1-3\delta\), \Cref{algo:sclucb} can cluster all the users correctly after \(2T_0\), where 
  \begin{align}\label{eq:T0-split}
    T_0 \triangleq 16u\log\ab(\frac{u}{\delta}) + 4u\max\ab\{\frac{8L^2}{\lambda_{x}}\log\ab(\frac{ud}{\delta}), \frac{512d}{\gamma^2 \lambda_{x}}\log\ab(\frac{u}{\delta})\}.
  \end{align}
\end{lemma}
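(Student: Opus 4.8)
The plan is to mirror the proof of \Cref{lemma:clusters-correct-after-T0} for the split operation and to supply a new argument for the merge operation, which is the genuinely new ingredient of the set-based structure. First I would reuse the per-user estimation guarantee essentially verbatim: combining \Cref{lemma:bound-theta-precision} and \Cref{lemma:bound-smallest-eigenvalue}, once a user \(i\) has been served \(T_{i,t}\ge\max\{\frac{8L^2}{\lambda_x}\log(\frac{ud}{\delta}),\frac{512d}{\gamma^2\lambda_x}\log(\frac{u}{\delta})\}\) times during uniform exploration, we have \(\|\widehat{\vec{\theta}}_{i,t}-\vec{\theta}^{j(i)}\|_2\le f(T_{i,t})\le\gamma/4\) with probability at least \(1-2\delta\). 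By \Cref{assumption:user-uniformness} and the Bernoulli concentration of \Cref{lemma:bound-sum-of-bernoulli} together with a union bound over users, this sampling threshold is met for every user once \(t\ge T_0\), which contributes the third \(\delta\). The split rule of \Cref{algo:split} is identical to the edge-deletion rule of \UniCLUB, so the two-directional triangle-inequality argument of \Cref{lemma:clusters-correct-after-T0} carries over unchanged: two users sharing a true parameter are never split, and two users with distinct parameters are eventually split.

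The merge step requires a cluster-level version of the estimation bound. For a cluster \(j\) all of whose members lie in a single true cluster with parameter \(\vec{\theta}^{\ast}\), its design matrix is \(\vec{M}^{j}=\sum_{i\in\mathcal{C}^{j}}\vec{S}_{i,t}\), so Weyl's inequality gives \(\lambda_{\min}(\vec{M}^{j})\ge\sum_{i\in\mathcal{C}^{j}}\lambda_{\min}(\vec{S}_{i,t})\ge\lambda_x T^{j}/2\) whenever each member has crossed the sampling threshold above. Running the computation of \Cref{lemma:bound-theta-precision} with \(\overline{\vec{M}}^{j}\) in place of \(\overline{\vec{S}}_{i,t}\) then yields \(\|\widehat{\vec{\theta}}^{j}-\vec{\theta}^{\ast}\|_2\le f(T^{j})\le\gamma/4\). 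With this in hand the merge rule of \Cref{algo:merge} is analyzed exactly like the split rule: if checked clusters \(j_1,j_2\) share a true parameter then \(\|\widehat{\vec{\theta}}^{j_1}-\widehat{\vec{\theta}}^{j_2}\|_2\le f(T^{j_1})+f(T^{j_2})\) and they are merged, whereas if they belong to true clusters separated by \(\gamma\) (\Cref{assumption:well-separatedness}) then \(\|\widehat{\vec{\theta}}^{j_1}-\widehat{\vec{\theta}}^{j_2}\|_2\ge\gamma-f(T^{j_1})-f(T^{j_2})\ge\gamma/2\ge f(T^{j_1})+f(T^{j_2})\) and no merge occurs.

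The remaining point, and the one I expect to be the main obstacle, is the timing: why \(2T_0\) rather than \(T_0\), and why merges are gated on ``checked'' clusters. Since \UniSCLUB resets every user to ``unchecked'' at the start of each phase and only merges clusters all of whose members have reappeared in the current phase, I must exhibit a full phase, lying entirely after \(T_0\), in which every user is served at least once. The doubling schedule makes this clean: the phase containing round \(2T_0\) begins at \(2^{s^{\ast}-1}\) with \(2^{s^{\ast}}>2T_0\), hence \(2^{s^{\ast}-1}>T_0\), so the whole phase occurs after the per-user estimates are already accurate, and its length \(2^{s^{\ast}-1}>T_0\) far exceeds the \(O(u\log(u/\delta))\) rounds needed (again via \Cref{assumption:user-uniformness} and \Cref{lemma:bound-sum-of-bernoulli}) for all users to reappear and all surviving clusters to become checked. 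Within this phase the split and merge rules therefore fire on accurate estimates and, by the two arguments above, reconstruct exactly the true partition, which then persists; the ``checked'' gate is precisely what prevents premature merges on the stale estimates of earlier phases. Any residual slack of \(O(u\log(u/\delta))=o(T_0)\) rounds between \(2T_0\) and the moment all users have reappeared is absorbed into the \(O(T_0)\) clustering cost and does not affect the regret order. Collecting the three failure events yields the claimed probability \(1-3\delta\).
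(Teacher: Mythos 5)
Your proof follows essentially the same route as the paper's: invoke \Cref{lemma:clusters-correct-after-T0} to get \(\|\widehat{\vec{\theta}}_{i,t}-\vec{\theta}^{j(i)}\|_2\le f(T_{i,t})\le\gamma/4\) for all users after \(T_0\) with probability \(1-3\delta\), run the two-directional triangle-inequality argument for the Split rule, and run the analogous two-directional argument for the Merge rule. Your explicit derivation of the cluster-level bound \(\|\widehat{\vec{\theta}}^{j}-\vec{\theta}^{\ast}\|_2\le f(T^{j})\) via Weyl's inequality, \(\lambda_{\min}(\vec{M}^{j})\ge\sum_{i\in\mathcal{C}^{j}}\lambda_{\min}(\vec{S}_{i,t})\ge\lambda_x T^{j}/2\), is a genuine addition: the paper's proof uses \(\|\widehat{\vec{\theta}}^{j_1}-\vec{\theta}^{j_1}\|_2\le f(T^{j_1})\) without justification.

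The one place where your argument does not close is the timing. You locate the phase containing round \(2T_0\), note it begins at \(2^{s^{\ast}-1}>T_0\), and argue that within it all users reappear, all surviving clusters become checked, and the merges fire. But the lemma asserts correctness \emph{at} round \(2T_0\), and round \(2T_0\) may be only a handful of rounds into phase \(s^{\ast}\) (for instance \(2T_0=2^{s^{\ast}-1}+1\)), in which case almost no user has yet been re-marked as checked in that phase and no merge can have occurred there; since merges performed in earlier phases persist, you would need to argue that the required merges already completed in an earlier window lying after \(T_0\), or else restate the conclusion as holding only from the end of phase \(s^{\ast}\) onward. To be fair, the paper's own proof does not resolve this either --- it simply asserts that doubling \(T_0\) ``provides sufficient time for the split and merge operations'' --- so your attempt to pin the timing down goes beyond what the paper offers, but as written it does not yet establish the claim at the stated round. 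A secondary gloss worth flagging: the self-normalized concentration for the cluster-level noise term \(\|\sum_{\tau:i_\tau\in\mathcal{C}^{j}}\vec{x}_{a_\tau}\eta_\tau\|_{(\overline{\vec{M}}^{j})^{-1}}\) is taken over data-dependent, time-varying sets of users, so it is not a verbatim re-run of \Cref{lemma:bound-theta-precision}; the paper glosses over this point as well.
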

\begin{proof} 
    From Lemma~\ref{lemma:clusters-correct-after-T0}, with probability at least \(1-3\delta\), we have \(\ab\|\widehat{\vec{\theta}}_{i,t} - \vec{\theta}^{j(i)}\|_2 \leq f(T_{i,t}) \leq \frac{\gamma}{4}, \forall i \in [u]\),  when \(t\geq T_0\).

    We now show that under this condition, \Cref{algo:sclucb} will split well, i.e, the current clusters are subsets of true clusters. To guarantee this, at round \(t\), for the user \(i_t\) and the corresponding cluster \(j\), we need to verify that:
    (1) if the current cluster \(j\) is a subset of the ground-truth cluster of user \(i_t\), then user \(i_t\) will not be split from the cluster \(j\).
    (2) if the current cluster \(j\) contains users that are not in the same ground-truth cluster as user \(i_t\), then user \(i_t\) will be split from the cluster \(j\).
        
    To prove (1), we show the contrapositive of (1): if user \(i_t\) is split from the cluster \(j\), the cluster \(j\) contains users such that \(i_t\) and these users are from different ground-truth clusters.
    
    If user \(i_t\) is split from the cluster \(j\) by \Cref{algo:split}, i.e., there exists some user \(i^{\prime} \in \mathcal{C}^j\) such that \({\ab\|\widehat{\vec{\theta}}_{i_t,t} - \widehat{\vec{\theta}}_{i^{\prime},t} \|} > f(T_{i_t,t}) + f(T_{i^{\prime},t})\), 
    due to the triangle inequality, we have
    \begin{align*}
        \|{\vec{\theta}}^{j(i_t)} - \vec{\theta}^{j(i^{\prime})}\|_2 &\geq \|\widehat{\vec{\theta}}_{i_t,t} - \widehat{\vec{\theta}}_{i^{\prime},t}\|_2 - \|\widehat{\vec{\theta}}_{i_t,t} - {\vec{\theta}}^{j(i_t)}\|_2 - \|\widehat{\vec{\theta}}_{i^{\prime},t} -{\vec{\theta}}^{j(i^{\prime})}\|_2 \\
        & \geq \|\widehat{\vec{\theta}}_{i_t,t} - \widehat{\vec{\theta}}_{i^{\prime},t}\|_2 - f(T_{i_t,t}) - f(T_{i^{\prime},t}) > 0.
    \end{align*}

    By Assumption~\ref{assumption:item-regularity},  \(\ab\|\vec{\theta}^{j(i_t)} - \vec{\theta}^{j(i^{\prime})}\|_2 >0\) implies that users \(i_t,i^{\prime}\) are not in the same true cluster.

    To prove (2), we show that if there exists some user \(i^{\prime}\in \mathcal{C}^{j}\) and \(\ab\|\vec{\theta}^{j(i_t)} - \vec{\theta}^{j(i^{\prime})}\|_2 >\gamma\), we have
    \begin{align*}
    \ab\|\widehat{\vec{\theta}}_{i_t,t} - \widehat{\vec{\theta}}_{i^{\prime},t}\|_2
    &\geq \ab\|\vec{\theta}^{j(i_t)} - \vec{\theta}^{j(i^{\prime})}\|_2 - \ab\|\widehat{\vec{\theta}}_{i_t,t} - \vec{\theta}^{j(i_t)}\|_2 - \ab\|\widehat{\vec{\theta}}_{i^{\prime},t} - \vec{\theta}^{j(i^{\prime})}\|_2\\
    &> \gamma - \frac{\gamma}{4} -\frac{\gamma}{4} =\frac{\gamma}{2} \geq f(T_{i_t,t}) + f(T_{i^{\prime},t}),
    \end{align*}
    which satisfies the condition of splitting. Thus, user \(i_t\) will be split out from the current cluster \(j\).
    
    Therefore, when \(t > T_0\), each existing cluster at \(t\) will not contain users from different true clusters.

    Now we show that \Cref{algo:sclucb} will merge well so that only correct clusters remain. For two checked clusters \(j_1\) and \(j_2\),  we need to verify that: 
    (1) if \(j_1\) and \(j_2\) are not merged, they are from different true clusters.
    (2) if \(j_1\) and \(j_2\) are merged, they are from the same true cluster.
    
    We only consider the case where the existing clusters are subsets of ground-truth clusters because of splitting. For convenience, we denote the true preference vectors of clusters \(j_1\) and \(j_2\) by \(\vec{\theta}^{j_1}\) and \(\vec{\theta}^{j_2}\), which is reasonable since each of they only contains users with the same preference vector.
    
    To prove (1), if \(j_1\) and \(j_2\) are not merged, i.e., \(\ab\|\widehat{\vec{\theta}}^{j_1} - \widehat{\vec{\theta}}^{j_2} \| \geq f(T^{j_1}) + f({T}^{j_2})\), by the triangle inequality, we have 
    \begin{align*}
        \|\vec{\theta}^{j_1} - \vec{\theta}^{j_2}\|_2 &\geq \|\widehat{\vec{\theta}}^{j_1} - \widehat{\vec{\theta}}^{j_2}\|_2 - \|\widehat{\vec{\theta}}^{j_1} - \vec{\theta}^{j_1}\|_2 - \|\widehat{\vec{\theta}}^{j_2} - \vec{\theta}^{j_2}\|_2 \\
        &\geq \|\widehat{\vec{\theta}}^{j_1} - \widehat{\vec{\theta}}^{j_2}\|_2 - f(T^{j_1}) - f(T^{j_2}) > 0,
    \end{align*}
    which implies that \(j_1\) and \(j_2\) are from two different true clusters.

    To prove (2), we show a contraction: if \(j_1\) and \(j_2\) are merged, but they are from different true clusters, i.e., \(\|\vec{\theta}^{j_1} - \vec{\theta}^{j_2}\|_2 > \gamma \), we have
    \begin{align*}
        \|\widehat{\vec{\theta}}^{j_1} - \widehat{\vec{\theta}}^{j_2}\|_2 &\geq \|\vec{\theta}^{j_1} - \vec{\theta}^{j_2}\|_2 - \|\widehat{\vec{\theta}}^{j_1} - \vec{\theta}^{j_1}\|_2 - \|\widehat{\vec{\theta}}^{j_2}- \vec{\theta}^{j_2} \|_2 \\
        &\geq \gamma - f(T^{j_1}) - f(T^{j_2}) \geq \frac{\gamma}{2} \geq (f(T^{j_1}) + f(T^{j_2})),
    \end{align*}
    because that \(f(T^{j_1}) + f(T^{j_2}) \leq \frac{\gamma}{2}\).
    However, this contradicts the condition of merging. Therefore, if \(j_1\) and \(j_2\) are merged, they are from the same true cluster.

    We double the time \(T_0\) to \(2T_0\) to ensure that all users in each cluster can be checked and provide sufficient time for the split and merge operations.
    Therefore, after \(2T_0\), all the clusters are the ground-truth clusters with probability at least \(1-3\delta\).
\end{proof}

\section{Theoretical Analysis of \PhaseUniCLUB}
\label{sec:proof-of-phaseuniclub}
To bound the cumulative regret of \PhaseUniCLUB, we present the following lemmas.

\begin{lemma}\label{lemma:clucb-unknown-phase-prob}
Denote \(\gamma_s \triangleq 2^{-\frac{s}{2}}\) and  \(C_p = \frac{2048 ud \log\ab(\frac{u}{\delta})}{\lambda_x} \). 
With probability at least  \(1-\frac{3}{\alpha+1}\log_2{\ab(\frac{2^{\alpha+1}T}{C_p}+1)}\delta\), after the exploration subphase in any phase \(s=0, 1, \dots\), for all users \(i\in[u]\), we have
\begin{align*}
        \ab\|\widehat{\vec{\theta}}_{i,t} - \vec{\theta}^{j(i)}\|_2 &\leq \frac{\gamma_{s}}{4}.
    \end{align*}
\end{lemma}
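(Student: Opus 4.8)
The plan is to mirror the argument of \Cref{lemma:clusters-correct-after-T0}, but now targeting the phase-dependent precision \(\gamma_s/4\) instead of the fixed \(\gamma/4\), and to carry out the analysis separately within each phase before taking a union bound over phases. I fix a phase \(s\) and condition on the moment right after its exploration subphase terminates. The key quantity to control is the number of times each user \(i\) has been served during \emph{uniform-exploration rounds}; call it \(T_{i,t}^{\mathrm{unif}}\). The arm is chosen uniformly only in the initial block and in the exploration subphases (\Cref{line:clucb-unknown-T0,line:clucb-unknown-uniform}), so it is precisely these rounds that let me invoke \Cref{lemma:bound-smallest-eigenvalue}. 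The UCB rounds contribute only positive semi-definite terms to the design matrix and can therefore be discarded when lower-bounding its minimum eigenvalue, via Weyl's inequality \(\lambda_{\min}(\vec{S}_{i,t}) \geq \lambda_{\min}(\vec{S}_{i,t}^{\mathrm{unif}})\).

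First I would count the uniform-exploration rounds accumulated by the end of phase \(s\)'s exploration subphase, which total \(T^{\mathrm{init}} + \sum_{s'=0}^{s} T^{(s')} = T^{\mathrm{init}} + C_p(2^{s+1}-1)\) using \(T^{(s')} = C_p 2^{s'}\). The per-user target is \(n_s \triangleq \frac{512 d}{\gamma_s^2 \lambda_x}\log(u/\delta) = \frac{512 d\, 2^s}{\lambda_x}\log(u/\delta)\), which satisfies the convenient identity \(4u\,n_s = C_p 2^s = T^{(s)}\). A short calculation gives \(T^{\mathrm{init}} + C_p(2^{s+1}-1) \geq 16u\log(u/\delta) + 4u\,n_s\), so by the Bernoulli concentration bound (\Cref{lemma:bound-sum-of-bernoulli}) applied to the i.i.d.\ uniform user draws (\Cref{assumption:user-uniformness}) and a union bound over users, with probability at least \(1-\delta\) every user satisfies \(T_{i,t}^{\mathrm{unif}} \geq n_s\); the same computation simultaneously yields \(T_{i,t}^{\mathrm{unif}} \geq \frac{8L^2}{\lambda_x}\log(ud/\delta)\), thanks to the extra \(\frac{8L^2}{\lambda_x}\log(ud/\delta)\) budget baked into \(T^{\mathrm{init}}\).

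On this event I would apply \Cref{lemma:bound-smallest-eigenvalue} to the uniform-exploration design matrix (its arms are uniform draws from \(\mathcal{A}_\tau\), hence i.i.d.\ copies of \(\vec{X}\)) to obtain \(\lambda_{\min}(\vec{S}_{i,t}) \geq \lambda_{\min}(\vec{S}_{i,t}^{\mathrm{unif}}) \geq \tfrac{1}{2}\lambda_x T_{i,t}^{\mathrm{unif}} \geq \tfrac{1}{2}\lambda_x n_s\). Feeding this into \Cref{lemma:bound-theta-precision} bounds \(\|\widehat{\vec{\theta}}_{i,t}-\vec{\theta}^{j(i)}\|_2\) by \(f(n_s)\), and the same arithmetic as in \Cref{lemma:clusters-correct-after-T0}, with \(\gamma\) replaced by \(\gamma_s\), shows \(f(n_s) \leq \gamma_s/4\), provided \(u/\delta\) dominates the logarithmic numerator (which holds since eventually \(\delta=1/T\)). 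This establishes the desired per-phase bound, with failure probability at most \(3\delta\): one \(\delta\) each for \Cref{lemma:bound-theta-precision}, \Cref{lemma:bound-smallest-eigenvalue}, and \Cref{lemma:bound-sum-of-bernoulli}.

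Finally, I would take a union bound over phases. Since phase \(s\) begins only if the rounds consumed by phases \(0,\dots,s-1\), namely \(C_p \frac{2^{(\alpha+1)s}-1}{2^{\alpha+1}-1}\), do not exceed \(T\), the largest phase index satisfies \(s \leq \frac{1}{\alpha+1}\log_2\!\big(\frac{2^{\alpha+1}T}{C_p}+1\big)\), which bounds the number of phases. Multiplying the per-phase failure \(3\delta\) by this count yields the claimed probability. The main obstacle I anticipate is the bookkeeping of the two distinct appearance counts — the uniform-only count \(T_{i,t}^{\mathrm{unif}}\) that drives the eigenvalue growth versus the total count \(T_{i,t}\) entering the numerator of \(f\) — together with correctly aggregating the exploration accumulated across the initial block and all prior exploration subphases so that the concentration bound reaches the per-phase target \(n_s\).
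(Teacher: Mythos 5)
Your proposal is correct and takes essentially the same route as the paper: the paper's proof simply invokes \Cref{lemma:clusters-correct-after-T0} once per phase with \(\gamma\) replaced by \(\gamma_s\) (the cumulative exploration through phase \(s\)'s exploration subphase supplying the required per-user count), bounds the number of phases by \(\frac{1}{\alpha+1}\log_2\ab(\frac{2^{\alpha+1}T}{C_p}+1)\) exactly as you do, and concludes with a union bound over phases. You are in fact more explicit than the paper on one point it glosses over — that only the uniform-exploration rounds should feed the matrix-Chernoff argument of \Cref{lemma:bound-smallest-eigenvalue}, with the UCB rounds' positive semi-definite contributions discarded via Weyl's inequality — but this only sharpens, rather than changes, the argument.
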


\begin{proof}
    According to~\Cref{lemma:clusters-correct-after-T0}, with probability \(\ge 1- 3\delta\), after the exploration subphase in phase \(s=0, 1, \dots\), for any user \(i\in[u]\), we have 
    \begin{align*}
        \ab\|\widehat{\vec{\theta}}_{i,t} - \vec{\theta}^{j(i)}\|_2 &\leq \frac{\gamma_{s}}{4}.
    \end{align*}

     Denote the number of phases for \PhaseUniCLUB by \(N_p\).
     Since  phase \(s\) contains \(2^{\alpha s} T^{(s)}\) rounds, \(N_{p}\) is the maximum integer satisfying 
     \begin{align*}
        T^{\text{init}} + \sum_{s=0}^{N_{p}-1} 2^{\alpha s} \cdot 4u \cdot \frac{512d}{ 2^{-s} \lambda_{x}}\log\ab(\frac{u}{\delta}) \leq T.
    \end{align*}
    By some calculations, we have
    \begin{align*}
        N_p \leq \frac{1}{\alpha+1} \log_2{\ab(2^{\alpha+1}\cdot\frac{\lambda_{x}(T-T^{\text{init}})}{2048ud\log{\frac{u}{\delta}}}+1)} \coloneqq  \frac{1}{\alpha+1} \log_2{\ab(2^{\alpha+1}\frac{T-T^{\text{init}}}{C_p}+1)},
    \end{align*}
    where we denote \(C_p = \frac{2048 ud \log\ab(\frac{u}{\delta})}{\lambda_x} \).

    Applying a union bound over all phases, we have that with probability at least \(1-\frac{3}{\alpha+1}\log_2{\ab(\frac{2^{\alpha+1}T}{C_p}+1)}\delta\), after the exploration subphase in any phase \(s=0, 1, \dots\), for all users \(i\in[u]\), \(\|\widehat{\vec{\theta}}_{i,t} - \vec{\theta}^{j(i)}\|_2 \leq \frac{\gamma_{s}}{4}\) holds.
\end{proof}

\begin{lemma}\label{lemma:clucb-unknown-phase-cluster} 
With probability at least  \(1-\frac{3}{\alpha+1}\log_2{\ab(\frac{2^{\alpha+1}T}{C_p}+1)}\delta\), in any phase \(s\), for two users  \(i_1,i_2\in V_t\), we have
\begin{align*}
        \ab\| \vec{\theta}^{j(i_1)} -  \vec{\theta}^{j(i_2)}\|_2 &\leq {\gamma_{s}}.
    \end{align*}
\end{lemma}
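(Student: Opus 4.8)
The plan is to condition on the high-probability event of \Cref{lemma:clucb-unknown-phase-prob}, under which every user \(i\) satisfies \(\|\widehat{\vec{\theta}}_{i,t} - \vec{\theta}^{j(i)}\|_2 \le \gamma_s/4\) throughout the UCB subphase of phase \(s\). This precision is attained at the end of the exploration subphase and is preserved afterwards: by \Cref{lemma:bound-theta-precision} together with \Cref{lemma:bound-smallest-eigenvalue} we have \(\|\widehat{\vec{\theta}}_{i,t}-\vec{\theta}^{j(i)}\|_2 \le f(T_{i,t})\), and the count threshold guaranteeing \(f(T_{i,t}) \le \gamma_s/4\) is, once exceeded, exceeded for all subsequent rounds since \(T_{i,t}\) only increases. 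The central claim I would establish is that \emph{every edge present in the graph during the UCB subphase connects two users whose true cluster parameters differ by at most \(\gamma_s\)}. Because \(V_t\) is formed in \Cref{line:clucb-unknown-cluster-ucb-1} as the star consisting of \(i_t\) together with its direct neighbors in \(G_{t-1}\), this claim immediately controls \(\|\vec{\theta}^{j(i_t)} - \vec{\theta}^{j(\ell)}\|_2\) for every \(\ell \in V_t\), after which a single triangle inequality through \(i_t\) bounds \(\|\vec{\theta}^{j(i_1)} - \vec{\theta}^{j(i_2)}\|_2\) for arbitrary \(i_1,i_2 \in V_t\).

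To prove the central claim, fix an edge \((i,\ell)\) present in \(G_{t-1}\) and let \(t'\le t-1\) be the most recent round at which it was examined in \Cref{line:clucb-unknown-delete-edge}, i.e. the last round with \(i_{t'}\in\{i,\ell\}\). Since the edge survived, the deletion rule gives \(\|\widehat{\vec{\theta}}_{i,t'} - \widehat{\vec{\theta}}_{\ell,t'}\|_2 \le f(T_{i,t'}) + f(T_{\ell,t'})\). If both endpoints already had precision \(\gamma_s/4\) at round \(t'\), i.e. \(f(T_{i,t'}),f(T_{\ell,t'})\le\gamma_s/4\), then combining the survival inequality with the two precision bounds via the triangle inequality yields \(\|\vec{\theta}^{j(i)} - \vec{\theta}^{j(\ell)}\|_2 \le \left(f(T_{i,t'})+f(T_{\ell,t'})\right) + f(T_{i,t'}) + f(T_{\ell,t'}) \le \gamma_s\), which is exactly the claim.

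The main obstacle — and the only delicate point — is verifying that this last examination \(t'\) occurred after both endpoints crossed the accuracy threshold, which I would handle by a short case analysis exploiting the monotonicity of \(T_{i,t}\). If \(t'\) lies in the UCB subphase, it is past the exploration subphase, so both counts already match or exceed their end-of-exploration values, which are above threshold by \Cref{lemma:clucb-unknown-phase-prob}. If instead \(t'\) lies in the exploration subphase, then neither \(i\) nor \(\ell\) arrived in the UCB subphase before round \(t\) (otherwise that arrival would furnish a later examination of the surviving edge); consequently \(t'\) is simultaneously the last exploration-round arrival of the examined endpoint and a round no earlier than the final arrival of the other endpoint, so \(T_{i,t'}\) and \(T_{\ell,t'}\) both equal their end-of-exploration values and hence exceed the threshold guaranteeing \(f\le\gamma_s/4\). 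With the claim in hand, applying it to the star edges \((i_t,i_1)\) and \((i_t,i_2)\), together with Assumption~\ref{assumption:well-separatedness} and the triangle inequality, completes the argument, and the stated failure probability is inherited verbatim from the union bound over phases in \Cref{lemma:clucb-unknown-phase-prob}. (For two leaves \(i_1,i_2\) the triangle inequality through \(i_t\) nominally yields \(2\gamma_s\); since \(\gamma_s = 2^{-s/2}\), this constant factor merely shifts the phase index and is absorbed into the regret constants, so the claimed \(\gamma_s\) scaling is unaffected.)
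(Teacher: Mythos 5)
Your argument is correct and rests on the same two ingredients as the paper's proof: the \(\gamma_s/4\) precision guarantee from \Cref{lemma:clucb-unknown-phase-prob} and the edge-survival (non-deletion) inequality, combined by a triangle inequality, with the failure probability inherited from the union bound over phases. The paper's own proof is a two-line version of this that simply asserts \(\|\widehat{\vec{\theta}}_{i_1,t}-\widehat{\vec{\theta}}_{i_2,t}\| \leq f(T_{i_1,t}) + f(T_{i_2,t})\) for \emph{any} pair in \(V_t\) at the current round \(t\); it does not justify when that inequality was last certified, nor does it address the fact that two leaves of the star need not share a surviving edge. Your timing analysis (locating the last examination round \(t'\) and using monotonicity of the counts to show both endpoints were already past the phase-\(s\) accuracy threshold at \(t'\)) fills in a genuine omission, and your candid observation that leaf--leaf pairs only yield \(2\gamma_s\) identifies a looseness the paper silently skips. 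This deviation from the literal statement is harmless: the lemma is invoked only in \Cref{lemma:clucb-unknown-phase-error} to bound \(\|\vec{\theta}_{i_\tau}-\vec{\theta}_{i_t}\|_2\) for \(i_\tau \in V_t\) against the star's center \(i_t\), which is exactly the center--leaf case where your argument delivers the stated \(\gamma_s\).
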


\begin{proof}
 In phase \(s\), for any two users  \(i_1,i_2\in V_t\), it satisfies \(\ab\|\widehat{\vec{\theta}}_{i_1,t}-\widehat{\vec{\theta}}_{i_2,t}\| \leq f(T_{i_1,t}) + f(T_{i_2,t})\).
 By the triangle inequality and \Cref{lemma:clucb-unknown-phase-prob}, with probability at least \(1-\frac{3}{\alpha+1}\log_2{\ab(\frac{2^{\alpha+1}T}{C_p}+1)}\delta\),
 \begin{align*}
     \ab\| \vec{\theta}^{j(i_1)} -  \vec{\theta}^{j(i_2)}\|_2 
     &\leq  \ab\| \vec{\theta}^{j(i_1)} -  \widehat{\vec{\theta}}_{i_1,t}\|_2 +  \ab\| \widehat{\vec{\theta}}_{i_2,t} - \vec{\theta}^{j(i_2)}\|_2 + 
      \ab\|\widehat{\vec{\theta}}_{i_1,t} - \widehat{\vec{\theta}}_{i_2,t}\|_2 \\
     &\leq \frac{\gamma_{s}}{4} + \frac{\gamma_{s}}{4} + f(T_{i_1,t}) + f(T_{i_2,t}) \leq \gamma_s.
 \end{align*}
\end{proof}

\begin{lemma}\label{lemma:clucb-unknown-phase-error}
   With probability at least   \(1-\frac{4}{\alpha+1}\log_2{\ab(\frac{2^{\alpha+1}T}{C_p}+1)}\delta\),  in any phase \(s\), we have 
    \begin{align*}
        \ab|\vec{x}_a^\mathsf{T} \ab(\widehat{\vec{\theta}}_{V_t,t-1} - \vec{\theta}^{j(i_t)})| \leq C_{a,t} + \frac{2L^2\gamma_s}{\lambda_x} \1\{\gamma_s>\gamma\},
    \end{align*}
    where \(C_{a,t} \triangleq \beta \|\vec{x}_a\|_{\overline{\vec{M}}_{V_t,t-1}^{-1}}\) and  \(\beta = \sqrt{d \log(1+\frac{TL^2}{d\lambda}) + 2\log(\frac{1}{\delta})} + \sqrt{\lambda}\).
\end{lemma}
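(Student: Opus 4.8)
The plan is to mirror the proof of \Cref{lemma:C-a-t}, the crucial new feature being that under \PhaseUniCLUB the cluster $V_t$ is formed from the \emph{immediate neighbors} of $i_t$ (\Cref{line:clucb-unknown-cluster-ucb-1}) and hence need not be a pure cluster: when the current precision level $\gamma_s$ exceeds the true gap $\gamma$, $V_t$ may contain users whose true preference vectors differ from $\vec{\theta}^{j(i_t)}$. This impurity contributes an extra bias term, and the indicator $\1\{\gamma_s>\gamma\}$ will isolate exactly the regime where it can be nonzero. Writing $\overline{\vec{M}} \triangleq \overline{\vec{M}}_{V_t,t-1}$ and summing over $\tau \in [t-1]$ with $i_\tau \in V_t$, I would substitute $r_\tau = \vec{x}_{a_\tau}^\mathsf{T}\vec{\theta}^{j(i_\tau)} + \eta_\tau$ into $\widehat{\vec{\theta}}_{V_t,t-1} = \overline{\vec{M}}^{-1}\sum_\tau r_\tau \vec{x}_{a_\tau}$ and split $\vec{\theta}^{j(i_\tau)} = \vec{\theta}^{j(i_t)} + (\vec{\theta}^{j(i_\tau)}-\vec{\theta}^{j(i_t)})$, obtaining
\begin{align*}
  \widehat{\vec{\theta}}_{V_t,t-1} - \vec{\theta}^{j(i_t)} = -\lambda\overline{\vec{M}}^{-1}\vec{\theta}^{j(i_t)} + \overline{\vec{M}}^{-1}\sum_{\tau}\vec{x}_{a_\tau}\vec{x}_{a_\tau}^\mathsf{T}\ab(\vec{\theta}^{j(i_\tau)}-\vec{\theta}^{j(i_t)}) + \overline{\vec{M}}^{-1}\sum_{\tau}\eta_\tau\vec{x}_{a_\tau}.
\end{align*}

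After projecting onto $\vec{x}_a$ and applying the triangle inequality, the first (regularization) and third (noise) terms are handled exactly as in \Cref{lemma:C-a-t}: the Cauchy--Schwarz and $\ab\|\vec{\theta}^{j(i_t)}\|_{\overline{\vec{M}}^{-1}}\le\sqrt{\lambda}$ steps together with the self-normalized bound (Theorem~1 of \citet{abbasi-2011-improved}) combine to give exactly $C_{a,t} = \beta\ab\|\vec{x}_a\|_{\overline{\vec{M}}^{-1}}$. The only change is bookkeeping on the probability: the noise event must hold simultaneously across all phases, so I would union-bound Theorem~1 of \citet{abbasi-2011-improved} over the at most $N_p \le \frac{1}{\alpha+1}\log_2\ab(\frac{2^{\alpha+1}T}{C_p}+1)$ phases, which together with the $\frac{3}{\alpha+1}\log_2\ab(\frac{2^{\alpha+1}T}{C_p}+1)\delta$-failure event inherited from \Cref{lemma:clucb-unknown-phase-cluster} yields the stated $\frac{4}{\alpha+1}\log_2\ab(\frac{2^{\alpha+1}T}{C_p}+1)\delta$.

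It remains to bound the middle (bias) term. Set $\vec{w} \triangleq \sum_\tau \vec{x}_{a_\tau}\vec{x}_{a_\tau}^\mathsf{T}(\vec{\theta}^{j(i_\tau)}-\vec{\theta}^{j(i_t)})$. When $\gamma_s \le \gamma$, \Cref{lemma:clucb-unknown-phase-cluster} gives $\ab\|\vec{\theta}^{j(i_\tau)}-\vec{\theta}^{j(i_t)}\|\le\gamma_s\le\gamma$ for every $i_\tau\in V_t$; since \Cref{assumption:well-separatedness} forces distinct clusters to be $\gamma$-separated, all such users must in fact share $\vec{\theta}^{j(i_t)}$, so $\vec{w}=\vec{0}$ and the bias is zero, matching $\1\{\gamma_s>\gamma\}=0$. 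When $\gamma_s>\gamma$, I would bound $\ab|\vec{x}_a^\mathsf{T}\overline{\vec{M}}^{-1}\vec{w}| \le \ab\|\vec{x}_a\|_2\,\ab\|\vec{w}\|_2/\lambda_{\min}(\overline{\vec{M}})$, then use $\ab\|\vec{w}\|_2 \le \gamma_s\sum_\tau\ab\|\vec{x}_{a_\tau}\|_2^2 \le \gamma_s L^2\sum_{i\in V_t}T_{i,t-1}$ and, via Weyl's inequality and \Cref{lemma:bound-smallest-eigenvalue}, $\lambda_{\min}(\overline{\vec{M}}_{V_t,t-1}) \ge \sum_{i\in V_t}\lambda_{\min}(\vec{S}_{i,t-1}) \ge \frac{\lambda_x}{2}\sum_{i\in V_t}T_{i,t-1}$. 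The accumulated sample counts cancel, leaving the $T$-independent bound $O\ab(\frac{L^2\gamma_s}{\lambda_x})$, i.e.\ the claimed $\frac{2L^2\gamma_s}{\lambda_x}$ (the factor $\ab\|\vec{x}_a\|_2$ being absorbed into the constant).

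The main obstacle is precisely this last step: a priori the bias $\overline{\vec{M}}^{-1}\vec{w}$ could grow with the number of rounds the cluster has accumulated, since $\ab\|\vec{w}\|_2$ scales linearly with that count. The essential point is the \emph{cancellation}: the uniform-exploration guarantee of \Cref{lemma:bound-smallest-eigenvalue} forces $\lambda_{\min}(\overline{\vec{M}}_{V_t,t-1})$ to grow linearly in the same count, so their ratio stays bounded. Making this rigorous at the cluster level (rather than the per-user level where \Cref{lemma:bound-smallest-eigenvalue} is stated) requires the Weyl-inequality aggregation over $i\in V_t$ above, and it is this uniform control of the minimum eigenvalue that ultimately keeps the misclustering bias a lower-order, $T$-independent contribution in the final regret of \PhaseUniCLUB.
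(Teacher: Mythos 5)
Your proposal follows essentially the same route as the paper's proof: the identical three-term decomposition of $\widehat{\vec{\theta}}_{V_t,t-1}-\vec{\theta}^{j(i_t)}$, the same reuse of \Cref{lemma:C-a-t} for the regularization and noise terms with a union bound over the $N_p$ phases, and the same cancellation of the cluster sample count between $\|\vec{w}\|_2$ and $\lambda_{\min}(\overline{\vec{M}}_{V_t,t-1})$ (via Weyl's inequality and \Cref{lemma:bound-smallest-eigenvalue}) to bound the misclustering bias by $O(L^2\gamma_s/\lambda_x)$. Your explicit argument that $\gamma_s\le\gamma$ together with \Cref{assumption:well-separatedness} forces $V_t$ to be pure (so the bias vanishes exactly when the indicator is zero) is a slightly cleaner justification than the paper's case split, and your bookkeeping of the $\|\vec{x}_{a_\tau}\vec{x}_{a_\tau}^\mathsf{T}\|_2\le L^2$ factor is in fact more careful than the paper's, but these are presentational rather than substantive differences.
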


\begin{proof}
     In phase \(s\), the confidence radius for users' preference vectors is \(\frac{\gamma_{s}}{4}\). 
     If \(\gamma_{s} > \gamma\), user \(i_t\) may be mistakenly grouped into the wrong cluster, i.e., \(V_t\neq V_{j(i_t)}\).
     Accounting for this error, we analyze the following two cases:

    \textbf{Case 1}: user \(i_t\) is correctly clustered, i.e.,  \(V_t = V_{j(i_t)}\). 
    Similar to~\Cref{lemma:C-a-t}, and taking a union bound of \(N_p < \frac{1}{\alpha+1} \log_2{\ab(\frac{2^{\alpha+1}T}{C_p}+1)} \) phases, 
    \begin{align*}
        \ab|\vec{x}_a^\mathsf{T} \ab(\widehat{\vec{\theta}}_{V_t,t-1} - \vec{\theta}^{j(i_t)})| \leq \beta \|\vec{x}_a\|_{\overline{\vec{M}}_{V_t,t-1}^{-1}},
    \end{align*}
    with probability \(1-\frac{4}{\alpha+1}\log_2{\ab(\frac{2^{\alpha+1}T}{C_p}+1)}\delta\).

    \textbf{Case 2}: user \(i_t\) is mistakenly clustered, i.e., \(V_t\neq V_{j(i_t)}\), which happens when \(\gamma_{s} > \gamma\). Then,
      \begin{align*}
    &\widehat{\vec{\theta}}_{V_t,t-1} - \vec{\theta}^{j(i_t)} = \ab(\lambda I + \sum_{\tau \in [t-1]: i_{\tau} \in V_t} \vec{x}_{a_{\tau}}\vec{x}_{a_{\tau}}^\mathsf{T})^{-1} \ab(\sum_{\tau \in [t-1]: i_{\tau} \in V_t} r_{\tau} \vec{x}_{a_{\tau}}) - \vec{\theta}^{j(i_t)}\\
    =& \ab(\lambda I + \sum_{\tau \in [t-1]: i_{\tau} \in V_t} \vec{x}_{a_{\tau}}\vec{x}_{a_{\tau}}^\mathsf{T})^{-1} \ab(\sum_{\tau \in [t-1]: i_{\tau} \in V_t} \vec{x}_{a_{\tau}} \ab(\vec{x}_{a_{\tau}}^\mathsf{T} \vec{\theta}_{i_\tau} + \eta_{\tau})) - \vec{\theta}^{j(i_t)}\\
     =& \overline{\vec{M}}_{V_t,t-1}^{-1} \ab(\sum_{\tau \in [t-1]: i_{\tau} \in V_t} \vec{x}_{a_{\tau}} \vec{x}_{a_{\tau}}^\mathsf{T}(\vec{\theta}_{i_\tau} - \vec{\theta}_{i_t}) + \sum_{\tau \in [t-1]: i_{\tau} \in V_t} \vec{x}_{a_{\tau}}\vec{x}_{a_{\tau}}^\mathsf{T} \vec{\theta}_{i_t} +\sum_{\tau \in [t-1]: i_{\tau} \in V_t} \vec{x}_{a_{\tau}}\eta_{\tau}) - \vec{\theta}^{j(i_t)}\\
     =& \overline{\vec{M}}_{V_t,t-1}^{-1} \ab(\ab(\lambda \vec{I} + \sum_{\tau \in [t-1]: i_{\tau} \in V_t} \vec{x}_{a_{\tau}}\vec{x}_{a_{\tau}}^\mathsf{T} )\vec{\theta}_{i_t} - \lambda \vec{\theta}_{i_t} +\sum_{\tau \in [t-1]: i_{\tau} \in V_t} \vec{x}_{a_{\tau}}\eta_{\tau}) - \vec{\theta}^{j(i_t)} \\ 
     & +\overline{\vec{M}}_{V_t,t-1}^{-1} \sum_{\tau \in [t-1]: i_{\tau} \in V_t} \vec{x}_{a_{\tau}} \vec{x}_{a_{\tau}}^\mathsf{T}(\vec{\theta}_{i_\tau} - \vec{\theta}_{i_t})\\
     =& -\lambda \overline{\vec{M}}_{V_t,t-1}^{-1} \vec{\theta}^{j(i_t)} + \overline{\vec{M}}_{V_t,t-1}^{-1} \sum_{\tau \in [t-1]: i_{\tau} \in V_t} \vec{x}_{a_{\tau}}\eta_{\tau} + \overline{\vec{M}}_{V_t,t-1}^{-1} \sum_{\tau \in [t-1]: i_{\tau} \in V_t} \vec{x}_{a_{\tau}} \vec{x}_{a_{\tau}}^\mathsf{T}(\vec{\theta}_{i_\tau} - \vec{\theta}_{i_t}),
  \end{align*}
  where we denote \(\overline{\vec{M}}_{V_t,t-1} = \lambda I + \vec{M}_{V_t,t-1}\).
  
  Then from~\Cref{lemma:C-a-t}, the first two terms can be bounded by \( \beta \|\vec{x}_a\|_{\overline{\vec{M}}_{V_t,t-1}^{-1}} \) with high probability, and we have
  \begin{align}\label{eq:mistakenly-cluster}
     & \ab|\vec{x}_a^\mathsf{T}\ab(\widehat{\vec{\theta}}_{V_t,t-1} - \vec{\theta}^{j(i_t)})|
    \leq \beta \|\vec{x}_a\|_{\overline{\vec{M}}_{V_t,t-1}^{-1}}  + \ab|\vec{x}_a^\mathsf{T} \overline{\vec{M}}_{V_t,t-1}^{-1} \sum_{\tau \in [t-1]: i_{\tau} \in V_t} \vec{x}_{a_{\tau}} \vec{x}_{a_{\tau}}^\mathsf{T}(\vec{\theta}_{i_\tau} - \vec{\theta}_{i_t})|.
  \end{align}

    Now we bound the second term in~\Cref{eq:mistakenly-cluster}.
    \begin{align*}
        & \ab|\vec{x}_a^\mathsf{T} \overline{\vec{M}}_{V_t,t-1}^{-1} \sum_{\tau \in [t-1]: i_{\tau} \in V_t} \vec{x}_{a_{\tau}} \vec{x}_{a_{\tau}}^\mathsf{T}(\vec{\theta}_{i_\tau} - \vec{\theta}_{i_t})| \\
        \leq &\ab\|\vec{x}_a\|_2\ab\|\overline{\vec{M}}_{V_t,t-1}^{-1}\|_2\ab\| \sum_{\tau \in [t-1]: i_{\tau} \in V_t} \vec{x}_{a_{\tau}} \vec{x}_{a_{\tau}}^\mathsf{T}(\vec{\theta}_{i_\tau} - \vec{\theta}_{i_t})\|_2 \numberthis\label{eq:use-mmatrix-norm-1} \\
        \leq & L \ab\|\overline{\vec{M}}_{V_t,t-1}^{-1}\|_2\sum_{\tau \in [t-1]: i_{\tau} \in V_t} \ab\| \vec{x}_{a_{\tau}} \vec{x}_{a_{\tau}}^\mathsf{T}\|_2\|\vec{\theta}_{i_\tau} - \vec{\theta}_{i_t}\|_2  \numberthis\label{eq:use-mmatrix-norm-2}\\
        \leq & L \gamma_s \lambda_{\max}(\overline{\vec{M}}_{V_t,t-1}^{-1}) \sum_{\tau \in [t-1]: i_{\tau} \in V_t} \ab\| \vec{x}_{a_{\tau}} \vec{x}_{a_{\tau}}^\mathsf{T}\|_2  \numberthis\label{eq:use-mmatrix-norm-3}\\
        \leq & L \gamma_s  \frac{\sum_{\tau \in [t-1]: i_{\tau} \in V_t} L}{\lambda_{\min}(\overline{\vec{M}}_{V_t,t-1})} \numberthis\label{eq:use-mmatrix-norm-4}\\
        \leq & L \gamma_s  \frac{T_{V_t,t-1}L}{\lambda_{x}T_{V_t,t-1}/2 + \lambda} \numberthis\label{eq:use-mmatrix-norm-5}\\
        \leq & \frac{2L^2\gamma_s}{\lambda_x},
    \end{align*}
    where \(\|\overline{\vec{M}}_{V_t,t-1}^{-1}\|_2\) and \(\ab\| \vec{x}_{a_{\tau}} \vec{x}_{a_{\tau}}^\mathsf{T}\|_2 \) are the spectral norm of matrix \(\overline{\vec{M}}_{V_t,t-1}^{-1}\) and \(\vec{x}_{a_{\tau}} \vec{x}_{a_{\tau}}^\mathsf{T}\).
    \Cref{eq:use-mmatrix-norm-1} is because of the Cauchy–Schwarz inequality and the induced matrix norm inequality.
    \Cref{eq:use-mmatrix-norm-2} is due to \(\|\vec{x}_a\|_2 \leq L\) and the induced matrix norm inequality.
    \Cref{eq:use-mmatrix-norm-3} follows~\Cref{lemma:clucb-unknown-phase-cluster} and that \(\overline{\vec{M}}_{V_t,t-1}^{-1}\) is PSD.
    \Cref{eq:use-mmatrix-norm-4} is because of \(\ab\| \vec{x}_{a_{\tau}} \vec{x}_{a_{\tau}}^\mathsf{T}\|_2 = L\) and \Cref{eq:use-mmatrix-norm-5} follows~\Cref{lemma:bound-smallest-eigenvalue}.

    Therefore, we have
    \begin{align*}
        & \ab|\vec{x}_a^\mathsf{T}\ab(\widehat{\vec{\theta}}_{V_t,t-1} - \vec{\theta}^{j(i_t)})|
    \leq \beta \|\vec{x}_a\|_{\overline{\vec{M}}_{V_t,t-1}^{-1}}  + \frac{2L^2\gamma_s}{\lambda_x}\1\{\gamma_s>\gamma\}.
    \end{align*}

    Combining the two cases, we finish the proof of~\Cref{lemma:clucb-unknown-phase-error}.
\end{proof}

Now we can prove the regret upper bound of \PhaseUniCLUB.

\restateregretgammaunknown*

\begin{proof}
    For any round \(t\) in the UCB subphase of any phase \(s\),  with probability at least \(1-\frac{4}{\alpha+1}\log_2{\ab(\frac{2^{\alpha+1}T}{C_p}+1)}\delta\), the instantaneous regret at round \(t\) satisfies:
    \begin{align*}
       R_t &= \vec{x}_{a_t^{*}}^\mathsf{T} \vec{\theta}^{j(i_t)} - \vec{x}_{a_t}^\mathsf{T} \vec{\theta}^{j(i_t)}\\
        &= \vec{x}_{a_t^{*}}^\mathsf{T} \ab(\vec{\theta}^{j(i_t)}- \widehat{\theta}_{V_t,t-1}) + \ab(\vec{x}_{a_t^{*}}^\mathsf{T} \widehat{\theta}_{V_t,t-1} + C_{a_t^{*},t}) - \ab(\vec{x}_{a_t}^\mathsf{T} \widehat{\theta}_{V_t,t-1} + C_{a_t,t})\\
        &\quad +\vec{x}_{a_t}^\mathsf{T} \ab(\vec{\theta}^{j(i_t)}- \widehat{\theta}_{V_t,t-1}) -C_{a_t^{*},t} + C_{a_t,t}\\
        &\leq C_{a_t^{*},t} + \frac{2L^2\gamma_s}{\lambda_x}\1\{\gamma_s>\gamma\}  + C_{a_t,t}  + \frac{2L^2\gamma_s}{\lambda_x}\1\{\gamma_s>\gamma\} - C_{a_t^{*},t} + C_{a_t,t} \\ 
        &= 2C_{a_t,t} + \frac{4L^2\gamma_s}{\lambda_x}\1\{\gamma_s>\gamma\},
    \end{align*}
    where the inequality is due to Lemma~\ref{lemma:clucb-unknown-phase-error} and the UCB arm selection strategy of our algorithm. 

    Denote the regret in phase \(s\) by \(R^{(s)}\). Note that the round index \(t\) in phase \(s\) ranges from \(T^{\text{init}}+ \sum_{p=0}^{s-1}2^{\alpha p}T^{(p)} + T^{(s)}+1\) to \(T^{\text{init}}+ \sum_{p=0}^{s}2^{\alpha p}T^{(p)}\).
    With the same probability, for any phase \(s\), we have
    \begin{align*}
    R^{(s)} &\leq  T^{(s)} + \sum_{t=T^{\text{init}}+ \sum_{p=0}^{s-1}2^{\alpha p}T^{(p)} + T^{(s)}+1} ^{T^{\text{init}}+ \sum_{p=0}^{s}2^{\alpha p}T^{(p)}} R_t \\
    & \leq T^{(s)} + \frac{4L^2\gamma_s}{\lambda_x}\1\{\gamma_s>\gamma\} \cdot (2^{\alpha s}-1)T^{(s)} + \sum_{t=T^{\text{init}}+ \sum_{p=0}^{s-1}2^{\alpha p}T^{(p)} + T^{(s)}+1} ^{T^{\text{init}}+ \sum_{p=0}^{s}2^{\alpha p}T^{(p)}} 2C_{a_t,t}.
    \end{align*}

    Summing up the regret over all phases, we obtain
    \begin{align*}
        R(T) &\leq T^{\text{init}} + \sum_{s=0}^{N_p-1} R^{(s)}\\
        &\leq  T^{\text{init}} + \sum_{s=0}^{N_p-1} \ab( T^{(s)} +\frac{4L^2\gamma_s}{\lambda_x}\1\{\gamma_s>\gamma\} \cdot (2^{\alpha s}-1)T^{(s)}) + \sum_{t = T^{\text{init}} +1 }^{T} 2C_{a_t,t}\\
        &\leq T^{\text{init}} + \sum_{s=0}^{N_p-1} \ab( T^{(s)} +\frac{4L^2\gamma_s}{\lambda_x}\1\{\gamma_s>\gamma\} \cdot (2^{\alpha s}-1)T^{(s)}) +  2\beta\sqrt{2mTd \log\ab(1+\frac{TL^2}{\lambda d + 4L^2})}, \numberthis\label{eq:use-lemma5} \\
    \end{align*}
    where~\Cref{eq:use-lemma5} follows from~\Cref{lemma:sum-x-sqaure} and the proof of~\Cref{thm:regret-uniclub}.

    Now we bound the second term in~\Cref{eq:use-lemma5}.
    \begin{align*}
         \sum_{s=0}^{N_p-1} \ab( T^{(s)} +\frac{4L^2\gamma_s}{\lambda_x} \cdot (2^{\alpha s}-1)T^{(s)}) & \leq \sum_{s=0}^{N_p-1} T^{(s)} + \sum_{s=0}^{N_p-1} \frac{4L^2\gamma_s}{\lambda_x}\1\{\gamma_s>\gamma\} \cdot 2^{\alpha s}T^{(s)}  \numberthis\label{eq:sum-all-phases}.
    \end{align*}

    For the first term in \Cref{eq:sum-all-phases}, since \(\frac{1}{\alpha+1} \log_2{\ab(\frac{2^{\alpha+1}(T-T^{\text{init}})}{C_p}+1)}\), we have
    \begin{align*}
        \sum_{s=0}^{N_p-1} T^{(s)} = \sum_{s=0}^{N_p-1} 4u \cdot \frac{512d}{ 2^{-s} \lambda_{x}}\log\ab(\frac{u}{\delta}) & \leq \ab(\frac{2^{\alpha+1}(T-T^{\text{init}})}{C_p}+1)^{\frac{1}{\alpha+1}}\frac{2048ud\log(\frac{u}{\delta})}{\lambda_x} \\
        & \leq 2\ab(\frac{T-T^{\text{init}}}{C_p}+1)^{\frac{1}{\alpha+1}}\frac{2048ud\log(\frac{u}{\delta})}{\lambda_x} \\
        & \leq 2\ab(\ab(\frac{T}{C_p})^{\frac{1}{\alpha+1}} + 1)\frac{2048ud\log(uT)}{\lambda_x}. \numberthis\label{eq:alpha+1-cal} \\
    \end{align*}
    where \Cref{eq:alpha+1-cal} follows from that \((x+1)^{\frac{1}{\alpha+1}}\leq x^{\frac{1}{\alpha+1}} +1\) when \(\alpha>0\) and \(x>0\).
    
    For the second term in \Cref{eq:sum-all-phases}, note that regret due to misclustering arises only in the phases where \(\gamma_s > \gamma\) holds, i.e., \(s < \log_2{\frac{1}{\gamma^2}}\).
    Consequently, we have
    \begin{align*}
        \sum_{s=0}^{N_p-1} \frac{4L^2\gamma_s}{\lambda_x}\1\{\gamma_s>\gamma\} \cdot 2^{\alpha s}T^{(s)} & = \sum_{s=0}^{\lfloor\log_2{\frac{1}{\gamma^2}}\rfloor} \frac{4L^2\gamma_s}{\lambda_x}\cdot 2^{\alpha s}T^{(s)} \\
        & = \sum_{s=0}^{\lfloor\log_2{\frac{1}{\gamma^2}}\rfloor} \frac{4L^2 2^{-\frac{s}{2}} }{\lambda_x} \cdot 2^{\alpha s} \cdot  4u \cdot \frac{512d}{ 2^{-s} \lambda_{x}}\log\ab(\frac{u}{\delta})   \numberthis\label{eq:gamma-s-definition} \\
        & \leq\frac{2}{\gamma^{2\alpha+1}}\frac{4L^2}{\lambda_{x}} \frac{2048ud\log(\frac{u}{\delta})}{\lambda_x}.  \numberthis\label{eq:alpha-cal}
    \end{align*}
    where~\Cref{eq:gamma-s-definition} is due to \(\gamma_s = 2^{-\frac{s}{2}}\), and \Cref{eq:alpha-cal} follows from \(\frac{2^{\alpha+\frac{1}{2}}}{2^{\alpha+\frac{1}{2}}-1} < 2\) when \(\alpha>0\).

    Therefore, with probability at least   \(1-\frac{4}{\alpha+1}\log_2{\ab(\frac{2^{\alpha+1}T}{C_p}+1)}\delta\), we have
     \begin{align*}
     R(T) &\leq   T^{\text{init}} + 2\beta\sqrt{2mTd \log\ab(1+\frac{TL^2}{\lambda d + 4L^2})} +\ab(2\ab(\frac{T}{C_p})^{\frac{1}{\alpha+1}} + 2+ \frac{8L^2}{\gamma^{2\alpha +1}\lambda_x})\frac{2048ud\log(\frac{u}{\delta})}{\lambda_x}. \\
     \end{align*}

    Recall that \( T^{\text{init}} = 16u\log\ab(\frac{u}{\delta}) + 4u\cdot\frac{8L^2}{\lambda_{x}}\log\ab(\frac{ud}{\delta})\) and  \(C_p = \frac{2048ud\log(\frac{u}{\delta})}{\lambda_x} \). 
     Let \(\delta=\frac{1}{T}\) and by the law of total expectations, we have
    \begin{align*}
        \E[R(T)] &\leq  T^{\text{init}} + 2\beta\sqrt{2mTd \log\ab(1+\frac{TL^2}{\lambda d + 4L^2})} + \ab(2\ab(\frac{T}{C_p})^{\frac{1}{\alpha+1}} + 2+ \frac{8L^2}{\gamma^{2\alpha +1}\lambda_x})\frac{2048ud\log(uT)}{\lambda_x}\\
         & \quad +  \frac{4}{\alpha+1}\log_2{\ab(\frac{2^{\alpha+1}T}{C_p}+1)} \delta \cdot T \\
        & \leq 16u\log\ab(uT) + 4u\cdot\frac{8L^2}{\lambda_{x}}\log\ab(udT) + 2\beta\sqrt{2mTd \log\ab(1+\frac{TL^2}{\lambda d + 4L^2})} \\
        & \quad + 2T^{\frac{1}{\alpha+1}}\ab(\frac{2048ud\log(uT)}{\lambda_x})^{\frac{\alpha}{\alpha+1}}+ \frac{8L^2\cdot 2048ud\log(uT)}{\gamma^{2\alpha+1}\lambda_x^2} \\
        & \quad + \frac{4096 ud\log(uT)}{\lambda_x} + \frac{4}{\alpha+1} \log_2{\ab(2^{\alpha+1}\frac{2048 udT}{\lambda_x}+1)}\\
        & = \mathcal{O}\ab(\frac{ud}{\gamma^{2\alpha+1}\lambda_x^2}\log(T) + T^{\frac{1}{\alpha+1}}{\ab(\frac{ud}{\lambda_x}\log(T))}^{\frac{\alpha}{\alpha+1}} + d\sqrt{mT}\log(T)). \numberthis\label{eq:regret-with-alpha}
    \end{align*}

When \(\alpha > 1\), the second term in \Cref{eq:regret-with-alpha} exhibits a dependence on \(T\) that grows at a rate slower than \(\sqrt{T}\).
Thus, let \(\alpha=2\) and we have
\begin{align*}
      \E[R(T)] \leq \mathcal{O}\ab(\frac{ud}{\gamma^{5}\lambda_x^2}\log(T) + T^{\frac{1}{3}}{\ab(\frac{ud}{\lambda_x}\log(T))}^{\frac{2}{3}} + d\sqrt{mT}\log(T)).
\end{align*}
\end{proof}

\section{Theoretical Analysis of \SACLUB and \SASCLUB}
\label{sec:proof-of-advclub}

\begin{lemma}\label{lemma:bound-expected-smallest-eigenvalue-smoothed-adversary}
  Under the smoothed adversary setting, \SACLUB and \SASCLUB have the following lower bound on the expected minimum eigenvalue of \(\vec{x}_{a_t}\vec{x}_{a_t}^\mathsf{T}\):
  \[\lambda_{\min}\ab(\E\ab[\vec{x}_{a_t}\vec{x}_{a_t}^\mathsf{T}]) \geq c_1 \frac{\sigma^2}{\log K} \triangleq \widetilde{\lambda}_x,\]
  where \(c_1\) is some constant.
\end{lemma}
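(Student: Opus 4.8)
The plan is to reduce the matrix inequality to a one-dimensional statement and then exploit the isotropy of the Gaussian perturbation together with the extreme-value structure of the arm-selection rule. Since $\lambda_{\min}(\E[\vec{x}_{a_t}\vec{x}_{a_t}^\mathsf{T}]) = \min_{\|\vec{z}\|=1}\E[(\vec{z}^\mathsf{T}\vec{x}_{a_t})^2]$, it suffices to prove that for every fixed unit vector $\vec{z}$,
\[
\E[(\vec{z}^\mathsf{T}\vec{x}_{a_t})^2] \geq c_1\frac{\sigma^2}{\log K}.
\]
First I would record the scalar facts about the perturbation along $\vec{z}$: the variable $g_a := \vec{z}^\mathsf{T}\vec{\varepsilon}_a$ is a projection of a coordinate-wise truncated Gaussian, so (for $R$ a constant multiple of $\sigma$) its variance is within a constant factor of $\sigma^2$ and it obeys Gaussian-type anti-concentration and tail bounds. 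The key estimate I would isolate is the \emph{fluctuation of the maximum}: the maximum of $K$ independent Gaussians of scale $\sigma$ has standard deviation of order $\sigma/\sqrt{\log K}$ (the Gumbel scale $1/\sqrt{2\log K}$), and this remains true up to constants even after the adversary shifts the individual means, because variance is shift-invariant.

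Next I would split into two regimes according to how much $\vec{z}$ aligns with the effective selection direction. Let $\hat{\vec{w}}$ be the (unit) direction in which the UCB score is being maximized, and write $\vec{z} = \rho\hat{\vec{w}} + \sqrt{1-\rho^2}\,\vec{u}$ with $\vec{u}\perp\hat{\vec{w}}$. When $\rho^2\leq\tfrac12$, I would condition on the adversary's means and on the perturbation components that actually enter the selection; by isotropy of $\mathcal{N}(0,\sigma^2\vec{I})$ the component $\vec{u}^\mathsf{T}\vec{\varepsilon}_{a_t}$ stays a fresh (conditionally mean-zero, variance-$\Theta(\sigma^2)$) variable independent of which arm was picked, so the conditional variance of $\vec{z}^\mathsf{T}\vec{x}_{a_t}$ is at least $(1-\rho^2)\sigma^2$, whence $\E[(\vec{z}^\mathsf{T}\vec{x}_{a_t})^2]\geq\Var(\vec{z}^\mathsf{T}\vec{x}_{a_t})$ is $\Omega(\sigma^2)$.

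The hard regime, which also pins down the $1/\log K$ factor, is $\rho^2>\tfrac12$, where $\vec{z}$ is essentially the selection direction and $\vec{z}^\mathsf{T}\vec{x}_{a_t}$ is close to the maximal score $\max_a \hat{\vec{w}}^\mathsf{T}\vec{x}_a$. Here the adversary's best move is to shift the means $\hat{\vec{w}}^\mathsf{T}\vec{\mu}_a$ (up to $\|\vec{\mu}_a\|\leq1$) so as to re-center this maximum near $0$, but the shift-invariance of the variance means
\[
\E[(\vec{z}^\mathsf{T}\vec{x}_{a_t})^2]\geq\Var\!\big(\max_a\hat{\vec{w}}^\mathsf{T}\vec{x}_a\big)=\Omega\!\big(\sigma^2/\log K\big),
\]
by the fluctuation-of-the-maximum estimate above. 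Taking the minimum over the two regimes and inserting the truncation constants yields the claimed bound $\widetilde{\lambda}_x=c_1\sigma^2/\log K$.

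The main obstacle is the coupling between the selection rule and the perturbation along $\vec{z}$: because $a_t=\argmax_a \widehat{\vec{\theta}}^\mathsf{T}\vec{x}_a + \beta\|\vec{x}_a\|_{\overline{\vec{M}}^{-1}}$, the score is a \emph{convex} rather than linear function of $\vec{x}_a$, so there is no single fixed direction $\hat{\vec{w}}$ and one cannot naively treat $g_{a_t}$ as an independent Gaussian. I would control this by linearizing the score at the realized maximizer (or bounding the curvature contribution of the bonus term, which is monotone in the norm and only sharpens the maximum) so that the extreme-value argument applies to an effective direction with at most constant-factor loss. A secondary technical point is that truncation breaks exact rotational invariance, so $\vec{u}^\mathsf{T}\vec{\varepsilon}_a$ and $\hat{\vec{w}}^\mathsf{T}\vec{\varepsilon}_a$ are only approximately independent; this I would dispatch by the constant-factor variance comparison for truncated coordinates (or, for the high-probability version noted in the remark, by working with the untruncated Gaussian directly). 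Tracking these constants and the lower bound on the variance of the maximum is the remaining bookkeeping.
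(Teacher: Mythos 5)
Your proposal follows essentially the same route as the paper's proof: lower-bound the Rayleigh quotient $\min_{\|\vec{z}\|=1}\E[(\vec{z}^\mathsf{T}\vec{x}_{a_t})^2]$ by a conditional variance, rotate so the selection direction is a coordinate axis, keep the full $\Theta(\sigma^2)$ variance in the orthogonal directions, and use the $\Omega(\sigma^2/\log K)$ fluctuation of the maximum of $K$ mean-shifted Gaussians (the paper imports this from Lemmas 14--15 of \citet{sivakumar-2020-structured-linear}) for the aligned component, then take the minimum. The two technical points you flag as obstacles --- the UCB bonus making the score nonlinear in $\vec{x}_a$, and truncation breaking exact rotational invariance --- are in fact glossed over in the paper's own proof, which simply absorbs $C_{a,t}$ into a per-arm constant shift $p_{t,i}$ and appeals to ``rotation invariance'' of the coordinate-wise truncated Gaussian, so your plan is if anything more careful on these points than the published argument.
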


\begin{proof}
  Fix a time \(t\). Let \(\vec{Q}\) be a unitary matrix that rotates \(\widehat{\vec{\theta}}_{i_t,t}\) to align it with the x-axis, retaining its magnitude but zeroing out all components other than the first component, i.e., \(\vec{Q}\widehat{\vec{\theta}}_{i_t,t} = (\|\widehat{\vec{\theta}}_{i_t,t}\|, 0, 0, \dots, 0)\).
  Such \(\vec{Q}\) always exists because it just rotates the space.
  According to the UCB arm selection strategy, \(\vec{x}_{a_t} = \argmax_{a \in \mathcal{A}_t}\ab(\widehat{\vec{\theta}}_{i_t,t}^\mathsf{T} \vec{x}_{a} + C_{a,t})\), where \(C_{a,t} = \beta \|\vec{x}_a\|_{\overline{\vec{M}}_{V_t,t-1}^{-1}}\).
  Denote the \(i\)-th arm in \(\mathcal{A}_t\) as \(a_{t,i}\), we have
  \begin{align*}
    \lambda_{\min}\ab(\E\ab[\vec{x}_{a_t}\vec{x}_{a_t}^\mathsf{T}])
    =& \lambda_{\min}\ab(\E\ab[\vec{x}\vec{x}^\mathsf{T} \Bigm| \vec{x} = \argmax_{a \in \mathcal{A}_t}\ab(\widehat{\vec{\theta}}_{i_t,t}^\mathsf{T} \vec{x}_{a} + C_{a,t})])\\
    =& \min_{\vec{w}:\|\vec{w}\|=1} \vec{w}^\mathsf{T} \E\ab[\vec{x}\vec{x}^\mathsf{T} \Bigm| \vec{x} = \argmax_{a \in \mathcal{A}_t}\ab(\widehat{\vec{\theta}}_{i_t,t}^\mathsf{T} \vec{x}_{a} + C_{a,t})] \vec{w}\\
    =& \min_{\vec{w}:\|\vec{w}\|=1} \E\ab[(\vec{w}^\mathsf{T}\vec{x})^2 \Bigm| \vec{x} = \argmax_{a \in \mathcal{A}_t}\ab(\widehat{\vec{\theta}}_{i_t,t}^\mathsf{T} \vec{x}_{a} + C_{a,t})]\\
    \geq& \min_{\vec{w}:\|\vec{w}\|=1} \Var\ab[\vec{w}^\mathsf{T}\vec{x} \Bigm| \vec{x} = \argmax_{a \in \mathcal{A}_t}\ab(\widehat{\vec{\theta}}_{i_t,t}^\mathsf{T} \vec{x}_{a} + C_{a,t})]\\
    =& \min_{\vec{w}:\|\vec{w}\|=1} \Var\ab[(\vec{Q}\vec{w})^\mathsf{T}\vec{Q}\vec{x} \Bigm| \vec{x} = \argmax_{a \in \mathcal{A}_t}\ab((\vec{Q}\widehat{\vec{\theta}}_{i_t,t})^\mathsf{T} \vec{Q}\vec{x}_{a} + C_{a,t})]\numberthis\label{eq:unitary-property}\\
    =& \min_{\vec{w}:\|\vec{w}\|=1} \Var\ab[\vec{w}^\mathsf{T}\vec{Q}\vec{x} \Bigm| \vec{x} = \argmax_{a \in \mathcal{A}_t}\ab(\ab\|\widehat{\vec{\theta}}_{i_t,t}\| (\vec{Q}\vec{x}_{a})_{1} + C_{a,t})]\numberthis\label{eq:apply-Q}\\
    =& \min_{\vec{w}:\|\vec{w}\|=1} \Var\ab[\vec{w}^\mathsf{T}\vec{Q}\vec{\varepsilon} \Bigm| \vec{\varepsilon} = \argmax_{\vec{\varepsilon}_{t,i}: i \in [K]}\ab((\vec{Q}\vec{\mu}_{t,i} + \vec{Q}\vec{\varepsilon}_{t,i})_{1} + \frac{C_{a_{t,i},t}}{\|\widehat{\vec{\theta}}_{i_t,t}\|})]\numberthis\label{eq:decompose-arm-to-mu-and-epsilon}\\
    =& \min_{\vec{w}:\|\vec{w}\|=1} \Var\ab[\vec{w}^\mathsf{T}\vec{\varepsilon} \Bigm| \vec{\varepsilon} = \argmax_{\vec{\varepsilon}_{t,i}: i \in [K]}\ab((\vec{Q}\vec{\mu}_{t,i} + \vec{\varepsilon}_{t,i})_{1} + \frac{C_{a_{t,i},t}}{\|\widehat{\vec{\theta}}_{i_t,t}\|})]\numberthis\label{eq:remove-Q-by-rotation-invariance}\\
  \end{align*}
  where \Cref{eq:unitary-property} uses the property of unitary matrices: \(\vec{Q}^\mathsf{T}\vec{Q}=\vec{I}\).
  \Cref{eq:apply-Q} applies matrix \(\vec{Q}\) so only the first component is non-zero and we use the fact that minimizing over \(\vec{Q}\vec{w}\) is equivalent to over \(\vec{w}\).
  \Cref{eq:decompose-arm-to-mu-and-epsilon} follows because each arm \(\vec{x} = \vec{\mu} + \vec{\varepsilon}\) and adding a constant a to a random variable does not change its variance.
  \Cref{eq:remove-Q-by-rotation-invariance} is due to the rotation invariance of symmetrically truncated Gaussian distributions.

  Since \(\vec{\varepsilon}_{t,i} \sim \mathcal{N}(0, \sigma^2\vec{I})\) conditioned on \(|(\vec{\varepsilon}_{t,i})_j| \leq R, \forall j \in [d]\), by the property of (truncated) multivariate Gaussian distributions, the components of \(\vec{\varepsilon}_{t,i}\) can be equivalently regarded as \(d\) independent samples from a (truncated) univariate Gaussian distribution, i.e., \((\vec{\varepsilon}_{t,i})_j \sim \mathcal{N}(0,\sigma^2)\) conditioned on \(|(\vec{\varepsilon}_{t,i})_j| \leq R, \forall j \in [d]\).
  Therefore, we have
  \begin{align*}
    \Var\left[\vec{w}^\mathsf{T}\vec{\varepsilon}\right]
    = \Var\left[\sum_{i=1}^{d} \vec{w}_i \vec{\varepsilon}_i\right]
    = \sum_{i=1}^{d} \vec{w}_i^2\Var\left[ \vec{\varepsilon}_i\right],
  \end{align*}
  where the exchanging of variance and summation is due to the independence of \(\vec{\varepsilon}_i\). Therefore, let \(p_{t,i}= (\vec{Q}\vec{\mu}_{t,i})_{1} + \frac{C_{a_{t,i},t}}{\|\widehat{\vec{\theta}}_{i_t,t}\|}\), we can write
  \begin{align*}
    & \min_{\vec{w}:\|\vec{w}\|=1} \Var\ab[\vec{w}^\mathsf{T}\vec{\varepsilon} \Bigm| \vec{\varepsilon} = \argmax_{\vec{\varepsilon}_{t,i}: i \in [K]}\ab((\vec{\varepsilon}_{t,i})_{1} + p_{t,i})]\\
    =&\min_{\vec{w}:\|\vec{w}\|=1} \sum_{j=1}^{d} \vec{w}_j^2\Var\ab[\vec{\varepsilon}_j \Bigm| \vec{\varepsilon} = \argmax_{\vec{\varepsilon}_{t,i}: i \in [K]}\ab((\vec{\varepsilon}_{t,i})_{1} + p_{t,i})]\\
    =&\min_{\vec{w}:\|\vec{w}\|=1} \left\{\vec{w}_1^2\Var\ab[\vec{\varepsilon}_1 \Bigm| \vec{\varepsilon} = \argmax_{\vec{\varepsilon}_{t,i}: i \in [K]}\ab((\vec{\varepsilon}_{t,i})_{1} + p_{t,i})]\right.\\
      & \left.+ \sum_{j=2}^{d} \vec{w}_j^2 \Var\ab[\vec{\varepsilon}_j \Bigm| \vec{\varepsilon} = \argmax_{\vec{\varepsilon}_{t,i}: i \in [K]}\ab((\vec{\varepsilon}_{t,i})_{1} + p_{t,i})]\right\}\\
    =&\min_{\vec{w}:\|\vec{w}\|=1} \left\{\vec{w}_1^2\Var\ab[\vec{\varepsilon}_1 \Bigm| \vec{\varepsilon} = \argmax_{\vec{\varepsilon}_{t,i}: i \in [K]}\ab((\vec{\varepsilon}_{t,i})_{1} + p_{t,i})] + \sum_{j=2}^{d} \vec{w}_j^2 \Var\ab[\vec{\varepsilon}_j]\right\}\\
    =&\min_{\vec{w}:\|\vec{w}\|=1} \left\{\vec{w}_1^2\Var\ab[\vec{\varepsilon}_1 \Bigm| \vec{\varepsilon} = \argmax_{\vec{\varepsilon}_{t,i}: i \in [K]}\ab((\vec{\varepsilon}_{t,i})_{1} + p_{t,i})] + (1-\vec{w}_1^2) \sigma^2\right\}\\
    =&\min \left\{\Var\ab[\vec{\varepsilon}_1 \Bigm| \vec{\varepsilon} = \argmax_{\vec{\varepsilon}_{t,i}: i \in [K]}\ab((\vec{\varepsilon}_{t,i})_{1} + p_{t,i})], \sigma^2\right\} \geq c_1 \frac{\sigma^2}{\log K},
  \end{align*}
  where in the last inequality, we use Lemma 15 and Lemma 14 in \citet{sivakumar-2020-structured-linear} and get
  \[\Var\ab[\vec{\varepsilon}_1 \Bigm| \vec{\varepsilon} = \argmax_{\vec{\varepsilon}_{t,i}: i \in [K]}\ab((\vec{\varepsilon}_{t,i})_{1} + p_{t,i})] \geq \Var\ab[\vec{\varepsilon}_1 \Bigm| \vec{\varepsilon} = \argmax_{\vec{\varepsilon}_{t,i}: i \in [K]} (\vec{\varepsilon}_{t,i})_{1}] \geq c_1\frac{\sigma^2}{\log K}.\]
\end{proof}

\begin{lemma}\label{lemma:bound-smallest-eigenvalue-smoothed-adversary}
  Under the smoothed adversarial setting, for algorithms \SACLUB and \SASCLUB, with probability at least \(1-\delta\) for any \(\delta \in (0,1)\), if \(T_{i,t} \geq \frac{8(1+\sqrt{d}R)^2\log(K)}{c_1\sigma^2}\log\ab(\frac{ud}{\delta}) = \frac{8(1+\sqrt{d}R)^2}{\widetilde{\lambda}_{x}}\log\ab(\frac{ud}{\delta})\) for all users \(i \in [u]\), we have
  \[\lambda_{\min}\ab(\vec{S}_{i,t}) \geq \frac{c_1\sigma^2T_{i,t}}{2\log K} = \frac{\widetilde{\lambda}_{x} T_{i,t}}{2}, \forall i \in [u],\]
  where \(c_1\) is some constant and \(\widetilde{\lambda}_x \triangleq \frac{c_1\sigma^2}{\log K}\).
\end{lemma}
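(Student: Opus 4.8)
The plan is to replicate the argument of \Cref{lemma:bound-smallest-eigenvalue} almost verbatim, making the two substitutions dictated by the smoothed adversarial model: the uniform bound $L$ on the context length is replaced by $1+\sqrt{d}R$, and the role of Assumption~\ref{assumption:item-regularity} (which supplied $\lambda_{\min}(\E[\vec{X}\vec{X}^\mathsf{T}])=\lambda_x$) is played by \Cref{lemma:bound-expected-smallest-eigenvalue-smoothed-adversary}, which supplies $\lambda_{\min}(\E[\vec{x}_{a_t}\vec{x}_{a_t}^\mathsf{T}])\ge\widetilde{\lambda}_x$. Concretely, I would fix a user $i$ and consider the rounds $\tau$ in which $i$ is served, so that the design matrix $\vec{S}_{i,t}=\sum_{\tau\le t:\,i_\tau=i}\vec{x}_{a_\tau}\vec{x}_{a_\tau}^\mathsf{T}$ is a sum of $T_{i,t}$ PSD rank-one matrices to which a matrix Chernoff lower-tail bound applies.

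First I would establish the almost-sure operator-norm bound. Since $\vec{x}_{a_\tau}=\vec{\mu}_{a_\tau}+\vec{\varepsilon}_{a_\tau}$ with $\|\vec{\mu}_{a_\tau}\|\le 1$ and, by the truncation in Assumption~\ref{assumption:item-regularity2}, $|(\vec{\varepsilon}_{a_\tau})_j|\le R$ for every $j\in[d]$ so that $\|\vec{\varepsilon}_{a_\tau}\|\le\sqrt{d}R$, the triangle inequality gives $\|\vec{x}_{a_\tau}\|\le 1+\sqrt{d}R$. Hence $\lambda_{\max}(\vec{x}_{a_\tau}\vec{x}_{a_\tau}^\mathsf{T})=\|\vec{x}_{a_\tau}\|^2\le(1+\sqrt{d}R)^2$, which plays the role of $L^2$ in the matrix Chernoff bound. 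Next, invoking \Cref{lemma:bound-expected-smallest-eigenvalue-smoothed-adversary} at each round $\tau$ together with Weyl's inequality (superadditivity of the minimum eigenvalue) yields $\mu_{\min}:=\lambda_{\min}\big(\sum_{\tau:\,i_\tau=i}\E[\vec{x}_{a_\tau}\vec{x}_{a_\tau}^\mathsf{T}]\big)\ge T_{i,t}\widetilde{\lambda}_x$; note that here, unlike the i.i.d. case where the summand expectations are identical, the conditional expectations differ across rounds, so the superadditivity step is genuinely needed.

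Applying the matrix Chernoff bound (\Cref{lemma:matrix-chernoff}) with deviation $\varepsilon=\tfrac12$ then gives
\[
\Pr\!\big[\lambda_{\min}(\vec{S}_{i,t})\le \tfrac12 T_{i,t}\widetilde{\lambda}_x\big]\le d\big(\sqrt{2}\,e^{-1/2}\big)^{T_{i,t}\widetilde{\lambda}_x/(1+\sqrt{d}R)^2},
\]
and setting the right-hand side equal to $\delta/u$ produces exactly the stated threshold $T_{i,t}\ge\frac{8(1+\sqrt{d}R)^2}{\widetilde{\lambda}_x}\log(\tfrac{ud}{\delta})$; a union bound over the $u$ users then closes the proof.

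The main obstacle is that, unlike in \Cref{lemma:bound-smallest-eigenvalue} where arms are drawn uniformly and the summands $\vec{x}_{a_\tau}\vec{x}_{a_\tau}^\mathsf{T}$ are genuinely i.i.d., here the arm $a_\tau$ is chosen by the UCB rule and therefore depends on the history through $\widehat{\vec{\theta}}_{V_\tau,\tau-1}$ and $\overline{\vec{M}}_{V_\tau,\tau-1}^{-1}$. The summands are thus not independent, so the plain i.i.d. matrix Chernoff bound cannot be invoked directly. The clean fix is to read \Cref{lemma:bound-expected-smallest-eigenvalue-smoothed-adversary} as a bound on the \emph{conditional} expectation $\E[\vec{x}_{a_\tau}\vec{x}_{a_\tau}^\mathsf{T}\mid\mathcal{F}_{\tau-1}]$---the only fresh randomness in round $\tau$ is the Gaussian perturbation, over which that lemma already averages---and to use the adapted/predictable-sequence (matrix Freedman) form of the inequality in place of the independent one. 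The delicate step is the filtration bookkeeping: declaring $i_\tau$, $\mathcal{A}_\tau$, and the adversarial means $\vec{\mu}_{a}$ to be $\mathcal{F}_{\tau-1}$-measurable while treating the perturbations as the new randomness. Once this is in place, the remaining computation is identical to \Cref{lemma:bound-smallest-eigenvalue}.
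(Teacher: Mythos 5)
Your proposal matches the paper's proof essentially verbatim: the same two substitutions ($L \to 1+\sqrt{d}R$ for the operator-norm bound, and Assumption~\ref{assumption:item-regularity} replaced by \Cref{lemma:bound-expected-smallest-eigenvalue-smoothed-adversary} combined with Weyl superadditivity to get $\mu_{\min}\ge T_{i,t}\widetilde{\lambda}_x$), followed by the matrix Chernoff bound with $\varepsilon=\tfrac12$, solving for the threshold on $T_{i,t}$, and a union bound over users. Your closing observation that the UCB-selected summands are not independent---so that strictly one needs the adapted/martingale form of the matrix Chernoff inequality with \Cref{lemma:bound-expected-smallest-eigenvalue-smoothed-adversary} read as a bound on the conditional expectation given the history---is a point the paper's own proof passes over by citing the independent-summand version, and your proposed fix is the correct and more careful treatment.
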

\begin{proof}
The proof follows the same techniques as \Cref{lemma:bound-smallest-eigenvalue}.
The main difference lies in two aspects:
(1) in the smoothed adversary setting, the length of feature vectors is bounded by \(1+\sqrt{d}R\) instead of \(L\).
Therefore, \(\lambda_{\max}(\vec{x}_{a_{\tau}}\vec{x}_{a_{\tau}}^\mathsf{T}) \leq (1+\sqrt{d}R)^2\).
(2) the computation of \(\mu_{\min}\) needs to refer to \Cref{lemma:bound-expected-smallest-eigenvalue-smoothed-adversary}.

Specifically, to compute \(\mu_{\min}\), by the super-additivity of the minimum eigenvalue (due to Weyl's inequality) and \Cref{lemma:bound-expected-smallest-eigenvalue-smoothed-adversary}, we have
  \begin{align*}
    \mu_{\min}
    = \lambda_{\min}\ab(\sum_{\tau \in [t]: i_{\tau}=i} \E[\vec{x}_{a_{\tau}}\vec{x}_{a_{\tau}}^\mathsf{T}])
    \geq \sum_{\tau \in [t]: i_{\tau}=i} \lambda_{\min}\ab(\E[\vec{x}_{a_{\tau}}\vec{x}_{a_{\tau}}^\mathsf{T}])
    \geq T_{i,t} c_1 \frac{\sigma^2}{\log K}.
  \end{align*}
  On the other hand, we have
  \[\mu_{\max} = \lambda_{\max}\ab(\sum_{\tau \in [t]: i_{\tau}=i} \E[\vec{x}_{a_{\tau}}\vec{x}_{a_{\tau}}^\mathsf{T}])\]
  So by \Cref{lemma:matrix-chernoff}, we have for any \(\varepsilon \in (0,1)\),
  \begin{align*}
    \Pr\left[\lambda_{\min}(\vec{S}_{i,t}) \leq (1-\varepsilon)T_{i,t}c_1 \frac{\sigma^2}{\log K}\right]
    &\leq \Pr\left[\lambda_{\min}(\vec{S}_{i,t}) \leq (1-\varepsilon)\mu_{\min}\right]\\
    &\leq d \ab[\frac{e^{-\varepsilon}}{(1-\varepsilon)^{1-\varepsilon}}]^{\mu_{\min}/(1+\sqrt{d}R)^2}\\
    &\leq d \ab[\frac{e^{-\varepsilon}}{(1-\varepsilon)^{1-\varepsilon}}]^{\frac{T_{i,t}c_1 \sigma^2}{\log(K)(1+\sqrt{d}R)^2}},
  \end{align*}
  where the last inequality is because \(e^{-x}\) is decreasing.
  Then the proof follows by the same derivation as \Cref{lemma:bound-smallest-eigenvalue}, except that \(\lambda_x\) is replaced by \(\widetilde{\lambda}_x\triangleq \frac{c_1\sigma^2}{\log K}\).
\end{proof}

\restateregretsmoothedadversary*
\begin{proof}
  For algorithm \SACLUB, similar to the proof of \Cref{thm:regret-uniclub} under the stochastic context setting, we need to derive the counterparts of \Cref{lemma:clusters-correct-after-T0}, \Cref{lemma:C-a-t}, and \Cref{lemma:sum-x-sqaure} under the smoothed adversary setting according to \Cref{lemma:bound-expected-smallest-eigenvalue-smoothed-adversary} and \Cref{lemma:bound-smallest-eigenvalue-smoothed-adversary}.
  The proofs use almost the same techniques with \(\lambda_x\) replaced by \(\widetilde{\lambda}_x\).
  Similarly, the proof for algorithm \SASCLUB requires the counterparts of Lemmas used in the proof of \Cref{thm:regret-unisclub}, thus we skip the details.
\end{proof}

\section{Technical Inequalities}
\label{sec:technical-inequalities}
We present the technical inequalities used throughout the proofs.
For inequalities from existing literature, we provide detailed references for readers' convenience.

\begin{lemma}[Matrix Chernoff, Corollary 5.2 in \citet{tropp-2011-user-friendly}]\label{lemma:matrix-chernoff}
  Consider a finite sequence \(\set{\vec{X}_k}\) of independent, random, self-adjoint matrices with dimension \(d\).
  Assume that each random matrix satisfies
  \[\vec{X}_k \succeq \vec{0} \quad \text{and} \quad \lambda_{\text{max}}(\vec{X}_k) \leq R \quad\text{almost surely.}\]
  Define
  \[\vec{Y}:=\sum_{k} \vec{X}_k \quad \text{and} \quad \mu_{\text{min}} := \lambda_{\min}\ab(\E[\vec{Y}]) = \lambda_{\min}\ab(\sum_{k} \E[\vec{X}_{k}]).\]
  Then, for any \(\delta \in (0,1)\),
  \[\Pr\left[\lambda_{\min}\ab(\sum_{k} \vec{X}_k) \leq (1-\delta)\mu_{\min}\right] \leq d\ab[\frac{e^{-\delta}}{(1-\delta)^{1-\delta}}]^{\mu_{\min}/R}.\]
\end{lemma}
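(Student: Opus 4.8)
The plan is to prove this minimum-eigenvalue matrix Chernoff bound by the matrix Laplace transform method combined with Lieb's concavity theorem, lifting the classical scalar Chernoff argument to the matrix setting. First I would fix a parameter \(\theta<0\) (the sign matters because we are controlling a \emph{lower} tail) and apply Markov's inequality to the scalar variable \(e^{\theta\lambda_{\min}(\vec{Y})}\). Since \(\theta<0\) makes \(x\mapsto e^{\theta x}\) decreasing, the event \(\{\lambda_{\min}(\vec{Y})\leq(1-\delta)\mu_{\min}\}\) equals \(\{e^{\theta\lambda_{\min}(\vec{Y})}\geq e^{\theta(1-\delta)\mu_{\min}}\}\), so
\[\Pr\ab[\lambda_{\min}(\vec{Y})\leq(1-\delta)\mu_{\min}] \leq e^{-\theta(1-\delta)\mu_{\min}}\,\E\ab[e^{\theta\lambda_{\min}(\vec{Y})}].\]
The first reduction passes from the extreme eigenvalue to a trace: because \(\theta<0\) we have \(e^{\theta\lambda_{\min}(\vec{Y})}=\lambda_{\max}(e^{\theta\vec{Y}})\leq\tr e^{\theta\vec{Y}}\), so it suffices to control the matrix moment generating function \(\E[\tr\exp(\theta\vec{Y})]\).

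The key step is bounding this trace MGF. Writing \(\theta\vec{Y}=\sum_k\theta\vec{X}_k\), I would invoke Lieb's theorem (concavity of \(\vec{A}\mapsto\tr\exp(\vec{H}+\log\vec{A})\)) together with Jensen's inequality to obtain the subadditivity of matrix cumulants,
\[\E\ab[\tr\exp\ab(\sum_k\theta\vec{X}_k)] \leq \tr\exp\ab(\sum_k\log\E[e^{\theta\vec{X}_k}]).\]
I would then control each summand using \(\vec{0}\preceq\vec{X}_k\preceq R\vec{I}\): the scalar inequality \(e^{\theta x}\leq 1+\frac{e^{\theta R}-1}{R}x\) on \([0,R]\) transfers through the spectral calculus to \(e^{\theta\vec{X}_k}\preceq\vec{I}+\frac{e^{\theta R}-1}{R}\vec{X}_k\); taking expectations and using \(\vec{I}+\vec{M}\preceq e^{\vec{M}}\) gives \(\E[e^{\theta\vec{X}_k}]\preceq\exp\ab(g(\theta)\,\E[\vec{X}_k])\) with \(g(\theta):=\frac{e^{\theta R}-1}{R}\). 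Monotonicity of \(\log\) and of the trace exponential then yields \(\E[\tr\exp(\theta\vec{Y})]\leq\tr\exp\ab(g(\theta)\sum_k\E[\vec{X}_k])\).

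To finish I would translate this matrix bound into a scalar one. Since \(\theta<0\) forces \(g(\theta)<0\), the trace exponential is dominated by its smallest-eigenvalue direction, \(\tr\exp\ab(g(\theta)\sum_k\E[\vec{X}_k])\leq d\exp\ab(g(\theta)\,\lambda_{\min}(\sum_k\E[\vec{X}_k]))=d\,e^{g(\theta)\mu_{\min}}\), and assembling the pieces gives \(\Pr\ab[\lambda_{\min}(\vec{Y})\leq(1-\delta)\mu_{\min}]\leq d\exp\ab(-\theta(1-\delta)\mu_{\min}+g(\theta)\mu_{\min})\). Optimizing over \(\theta<0\), the choice \(\theta=\frac{1}{R}\log(1-\delta)\) (negative since \(\delta\in(0,1)\)) makes \(e^{\theta R}=1-\delta\), so the exponent becomes \(\frac{\mu_{\min}}{R}\ab(-\delta-(1-\delta)\log(1-\delta))\), which is exactly \(\log\) of \(\ab[e^{-\delta}/(1-\delta)^{1-\delta}]^{\mu_{\min}/R}\), yielding the claimed bound. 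The main obstacle is the matrix MGF subadditivity step: unlike the scalar Chernoff argument where independence factorizes the MGF, non-commutativity means \(\E[e^{\sum_k\theta\vec{X}_k}]\) does not factor, and one must appeal to the deep operator-concavity result of Lieb to recover subadditivity at the level of the trace exponential. Every other step is a careful but routine transfer of the classical scalar computation through the spectral mapping theorem.
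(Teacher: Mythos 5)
Your proposal is correct: the paper does not prove this lemma but imports it verbatim as Corollary 5.2 of \citet{tropp-2011-user-friendly}, and your argument --- Markov's inequality applied to \(e^{\theta\lambda_{\min}(\vec{Y})}\) with \(\theta<0\), passage to the trace MGF, subadditivity of matrix cumulants via Lieb's concavity theorem, the transfer of the chord bound \(e^{\theta x}\le 1+\frac{e^{\theta R}-1}{R}x\) through the spectral calculus, and the optimal choice \(\theta=\frac{1}{R}\log(1-\delta)\) --- is precisely the proof given in that reference. All steps, including the sign considerations that make the lower-tail version work, are handled correctly.
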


\begin{lemma}[Lemma 8 in \citet{li-2018-online-clustering}]\label{lemma:bound-sum-of-bernoulli}
  Let \(x_1, x_2, \dots, x_n\) be independent Bernoulli random variables with mean \(0<p\leq \frac{1}{2}\). Let \(\delta>0, B>0\), then with probability at least \(1-\delta\),
  \begin{align*}
    \sum_{s=1}^{t} x_s \geq B, \quad \forall t \geq \frac{16}{p} \log\ab(\frac{1}{\delta}) + \frac{4B}{p}.
  \end{align*}
\end{lemma}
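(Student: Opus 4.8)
The plan is to exploit the monotonicity of the partial sums to collapse the uniform-in-$t$ statement into a single concentration inequality, and then apply a standard multiplicative Chernoff lower-tail bound. Write $S_t = \sum_{s=1}^t x_s$ and set $T^\ast := \frac{16}{p}\log(1/\delta) + \frac{4B}{p}$. Since each $x_s \in \{0,1\}$ is non-negative, the sequence $S_t$ is non-decreasing in $t$. Consequently the event $\{S_t \ge B \text{ for all integers } t \ge T^\ast\}$ holds if and only if it holds at the smallest admissible index $t_0 := \lceil T^\ast \rceil$: once $S_{t_0} \ge B$, monotonicity forces $S_t \ge S_{t_0} \ge B$ for every $t \ge t_0$. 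This reduction is the crux of the argument, since it eliminates any need for a union bound over the infinitely many values of $t$.

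It then suffices to lower-bound $S_{t_0}$ at the single index $t_0$. First I would compute the mean $\mu := \E[S_{t_0}] = p\,t_0 \ge p\,T^\ast = 16\log(1/\delta) + 4B$, so that in particular $\mu \ge 4B$ and $\mu \ge 16\log(1/\delta)$. Because $B \le \mu/4$, the failure event $\{S_{t_0} < B\}$ is contained in $\{S_{t_0} \le (1-\tfrac34)\mu\}$. Applying the multiplicative Chernoff bound for the lower tail of a sum of independent $[0,1]$-valued random variables, namely $\Pr[S_{t_0} \le (1-\varepsilon)\mu] \le \exp(-\varepsilon^2\mu/2)$ for $\varepsilon \in (0,1)$, with the choice $\varepsilon = 3/4$ yields $\Pr[S_{t_0} < B] \le \exp(-9\mu/32)$.

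Finally I would discharge the probability bound using $\mu \ge 16\log(1/\delta)$, which gives $\exp(-9\mu/32) \le \exp\!\left(-\tfrac{9\cdot 16}{32}\log(1/\delta)\right) = \delta^{9/2} \le \delta$ for every $\delta \in (0,1)$. Combining this with the monotonicity reduction, $S_t \ge B$ holds simultaneously for all $t \ge T^\ast$ with probability at least $1-\delta$, which is exactly the claim.

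As for the main obstacle: there is no deep difficulty here, and the generous constant $16$ (against the roughly $32/9$ that the Chernoff exponent actually demands) leaves ample slack, so the constants line up comfortably. The only genuinely substantive observation is that the ``for all $t$'' quantifier is free once one notes that $S_t$ is non-decreasing; everything after that — the choice $\varepsilon = 3/4$ and the arithmetic confirming $\delta^{9/2} \le \delta$ — is routine. A minor care-point is the ceiling in $t_0 = \lceil T^\ast \rceil$, but this only increases $\mu$ and hence can never weaken the bound.
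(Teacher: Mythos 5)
Your proof is correct. One point of context: the paper you were asked to match does not actually prove this statement---it imports it verbatim as a technical inequality, citing Lemma~8 of \citet{li-2018-online-clustering}, so there is no internal proof to compare against. Your argument is a clean, self-contained derivation: the observation that the partial sums \(S_t\) are non-decreasing collapses the ``for all \(t\)'' event to the single event \(\{S_{t_0} \ge B\}\) at \(t_0 = \lceil T^\ast\rceil\), after which the multiplicative Chernoff lower tail with \(\varepsilon = 3/4\) gives \(\Pr[S_{t_0} < B] \le \exp(-9\mu/32) \le \delta^{9/2} \le \delta\), using \(\mu \ge 16\log(1/\delta) + 4B\) to justify both the containment \(\{S_{t_0} < B\} \subseteq \{S_{t_0} \le \mu/4\}\) and the final probability bound. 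All the inequalities check out (both terms of the lower bound on \(\mu\) are non-negative, so each can be used separately), the ceiling only helps, and the slack you note---\(\delta^{9/2}\) versus the required \(\delta\)---confirms the stated constants are comfortable. This single-index reduction also avoids the union bound over infinitely many \(t\) that a naive pointwise approach would require, which is precisely the trap such uniform-in-\(t\) statements set; you sidestepped it correctly.
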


\begin{lemma}\label{lemma:basic-arithmetic}
  For \(a>0, b>0, ab \geq 1\), if \(t\geq 2a \log(ab)\), then
  \begin{align*}
    t\geq a\log(1+bt).
  \end{align*}
\end{lemma}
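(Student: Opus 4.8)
The plan is to treat the claim as a one-dimensional monotonicity statement in $t$ and to reduce the whole verification to a single scalar inequality in the quantity $y := ab$. First I would define $g(t) := t - a\log(1+bt)$ and note that the goal is exactly $g(t)\geq 0$ for every $t\geq t_0$, where $t_0 := 2a\log(ab)$. Differentiating gives $g'(t) = 1 - \frac{ab}{1+bt}$, so $g$ is increasing precisely for $t\geq t^\ast := a - \tfrac1b$, which is nonnegative because $ab\geq 1$. Consequently, if I can establish both that $t_0 \geq t^\ast$ and that $g(t_0)\geq 0$, then for any $t\geq t_0 \geq t^\ast$ monotonicity immediately yields $g(t)\geq g(t_0)\geq 0$, which is the desired conclusion.

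The two remaining checks both reduce to elementary inequalities in $y = ab \geq 1$. For $t_0 \geq t^\ast$, dividing by $a$ and writing $\tfrac1b = a/y$ turns the inequality into $2\log y \geq 1 - \tfrac1y$; this holds with equality at $y=1$, and the left-hand side has the larger derivative on $[1,\infty)$ since $\tfrac{2}{y} - \tfrac{1}{y^2} = \tfrac{2y-1}{y^2} > 0$ there. For $g(t_0)\geq 0$, cancelling $a$ and exponentiating reduces $2\log(ab) \geq \log\!\big(1 + 2ab\log(ab)\big)$ to $y^2 \geq 1 + 2y\log y$; setting $h(y) := y^2 - 1 - 2y\log y$ one has $h(1)=0$ and $h'(y) = 2(y - \log y - 1)\geq 0$, where the final step invokes the standard bound $\log y \leq y-1$. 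Hence $h(y)\geq 0$ on $[1,\infty)$, giving $g(t_0)\geq 0$.

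I do not anticipate a genuine obstacle here, as the content is a routine ``log-versus-linear'' threshold estimate. The one point deserving care is the constant $2$ in $t_0 = 2a\log(ab)$: it is exactly what makes both auxiliary inequalities $2\log y \geq 1-\tfrac1y$ and $y^2 \geq 1 + 2y\log y$ tight at $y=1$ while remaining valid throughout $[1,\infty)$, and the single workhorse in both is the elementary estimate $\log y \leq y-1$. I would also briefly note the degenerate case $ab=1$, where $t_0 = t^\ast = 0$ and $g(0)=0$, so that the monotonicity argument covers it automatically.
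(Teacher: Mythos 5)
Your proof is correct and follows essentially the same route as the paper: reduce to the endpoint $t_0 = 2a\log(ab)$ via monotonicity of $g(t) = t - a\log(1+bt)$, then verify the scalar inequality $y^2 \geq 1 + 2y\log y$ for $y = ab \geq 1$ (the paper writes this equivalently as $ab \geq \frac{1}{ab} + 2\log(ab)$). Your version is in fact slightly more complete: you explicitly check that $t_0 \geq a - \tfrac{1}{b}$, the threshold past which $g$ is increasing, whereas the paper glosses over this with the bare assertion that ``$t$ increases faster than $a\log(1+bt)$'' --- which is false near $t=0$ when $ab>1$ --- and simply asserts the final scalar inequality that you derive from $\log y \leq y-1$.
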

\begin{proof}
  Since \(t\) increases faster than \(a\log(1+bt)\), it suffices to prove \(t\geq a\log(1+bt)\) for \(t = 2a\log(ab)\).
  Equivalently, we only need to show:
  \begin{align*}
    2a\log(ab) \geq a\log(1+2ab\log(ab)) = a\log(ab) + a\log\ab(2\log\ab(e^{-2ab}ab)),
  \end{align*}
  which follows by observing that \(ab \geq 2\log(e^{-2ab}ab) = \frac{1}{ab} + 2\log(ab)\) holds when \(ab\geq1\).
\end{proof}

\begin{lemma}\label{lemma:uniform-select-maintain-distribution}
Let \(X_1,X_2,\dots,X_n\) be a sequence of \(n\) independent and identically distributed (i.i.d.) random variables, each following a distribution \(P\). Let \(Y\) be a random variable that is uniformly selected from the set \(\set{X_1,X_2,\dots,X_n}\), then \(Y\) follows the same distribution \(P\).
\end{lemma}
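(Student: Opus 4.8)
The plan is to prove this by conditioning on the (uniform) selection index and invoking the law of total probability. Formally, I would introduce an auxiliary index random variable \(I\) drawn uniformly from \([n] = \set{1, \dots, n}\), independently of \(X_1, \dots, X_n\), so that \(Y = X_I\) realizes the uniform selection described in the statement. The goal is then to show that for every (measurable) set \(A\), we have \(\Pr[Y \in A] = P(A)\), which characterizes the distribution of \(Y\) as \(P\).

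The key computation proceeds in three short steps. First, partition the event \(\set{Y \in A}\) according to the value of \(I\), writing \(\Pr[Y \in A] = \sum_{i=1}^{n} \Pr[I = i,\ X_i \in A]\). Second, use the independence of the selection index \(I\) from the data \(X_1, \dots, X_n\) to factor this as \(\sum_{i=1}^{n} \Pr[I = i]\,\Pr[X_i \in A]\), and substitute \(\Pr[I = i] = 1/n\) by uniformity. Third, apply the identically-distributed assumption, which gives \(\Pr[X_i \in A] = P(A)\) for every \(i\); the sum then collapses to \(\sum_{i=1}^{n} \tfrac{1}{n} P(A) = P(A)\), as required.

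The only genuine subtlety — and the point I would flag as the ``main obstacle,'' although it is a mild one — is the independence of the selection mechanism from the sampled values. The lemma is used in \Cref{lemma:bound-smallest-eigenvalue}, where the arm \(a_\tau\) is selected uniformly at random from \(\mathcal{A}_\tau\); this random draw must be independent of the feature vectors themselves for the factorization in the second step to hold. This is exactly the setting guaranteed by Assumption~\ref{assumption:item-regularity}: the feature vectors in each \(\mathcal{D}_t\) are drawn i.i.d.\ from \(\vec{X}\), and the uniform arm selection is an exogenous randomization independent of these draws. Once this independence is made explicit, the remaining argument is an elementary application of the law of total probability and requires no further machinery; the identical-distribution hypothesis is what ensures every summand equals \(P(A)\) regardless of which index is selected.
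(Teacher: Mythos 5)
Your proposal is correct and follows essentially the same route as the paper's proof: a law-of-total-probability decomposition over the selection index, followed by the i.i.d.\ hypothesis to collapse the sum. Your version is marginally cleaner in that you introduce an explicit index variable \(I\) independent of the \(X_i\) (rather than conditioning on events of the form \(Y = X_i\), which are ambiguous if the \(X_i\) can coincide), but the substance of the argument is identical.
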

\begin{proof}
It suffices to show \(\Pr[Y \in A] = \Pr[X_1 \in A]\) for any measurable set \(A\).
By the law of total probability, we can express the probability that \(Y\) falls into the set \(A\) as:
\begin{align*}
  \Pr[Y \in A]
  &= \sum_{i=1}^n \Pr[Y \in A \mid Y=X_i] \Pr[Y=X_i]\\
  &= \sum_{i=1}^n \Pr[X_i \in A] \Pr[Y=X_i]\\
  &= \Pr\left[X_1 \in A\right],
\end{align*}
where we use the fact that \(X_i\) are i.i.d. and \(\Pr\left[Y=X_i\right]=1/n\).
\end{proof}

\begin{lemma}[Determinant-trace inequality, Lemma 10 in \citet{abbasi-2011-improved}]\label{lemma:det-trace-inequality}
Suppose \(\vec{X}_1, \vec{X}_2, \dots, \vec{X}_t \in \RR^d\) and for any \(1\leq s\leq t\), \(\|\vec{X}_s\|_{2} \leq L\). Let \(\overline{\vec{V}}_t = \lambda \vec{I} + \sum_{s=1}^{t} \vec{X}_s \vec{X}_s^\mathsf{T}\) for some \(\lambda > 0\). Then,
\[\det(\overline{\vec{V}}_t) \leq \ab(\lambda+\frac{tL^2}{d})^d.\]
\end{lemma}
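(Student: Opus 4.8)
The plan is to prove this determinant--trace inequality by relating $\det(\overline{\vec{V}}_t)$ and $\tr(\overline{\vec{V}}_t)$ through the classical arithmetic-mean--geometric-mean (AM--GM) inequality applied to the eigenvalues of $\overline{\vec{V}}_t$. First I would observe that $\overline{\vec{V}}_t = \lambda \vec{I} + \sum_{s=1}^{t} \vec{X}_s \vec{X}_s^\mathsf{T}$ is symmetric and positive definite: the term $\lambda \vec{I}$ is positive definite since $\lambda > 0$, and each outer product $\vec{X}_s \vec{X}_s^\mathsf{T}$ is positive semidefinite. Consequently $\overline{\vec{V}}_t$ has real, nonnegative eigenvalues $\sigma_1, \dots, \sigma_d$, and both quantities of interest factor through the spectrum: $\det(\overline{\vec{V}}_t) = \prod_{i=1}^{d} \sigma_i$ and $\tr(\overline{\vec{V}}_t) = \sum_{i=1}^{d} \sigma_i$.

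The central step is to invoke AM--GM on the eigenvalues, giving $\ab(\prod_{i=1}^{d} \sigma_i)^{1/d} \le \frac{1}{d}\sum_{i=1}^{d} \sigma_i$, which rearranges to $\det(\overline{\vec{V}}_t) \le \ab(\frac{\tr(\overline{\vec{V}}_t)}{d})^{d}$. It then remains only to upper-bound the trace. Using linearity of the trace together with the identity $\tr(\vec{X}_s \vec{X}_s^\mathsf{T}) = \|\vec{X}_s\|_2^2 \le L^2$, I would compute $\tr(\overline{\vec{V}}_t) = \tr(\lambda \vec{I}) + \sum_{s=1}^{t} \tr(\vec{X}_s \vec{X}_s^\mathsf{T}) \le \lambda d + t L^2$. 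Substituting this into the previous display yields $\det(\overline{\vec{V}}_t) \le \ab(\frac{\lambda d + t L^2}{d})^{d} = \ab(\lambda + \frac{t L^2}{d})^{d}$, which is exactly the claimed bound.

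Since this is a short, self-contained argument, there is no genuine obstacle; the only point requiring care is confirming that AM--GM applies at all, which hinges on the eigenvalues being nonnegative---guaranteed here by the positive semidefiniteness established in the first step. If one preferred to bypass the spectral decomposition, an alternative route would be to fix the trace and directly maximize the determinant over the set of positive definite matrices with that trace; a symmetry or Lagrange-multiplier argument shows the maximizer has all eigenvalues equal, reproducing the same bound. I would nonetheless favor the direct AM--GM computation, as it is cleaner and avoids an optimization subproblem.
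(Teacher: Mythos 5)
Your proof is correct and is essentially the standard argument: the paper cites this lemma from \citet{abbasi-2011-improved} without reproving it, but the derivation it relies on (e.g., the step $\det(\overline{\vec{S}}_{i,t}) \leq (\tr(\overline{\vec{S}}_{i,t})/d)^d$ in the proof of Lemma~\ref{lemma:bound-theta-precision}) is exactly your AM--GM-on-eigenvalues plus trace-bound computation. No issues.
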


\section{More Experiments}\label{sec:ablation-study}
\subsection{Evaluation Under the Smoothed Adversarial Context Setting}\label{subsec:exper-adversarial}
To implement the smoothed adversarial contexts, for each arm in the synthetic dataset and three real-world recommendation system datasets, we add a Gaussian noise vector sampled from \(\mathcal{N}(0,\vec{I}/10)\).
As illustrated in \Cref{fig:regret-adversarial-context}, for all the datasets, \SACLUB and \SASCLUB (which are essentially adapted versions of CLUB and SCLUB) outperform LinUCB-Ind and LINUCB-One.  Additionally, under the smoothed adversarial setting, LinUCB-Ind and LINUCB-One exhibit more significant fluctuations compared to \SACLUB and \SASCLUB, highlighting the increased complexity of the smoothed adversarial setting relative to the stochastic setting. This corroborates our theoretical results, showing that with some minor changes, the existing algorithms CLUB and SCLUB can be extended to the more practical smoothed adversarial setting, which is closer to the original setting of contextual linear bandits~\citep{abbasi-2011-improved}.

\begin{figure}[htbp]
  \centering
  \includegraphics[width=\linewidth]{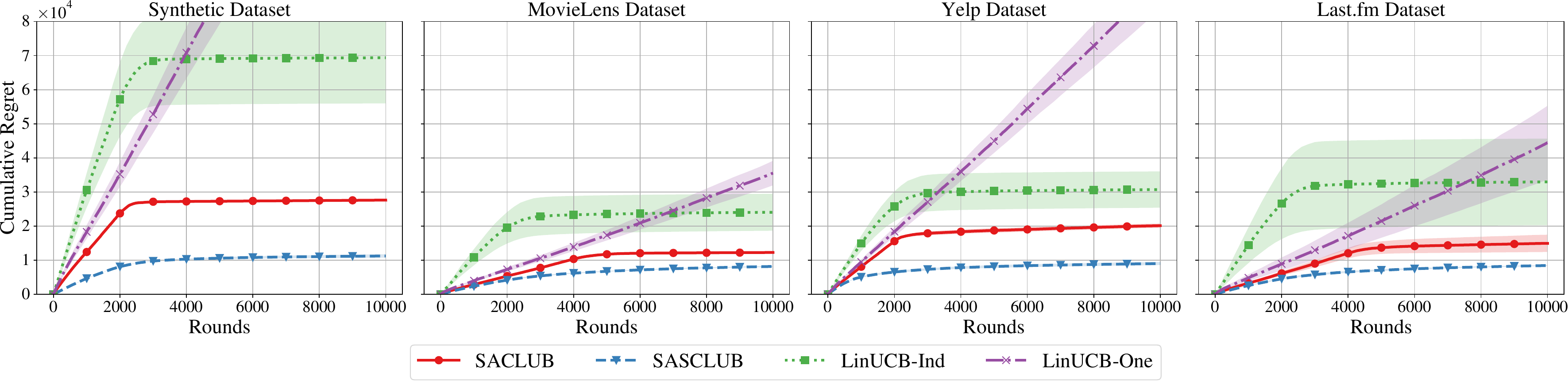}
  \caption{\label{fig:regret-adversarial-context} Comparison of cumulative regrets under the smoothed adversarial context setting.}
\end{figure}

\subsection{Cumulative Regret with Different Arm Set Sizes}
Under the stochastic context setting, we evaluate the impact of the arm set size \(\ab|\mathcal{A}_t| = K\) by adjusting \(K=80, 100, 120, 140\) using the Yelp dataset, since it has the largest number of users.
As demonstrated in \Cref{fig:regret-vs-arm}, there is no substantial amplification of the cumulative regrets as the arm set size \(K\) increases.
This observation validates our theoretical results (\Cref{thm:regret-uniclub,thm:regret-unisclub}), where the regret upper bounds of \UniCLUB and \UniSCLUB do not involve $K$.

\begin{figure}[htbp]
  \centering
  \includegraphics[width=\linewidth]{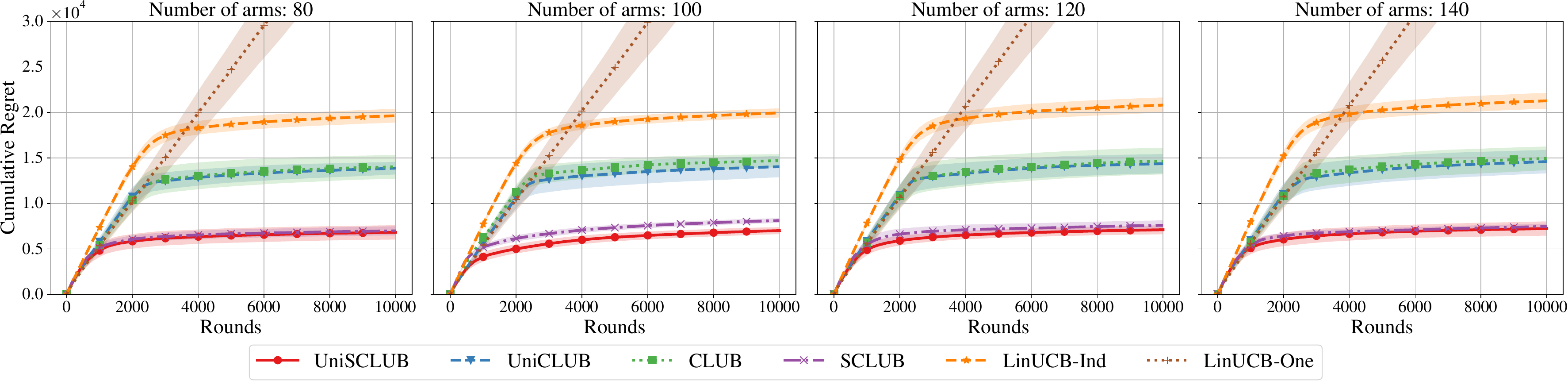}
  \caption{\label{fig:regret-vs-arm} Comparison of cumulative regrets with different arm set sizes.}
\end{figure}

\begin{figure}[htbp]
  \centering
  \includegraphics[width=\linewidth]{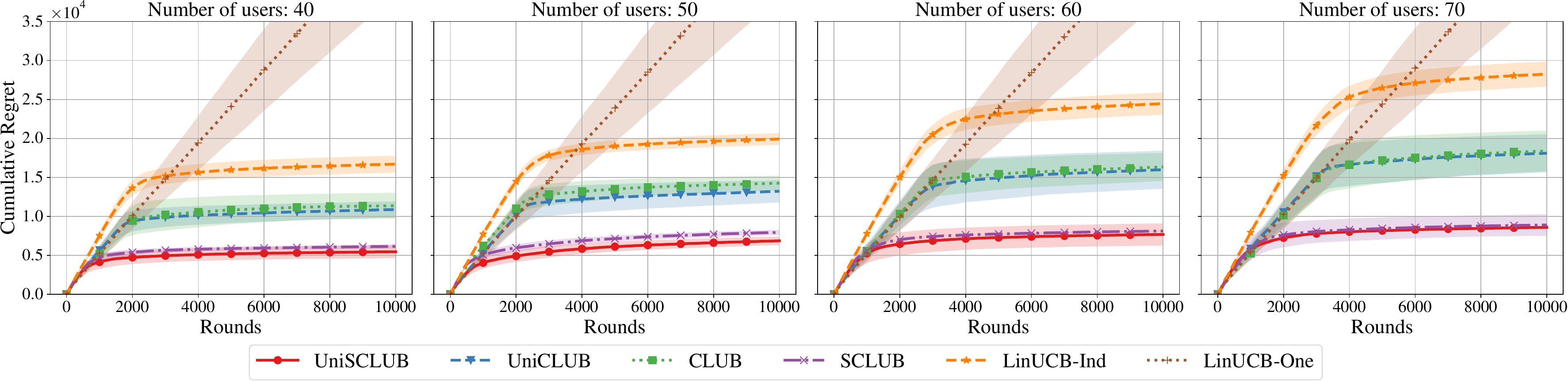}
  \caption{\label{fig:regret-vs-user} Comparison of cumulative regrets with different numbers of users.}
\end{figure}

\subsection{Cumulative Regret with Different User Numbers}
To examine the effect of the number of users, we adjust the number of users \(u=40, 50, 60, 70\) while keeping the number of clusters \(m=10\) using the Yelp dataset.
As shown in \Cref{fig:regret-vs-user}, our proposed algorithms exhibit significant advantages compared to the baselines.
Additionally, as the number of users increases, the cumulative regrets also increase.
This is expected because a larger number of users poses a greater challenge in learning their preference vectors and identifying the cluster structures.

\subsection{Comparison with Non-Clustering-Based Baselines}
In this subsection, we compare our algorithms with two additional graph-based baseline algorithms, GOB.Lin~\citep{cesa-2013-gang} and GraphUCB~\citep{yang-2020-laplacian}, within the stochastic context setting.
While both GOB.Lin and GraphUCB leverage user similarities to enhance preference estimation, their methodologies fundamentally differ from ours (and all the other clustering-based algorithms) in two aspects: (1) Neither GOB.Lin nor GraphUCB assume the existence of user clusters or explicitly perform clustering, and (2) both algorithms assume prior knowledge of a user relationship graph. 
Additionally, it is worth noting that both GOB.Lin and GraphUCB have significantly higher computational complexity compared to clustering-based algorithms, as they require operations such as multiplication and inversion of high-dimensional matrices of size \(\RR^{ud \times ud}\) for item recommendation, where \(u\) is the number of users and \(d\) is the feature dimension.
In contrast, clustering-based algorithms (including ours) only involve matrix manipulations of size \(\RR^{d \times d}\).

For the experiments, due to the high computational overhead of GOB.Lin and GraphUCB, we randomly select 20 users (instead of 50 as in other experiments) and divide them into 4 clusters.
At each round \(t\), a user \(i_t\) is uniformly drawn from the 20 users, and 20 items are randomly sampled from the full set of arms to form the arm set \(\mathcal{A}_t\).
For GOB.Lin and GraphUCB, the user graph is constructed by connecting users within the same cluster.

The cumulative regret results are shown in \Cref{fig:regret-more-baselines}.
Both LinUCB-One and LinUCB-Ind exhibit significantly higher cumulative regret than the other algorithms that incorporate user similarity.
Among all the clustering-based algorithms, our algorithms \UniCLUB and \UniSCLUB consistently outperform their respective counterparts, CLUB and SCLUB, across all four datasets, demonstrating the effectiveness of our proposed uniform exploration strategy.
When compared to the new baselines, \UniSCLUB achieves superior performance compared to GOB.Lin and GraphUCB across all four datasets, and \UniCLUB also outperforms GOB.Lin.
Although GraphUCB slightly surpasses \UniCLUB on the Last.fm dataset and achieves competitive results on the synthetic dataset, it incurs higher regret on the Movielens and Yelp datasets compared to \UniCLUB.
These results align with expectations, as our setting assumes that users are clustered, which allows clustering-based methods to explicitly exploit this structure for improved performance.

\begin{figure}[htbp]
  \centering
  \includegraphics[width=\linewidth]{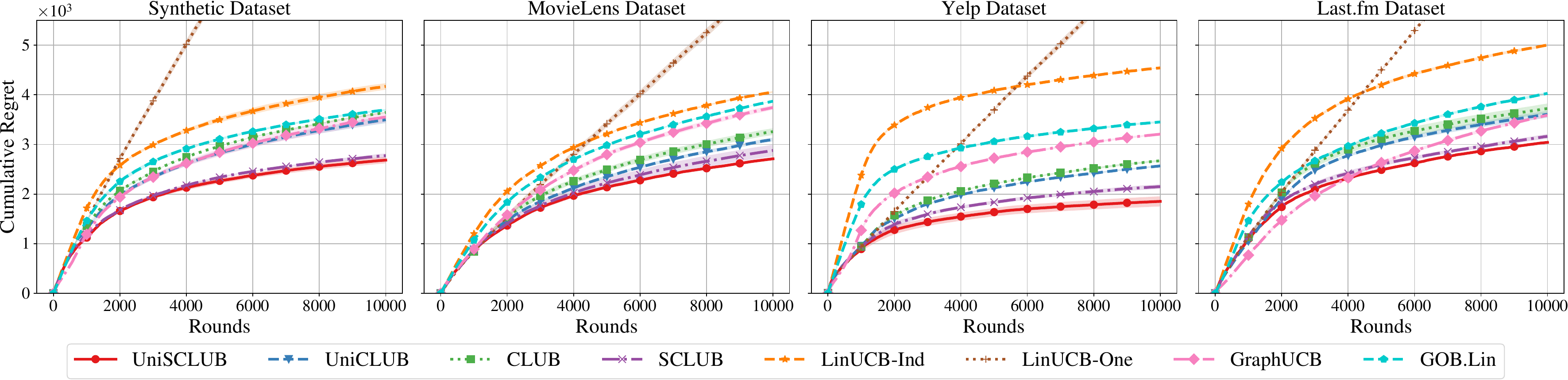}
  \caption{\label{fig:regret-more-baselines} Comparison of cumulative regrets with more baselines.}
\end{figure}

To demonstrate computational complexity, we also measure the average running time of all algorithms across datasets, with the results summarized in \Cref{tab:runtime-comparison}.
As shown in the table, GOB.Lin and GraphUCB exhibit significantly higher average running times, exceeding those of other baselines by more than an order of magnitude.
In contrast, our algorithms, \UniCLUB and \UniSCLUB, achieve computational efficiency comparable to that of the other baseline methods.

\begin{table}[htb]
\centering
\setlength{\extrarowheight}{0pt}
\addtolength{\extrarowheight}{\aboverulesep}
\addtolength{\extrarowheight}{\belowrulesep}
\setlength{\aboverulesep}{0pt}
\setlength{\belowrulesep}{0pt}
\caption{Comparison of average running time (in seconds).}
\label{tab:runtime-comparison}
\resizebox{\linewidth}{!}{%
\begin{tabular}{lrrrrrrrr} 
\toprule
\diagbox{\textbf{Datasets}}{\textbf{Time(s)}}{\textbf{Algorithms}} & \textbf{LinUCB-One} & \textbf{LinUCB-Ind} & \textbf{CLUB} & \textbf{SCLUB} & {\cellcolor[rgb]{0.753,0.753,0.753}}\textbf{UniCLUB} & {\cellcolor[rgb]{0.753,0.753,0.753}}\textbf{UniSCLUB} & \textbf{GraphUCB} & \textbf{GOB.Lin}  \\ 
\midrule
\textbf{Synthetic}                                                 & 2.79                & 2.61                & 3.02          & 7.70           & {\cellcolor[rgb]{0.753,0.753,0.753}}3.02             & {\cellcolor[rgb]{0.753,0.753,0.753}}8.28              & 128.32            & 204.55            \\
\textbf{MovieLens}                                                 & 9.78                & 9.62                & 10.11         & 16.96          & {\cellcolor[rgb]{0.753,0.753,0.753}}9.96             & {\cellcolor[rgb]{0.753,0.753,0.753}}16.80             & 117.25            & 214.96            \\
\textbf{Yelp}                                                      & 9.81                & 9.59                & 10.17         & 15.05          & {\cellcolor[rgb]{0.753,0.753,0.753}}10.19            & {\cellcolor[rgb]{0.753,0.753,0.753}}12.13             & 119.86            & 216.35            \\
\textbf{Last.fm}                                                   & 4.66                & 1.44                & 5.00          & 12.18          & {\cellcolor[rgb]{0.753,0.753,0.753}}5.00             & {\cellcolor[rgb]{0.753,0.753,0.753}}12.42             & 106.04            & 231.25            \\
\bottomrule
\end{tabular}
}
\end{table}

\end{document}